\pdfoutput=1
\documentclass{article}

\newif\iftr
\trtrue

\iftr
    \usepackage[margin=1in]{geometry}
    \usepackage{authblk}
    \usepackage{lmodern}
    \usepackage[sort,numbers]{natbib}
\else
    \usepackage[preprint]{neurips_2026}
\fi

\usepackage[utf8]{inputenc}
\usepackage[T1]{fontenc}
\usepackage{hyperref}
\usepackage{url}
\usepackage{booktabs}
\usepackage{amsfonts}
\usepackage{amsmath,amssymb,amsthm}
\usepackage{thmtools, thm-restate}
\usepackage{nicefrac}
\usepackage{microtype}
\usepackage{xcolor}
\usepackage{tikz}
\usepackage{standalone}
\usetikzlibrary{positioning, shapes.geometric, arrows.meta}
\usepackage{enumitem}
\usepackage[colorinlistoftodos,textsize=small]{todonotes}
\usepackage[capitalise,noabbrev]{cleveref}
\usepackage{makecell}
\usepackage{mdframed}

\newcommand{\bemph}[1]{\textbf{\emph{#1}}}
\newcommand{\R}{\mathbb{R}}
\newcommand{\Z}{\mathbb{Z}}
\newcommand{\E}{\mathbb{E}}

\DeclareMathOperator{\im}{im}
\DeclareMathOperator{\id}{id}
\DeclareMathOperator{\rank}{rank}
\DeclareMathOperator{\tr}{tr}
\DeclareMathOperator{\mat}{mat}
\DeclareMathOperator{\Span}{span}
\newcommand{\Loss}{\mathcal{L}}
\newcommand{\Tt}{\mathcal{T}}
\newcommand{\GL}{\operatorname{GL}}

\theoremstyle{definition}
\newtheorem{definition}{Definition}[section]
\newtheorem{example}[definition]{Example}
\newtheorem{remark}[definition]{Remark}

\theoremstyle{plain}
\newtheorem{theorem}[definition]{Theorem}
\newtheorem{proposition}[definition]{Proposition}
\newtheorem{lemma}[definition]{Lemma}
\newtheorem{corollary}[definition]{Corollary}
\newenvironment{theoremframe}
  {\begin{mdframed}
   \setlength{\topsep}{0pt}
   \setlength{\partopsep}{0pt}}
  {\end{mdframed}}

\usepackage{tcolorbox}

\newcommand{\expandnote}[1]{}
\newcommand{\checkthis}[1]{}
\newcommand{\cutmaybe}[1]{}
\renewcommand{\comment}[1]{}
\usepackage{tikz}
\colorlet{NodeColor}{blue!25}         %
\colorlet{MarkedColor}{red!25}     %
\colorlet{MarkedEdgeColor}{red!60} %

\tikzset{
  vertex/.style={
    circle,
    draw=black,
    fill=NodeColor,
    inner sep=0pt,
    minimum size=18pt,
    line width=0.8pt
  },
  marked vertex/.style={
    vertex,
    fill=MarkedColor
  },
  edge/.style={
    draw=black,
    line width=0.8pt
  },
  marked edge/.style={
    draw=MarkedEdgeColor,
    line width=2pt
  },
  stub/.style={          %
    draw=black,
    line width=0.8pt
  }
}

\usepackage{listings}
\lstdefinestyle{leanstatement}{
  basicstyle=\ttfamily\footnotesize,
  columns=fullflexible,
  keepspaces=true,
  showstringspaces=false,
  breaklines=true,
  frame=single,
  framerule=0.3pt,
  xleftmargin=0.5em,
  xrightmargin=0.5em,
  aboveskip=0.5em,
  belowskip=0.5em,
  literate=
    {θ}{{$\theta$}}1
    {ℝ}{{$\mathbb{R}$}}1
    {→}{{$\to$}}1
    {∧}{{$\land$}}1
    {¬}{{$\neg$}}1
}

\usepackage{graphicx}
\usepackage{hyperref}
\usepackage{tikz}
\usetikzlibrary{calc}

\newif\ifleanmarkers
\leanmarkerstrue

\newcommand{\leanbase}{https://github.com/yangdabei/ttn-loss-landscape/blob/d42cf6052a49c13a73eefbce2acb4808b42f5e20}

\newcounter{leanstatementmark}

\newcommand{\leanstatementlink}[2]{%
  \ifleanmarkers
    \stepcounter{leanstatementmark}%
    \begingroup
    \edef\currentleanstatementmark{\arabic{leanstatementmark}}%
    \tikz[remember picture,overlay,baseline=0pt]{%
      \coordinate (lean-mark-\currentleanstatementmark) at (0,0);
      \node[
        anchor=center,
        inner sep=0pt,
        yshift=0.45ex
      ] at (
        [xshift=-0.50in]current page.east
        |- lean-mark-\currentleanstatementmark
      )
      {\href{\leanbase/#1\#L#2}{%
        \makebox[0.40in][c]{%
          \includegraphics[
            width=0.31in,
            trim=246bp 148bp 245bp 149bp,
            clip
          ]{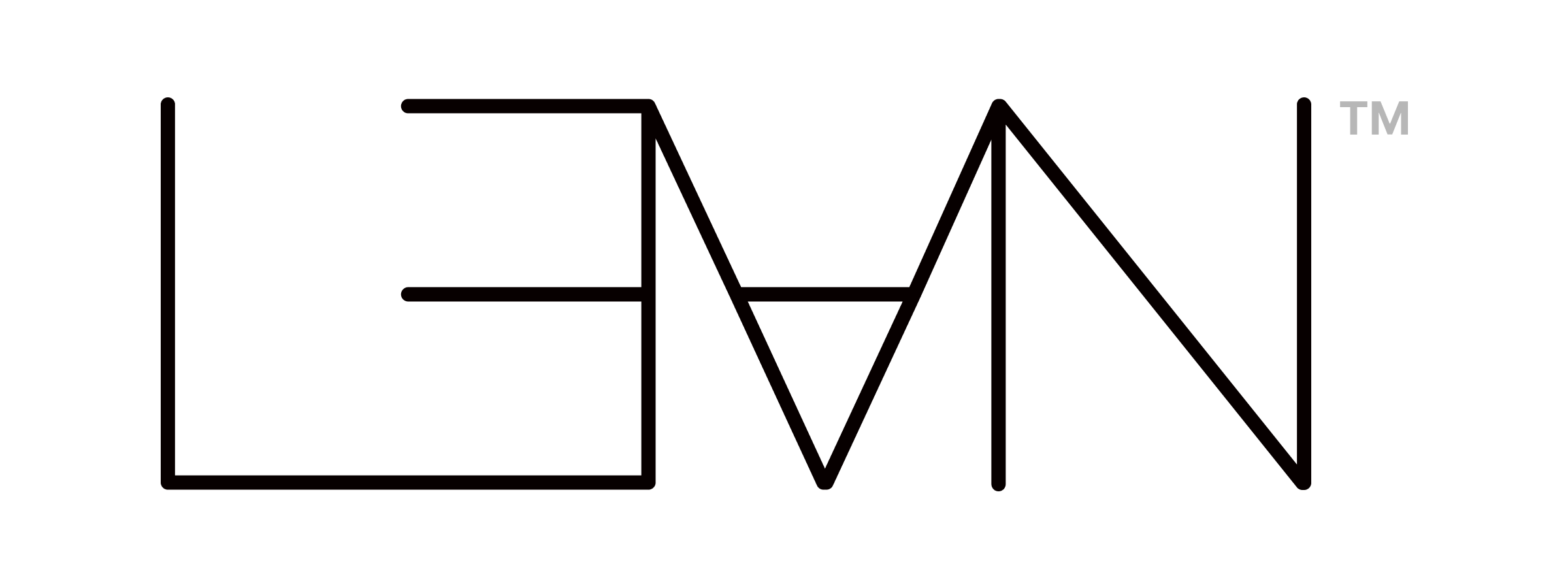}%
        }%
      }};
    }%
    \endgroup
  \fi
}

\title{Benign Loss Landscapes Can Coexist with Worst-Case Hardness}

\iftr
    \newcommand{\AuthorBlock}[3]{%
    \begin{minipage}[t]{0.30\textwidth}
      \centering
      \textbf{#1}\\\vspace{-8pt}
      #2\\\vspace{3pt}
      \normalsize\texttt{#3}
    \end{minipage}%
  }

  \author{%
    \makebox[\textwidth][c]{%
      \AuthorBlock
        {Zach Furman}
        {University of Melbourne}
        {zach.furman1@gmail.com}%
      \hfill
      \AuthorBlock
        {Stephan Wäldchen}
        {Iliad}
        {st.wald@protonmail.ch}%
      \hfill
      \AuthorBlock
        {Yangda Bei}
        {Iliad}
        {yangdabei3426@gmail.com}%
      \hfill
      \AuthorBlock
        {Liam Hodgkinson}
        {University of Melbourne}
        {lhodgkinson@unimelb.edu.au}%
    }%
  }

  \date{}
\else
    \author{%
    Zach Furman\\
    University of Melbourne, Iliad\\
    \texttt{zach.furman1@gmail.com}
    \And
    Stephan Wäldchen\\
    Iliad\\
    \texttt{st.wald@protonmail.ch}
    \And
    Yangda Bei\\
    Iliad\\
    \texttt{yangdabei3426@gmail.com}
    \And
    Liam Hodgkinson\\
    University of Melbourne\\
    \texttt{lhodgkinson@unimelb.edu.au}
    }
\fi
\date{}

\begin{document}

\maketitle

\begin{abstract}

Deep neural networks are expressive enough to contain worst-case targets that can be \emph{evaluated} in polynomial time but cannot be \emph{learned} in polynomial time by gradient descent. For practical tasks they nonetheless learn well, raising the question of what non-generic structure of real-world targets enables this. Existing surrogate models cannot pose this question because they either lack hard-to-learn targets entirely (deep linear networks) or cannot evaluate such targets efficiently (kernel methods, infinite-width limits). We study tree tensor networks (TTNs), a model class that generalizes deep linear networks and Tucker decompositions. We show they embed arbitrary read-once Boolean formulas, and thus contain polynomial-size targets that cannot be learned by gradient descent in polynomial time under the same mechanism as neural networks. Despite this, we prove that their loss landscapes are conditionally benign for every realizable target: every local minimum that is \textit{minimum-norm} is global. Thus, surprisingly, bad local minima are not what distinguishes between typical and worst-case problems in TTNs. Instead, learning difficulty in TTNs can arise from high-order degenerate saddle points, which we show are caused by rank-deficiency. This is explored through a case study of the parity function, illustrating the potential for TTNs to relate landscape geometry to computational hardness.

\end{abstract}

\section{Introduction}
\label{sec:intro}

One of the core mysteries of deep learning theory is why gradient-based optimizers reliably find high-quality minima, even when navigating such a highly non-convex loss landscape. This phenomenon has led to the hypothesis that these landscapes are \emph{benign}, in the sense that all local minima are global, or at least that non-increasing escape paths exist for any suboptimal minima. Such results have been proven for models like deep linear networks \citep{baldi1989neural, kawaguchi2016deep, lu2017depth} or very wide networks \citep{venturi2018spurious,nguyen2019connected,petzka2021non}, and are implicit in approximate models with convex losses like the neural tangent kernel \citep{jacot2018neural}. However, it remains unclear whether these ``toy models'' actually reflect the real difficulty of neural network training, or merely achieve benignity by assuming away the core difficulties of the deep neural network loss landscape.

More pointedly, any account of why training succeeds must contend with the following fact: there exist targets like the \textit{parity function} which neural networks can express perfectly, but cannot efficiently learn via gradient descent \citep{shalev2017failures, shamir2018distribution}.
Given the empirical success of deep learning, such worst-case results are clearly unrealistic. But they challenge any simplistic account of why neural networks learn efficiently, as any explanation which works for arbitrary targets is insufficient. Somehow, deep learning must leverage some property of practical targets, not shared by worst-case targets, that allows them to learn in polynomial time.
Consequently, existing benign landscape results, and existing surrogate models more broadly, do not preserve the phenomenon we want to study. Doing so requires two properties of the surrogate: (A) its hypothesis class must contain worst-case targets which cannot be learned in polynomial time by gradient descent, like the parity function; and (B) hypotheses in the class must themselves be evaluable in polynomial time; otherwise the learning difficulty is trivial and unsurprising.

\begin{figure}[t]
\centering
\begin{minipage}[c]{0.42\linewidth}
\centering
\begin{tikzpicture}[
    scale=0.85, transform shape,
    tensor/.style={circle, draw, fill=blue!25, minimum size=6mm,
                   inner sep=0pt, thick},
    leg/.style={thick},
    bond/.style={thick},
    title/.style={font=\small\bfseries}
]
\foreach \i [evaluate=\i as \y using int(-\i+1)] in {1,...,4} {
    \node[tensor] (d\i) at (0, \y) {};
}
\foreach \i [evaluate=\i as \j using int(\i+1)] in {1,...,3} {
    \draw[bond] (d\i) -- (d\j);
}
\draw[leg] (d1) -- ++(0, 0.8);
\draw[leg] (d4) -- ++(0, -0.8);
\node[title] at (0, -4.4) {DLN};
 
\draw[-{Stealth[length=3mm, width=2.5mm]}, very thick]
    (1.0, -1.5) -- (3.0, -1.5)
    node[midway, above, font=\scriptsize\itshape] {generalizes to};
 
\node[tensor] (r)  at (5.0,  0)    {};
\node[tensor] (m1) at (4.2, -1.2)  {};
\node[tensor] (m2) at (5.8, -1.2)  {};
\draw[bond] (r) -- (m1);
\draw[bond] (r) -- (m2);
\node[tensor] (l1) at (3.8, -2.4) {};
\node[tensor] (l2) at (4.6, -2.4) {};
\node[tensor] (l3) at (5.4, -2.4) {};
\node[tensor] (l4) at (6.2, -2.4) {};
\draw[bond] (m1) -- (l1);
\draw[bond] (m1) -- (l2);
\draw[bond] (m2) -- (l3);
\draw[bond] (m2) -- (l4);
\foreach \i in {1,...,4} {
    \draw[leg] (l\i) -- ++(0, -0.8);
}
\node[title] at (5.0, -4.4) {TTN};
\end{tikzpicture}
\end{minipage}%
\hfill
\begin{minipage}[c]{0.56\linewidth}
\centering
\footnotesize
\setlength{\tabcolsep}{4pt}
\renewcommand{\arraystretch}{1.2}
\begin{tabular}{@{}lccc@{}}
\toprule
& \makecell{Benign loss\\landscape}
& \makecell{Can express\\hard functions}
& \makecell{Can evaluate\\hard functions\\in poly time} \\
\midrule
Typical neural network        & empirically & \checkmark & \checkmark \\
Deep linear network           & provably    & $\times$  & -  \\
Infinite-width limits         & provably    & \checkmark  & $\times$ \\
Kernel approximations         & provably    & \checkmark  & $\times$ \\
\midrule
Tree tensor network           & provably    & \checkmark & \checkmark \\
\bottomrule
\end{tabular}
\end{minipage}

\vspace{1em}
\vspace{0.1em}

\noindent
\begin{minipage}[b]{0.48\linewidth}
  \centering
  \includegraphics[width=\linewidth]{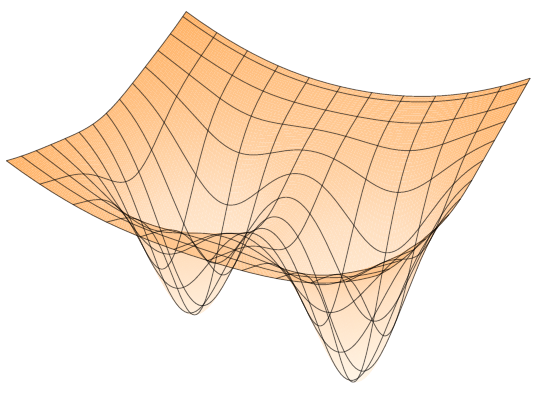}\\[0.3em]
  \textit{Not benign}
\end{minipage}\hfill
\begin{minipage}[b]{0.48\linewidth}
  \centering
  \includegraphics[width=\linewidth]{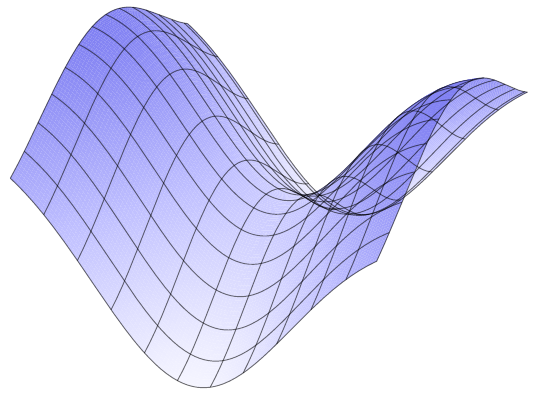}\\[0.3em]
  \textit{Benign}
\end{minipage}

\caption{\textbf{Top:} a deep linear network (DLN) is a chain of tensors; a tree tensor
network (TTN) generalizes the DLN to a tree topology (left). Unlike the DLN, the TTN
captures nonlinear features and functions which are provably hard-to-learn while retaining a provably benign loss landscape (right). \textbf{Bottom:} our main results, \cref{thm:no-spurious} and \cref{thm:no-spurious-local-min} prove that under some conditions, the only critical points of the loss other than global minima are saddle points (right); there are no local minima which are not global (left).}
\label{fig:dln-vs-ttn}
\end{figure}

Every commonly-studied surrogate we are aware of fails one of these properties (A) or (B). Deep linear networks parameterize only linear functions and contain no hard-to-learn targets, failing (A). Infinite-width limits (without closed-form kernels) cannot be analytically computed in the first place, failing (B). Kernel methods (random feature models, neural tangent kernels, etc.) must fail either the first or the second condition because a kernel hypothesis $\hat f(x) = \sum_i a_i k(x, x_i)$ takes the same resources to evaluate as to learn, so it cannot be polynomial-time evaluable and super-polynomial to learn.

In this paper, we study \bemph{tree tensor networks} (TTNs) and demonstrate that they pass both properties (A) and (B). A TTN is defined by a tree whose nodes carry multilinear maps (tensors), with the model's output obtained by composing maps along edges (\cref{fig:dln-vs-ttn}). TTNs are an immediate nonlinear generalization of deep linear networks (DLNs) and Tucker decomposition. Most importantly, TTNs satisfy our two desiderata for surrogate models: they can express any read-once Boolean formula, including worst-case targets like the parity function (A), and any TTN can be evaluated in time polynomial in the tree size and bond dimension (B).
TTNs therefore contain targets that cannot be learned in polynomial time by gradient descent (\cref{sec:setup}).

Our contribution is to leverage the analytic tractability of TTNs to understand how worst-case difficulty manifests in their loss landscapes. We show that, surprisingly, even for worst-case targets, their loss landscapes are conditionally benign, so all minimum-norm local minima are global minima (\cref{thm:no-spurious-local-min}).
Even without the minimum-norm condition, we prove that from any non-optimal parameter point there exists a continuous non-increasing loss path to strictly lower loss, that is, there are no ``bad valleys'' (\cref{cor:escape-paths}).

Yet, if the loss landscapes of such a model are benign, how is that compatible with worst-case hardness? We suggest that instead such difficulty can come from high-order degenerate saddle points, as suggested by \citet{dauphin2014identifying, sankar2017saddles}. We characterize these saddle points, proving that any critical points which are not global minima must occur at Tucker-rank-deficient points, where the model is ``effectively smaller'' than its maximum capacity (\cref{thm:no-spurious}). The parity function provides a concrete case study of a known hard-to-learn function to illustrate this phenomenon, proving that the order at which a descent path can escape the saddle point around the ``halfway learned'' solution must be lower bounded by a linearly increasing function of the number of inputs $n$ (making the loss function exponentially flat in $n$).

We hope that this paper will help establish TTNs as a new surrogate model for deep learning training dynamics. Because TTNs express computationally nontrivial functions, they provide the first analytically tractable model class designed to systematically study how computational structure of the target shapes the geometry of the optimization landscape.

\textbf{Contributions.} Our precise contributions are thus:\vspace{.1cm}
\begin{itemize}[leftmargin=*,nosep]
\item \textbf{We prove TTNs lack bad local minima,} under a \textit{minimum-norm} assumption. Despite expressing targets which cannot be learned efficiently by gradient descent (\cref{cor:ttn-hardness}), we show that for any realizable target, every minimum-norm local minimum is globally optimal (\cref{thm:no-spurious-local-min}), and all other parameters admit non-increasing escape paths (\cref{cor:escape-paths}).
\item \textbf{We show rank-deficiency instead creates bad saddle points.} In particular, we prove that all non-optimal critical points are Tucker-rank-deficient (\cref{thm:no-spurious}). Through a case study involving the parity function (\cref{prop:parity-order}), we show that these saddles can be highly degenerate with the potential to slow optimization. 
\item \textbf{We connect this geometry to broader theory.} As an auxiliary technical contribution, we show that our minimum-norm condition coincides with the \textit{Kempf-Ness condition} from geometric invariant theory, generalizes a well-known \textit{balancedness} condition in deep linear networks, and is preserved by gradient flow (\cref{app:minimum-norm}).
\end{itemize}\vspace{.1cm}

We verify the two main theorems (\cref{thm:no-spurious-local-min}, \cref{thm:no-spurious}) using the \textsc{Lean} proof assistant \citep{mouraLean4Theorem2021,LeanMathematicalLibrary2020}: links to the corresponding \textsc{Lean} code are provided in the right margin, and more details are given in \cref{app:formalisation}.

Ultimately, we establish the following picture of the TTN loss landscape, to inform the broader question of how training behavior depends on the function being learned: 

\begin{tcolorbox}
For regularized solutions, there are no spurious local minima. However, challenging saddle points can still occur when the model is low rank. These can impede convergence to worst-case targets, making them hard to learn. 
\end{tcolorbox}

\section{Related Work}
\label{sec:related}

\paragraph{Tensor networks}

Tensor networks originated in quantum state and many-body physics as a memory-efficient approximation to high-dimensional multi-particle states with path-graph~\cite{white1992density,ostlund1995thermodynamic,schollwock2011density,vidal2003efficient}, lattice~\cite{verstraete2004renormalization,cirac2021matrix}, and tree topology~\cite{shi2006classical}.
In machine learning, tensor networks have been employed both for supervised~\cite{stoudenmire2016supervised} and unsupervised learning~\cite{han2018unsupervised,miller2021tensor} (in both cases later extended to tree topologies~\citep{stoudenmire2018learning,cheng2019tree}) leveraging their ability to efficiently represent high-dimensional feature tensors or multi-feature interactions~\cite{novikov2016exponential}. Interestingly, \citet{pearce2024bilinear} and \citet{dooms2025compositionality} show that neural networks with bilinear activation function are equivalent to tree tensor networks with a copying operation, which provides a natural direction to extend our results in future work.

\paragraph{Benign loss landscapes.}

\citet{baldi1989neural} proved that every local minimum of the square loss is global for a two-layer linear network. \citet{kawaguchi2016deep} extended it to arbitrary depth and widths, and further showed that every
critical point which is not a global minimum is a saddle. Later work extended and improved upon these results in deep linear networks \citep{lu2017depth,yun2017global, zhou2017critical,achour2024loss,trager2019pure,mehta2021loss,bah2022learning,arora2018convergence}. Beyond DLNs, \citet{frandsen2022optimization} show a similar lack of local minima for Tucker decomposition, though requiring additional assumptions such as regularization and realizability. Tree tensor networks generalize both Tucker decomposition and deep linear networks; we prove conditional benign loss landscape results for arbitrary tree tensor networks (\cref{thm:no-spurious-local-min}).

It is known that some nonlinear neural networks \emph{can} have spurious local minima \cite{ding2019spurious,swirszcz2016local,safran2018spurious,christof2024omnipresence}; beyond the technical fact that such results do not apply to TTNs, our results do not contradict this even in spirit as they are conditional (see \cref{subsec:tightness}), instead showing that such local minima are not \textit{inherent} to nonlinear networks and can be avoided in a specific regime. For many types of neural networks, one can prove a weaker ``no bad valleys'' condition \cite{venturi2018spurious,nguyen2019connected,nguyen2018loss,petzka2021non}, similar to our \cref{cor:escape-paths}, but only under extreme overparameterization.

\paragraph{Learning computationally hard targets.}

\citet{kearns1998efficient} showed that parities are not learnable in polynomial time by statistical
query algorithms; this applies to full-batch gradient descent, as it can be formulated as a statistical query algorithm \citep{feldman2021statistical, goel2020superpolynomial, shalev2017failures}. \citet{abbe2021power} showed that such results transfer to minibatch SGD, under limits on gradient precision and minibatch size.
A more recent line of work characterizes how such hard targets affect gradient-based learning dynamics. \citet{barak2022hidden} use the problem of sparse parity learning to study the learning dynamics of emergent capabilities. \citet{malach2019learning} study gradient-based learning when the target is a
\emph{tree-structured Boolean circuit}, and explore what differentiates parity targets from learnable targets (a \emph{local correlation assumption}).  \citet{abbe2022merged} and \citet{abbe2023sgd} explore a similar question for Boolean functions, and relate computational properties (\emph{leap complexity}) of the target to saddle-to-saddle learning dynamics. However, while these results explicitly predict gradient learning dynamics, they apply only for particular families of targets; our results apply for every realizable target, including worst-case ones.

\paragraph{Degenerate saddle points.}
For neural networks, it has been argued that the primary obstructions to learning is saddle points rather than local minima~\cite{choromanska2015loss}, and specifically \emph{degenerate} saddle points ~\cite{dauphin2014identifying,sankar2017saddles}.
 That these saddles determine gradient dynamics for small initializations has been shown for both deep linear~\cite{saxe2013exact,arora2019implicit,chou2024gradient,jacot2021saddle,pesme2023saddle} and neural networks~\cite{zhang2026saddle}.
Guarantees for escaping saddle points in polynomial time apply only to \textit{strict} saddles, i.e., those which have a strictly negative curvature direction~\cite{ge2015escaping,jin2017escape,daneshmand2018escaping}.
\citet{anandkumar2016efficient} show that without strictness, finding a fourth-order local minimum\footnote{A critical
point of a function $f$ is a \emph{$p$-th order local minimum} when $f(y) - f(x) \geq - C\|x-y\|^{p+1}$ near $x$.} can be NP-hard even for well-behaved functions. Unlike deep linear networks, where saddle order depends primarily on model depth~\cite{jacot2021saddle}, we are able to explicitly link a hard-to-learn target, the parity function with $n$ inputs, to saddles of order at least $n/2$ (\cref{prop:parity-order}). We conjecture that a similar mechanism may occur for hard targets more generally.

\section{Tree Tensor Networks}
\label{sec:setup}

The theoretical study of deep learning dynamics is most commonly conducted using simple \emph{toy models} retaining some essential feature(s) of the problem while discarding extraneous details. Among the most popular of these is the \emph{deep linear network (DLN)}, defined for a depth $L$ as a composition of $L$ linear maps $W_i : \mathbb{R}^{n_{i-1}} \to \mathbb{R}^{n_i}$:
\begin{equation}
\tag{DLN}
    T = W_L W_{L-1} \cdots W_1.
\end{equation}
Naturally, a DLN is itself a linear map. However, as a product of weight matrices $W_i \in \mathbb{R}^{n_i \times n_{i-1}}$, a DLN is nonlinear and nonconvex in all of its weights, and any loss function involving a DLN exhibits the same multiplicative
interactions between layers that are characteristic of deep neural networks. Consequently, DLNs preserve the nonlinear dynamics characteristic of deep learning, while enjoying an easily interpretable, and theoretically tractable, relationship over its inputs. 
This isolation has been productive. It is known that every local minimum is a global minimum, and
rank-deficient critical points are saddle points \citep{kawaguchi2016deep,
yun2017global}. The gradient flow admits exact solutions \citep{saxe2013exact},
and its dynamics decompose into the independent evolution of singular modes,
exhibiting stagewise learning of components from largest to smallest. Implicit
regularization toward low-rank solutions has been characterized \citep{gunasekar2017implicit, arora2019implicit, li2020towards}, and much of these phenomena appear to occur analogously in more standard nonlinear networks \citep{kalimeris2019sgd, huh2023low, zhang2026saddle}.

Despite their success, DLNs exhibit fundamental properties that limit their mimicry of deep learning dynamics. The intermediate nonlinearities of neural networks allow them to build complex, hierarchical logic, where earlier computations gradually assemble to make later mechanisms useful. This is central to modern accounts of how deep networks learn \citep{olah2020zoom, abbe2021staircase, abbe2023sgd}. In contrast, the linearity of DLNs enforces that each component of the target is learned independently of any other. Because their end-to-end map is linear and their gradient flow is structurally decoupled, DLNs can neither represent this hierarchical logic as a function nor exhibit it dynamically during training.

\emph{Tree tensor networks (TTNs)} extend the DLN strategy to a class of models which
support compositional structure while preserving the algebraic features
that make DLN analysis tractable. The construction relaxes the DLN in two key ways: (i) each matrix is replaced by a higher-order tensor; and (ii) the chain structure is generalized to a tree. A matrix
represents a linear map between two vector spaces, while a tensor with $k$ indices
represents a $k$-linear map, separately linear in each input and \textit{nonlinear}
jointly. The product $(x,y) \mapsto xy$ is the simplest example: linear in $x$
for fixed $y$, linear in $y$ for fixed $x$, but \textit{not linear} as a function of
the pair. Furthermore, a matrix represents
a map with one input and one output, so composing matrices forces a chain topology, where each matrix takes its predecessor's output as its input. Multilinear maps have several inputs, and can therefore receive them from several distinct sources at once. This enables branching, with trees comprising the
simplest branching topology. For examples of tree tensor networks, including deep linear networks and Tucker decomposition, see \cref{fig:setup-examples}.

\begin{figure}[t]
\centering
\begin{tikzpicture}[
    scale=0.9, transform shape,
    tensor/.style={circle, draw, fill=blue!25, minimum size=9mm,
                   inner sep=1pt, thick, font=\small},
    leg/.style={thick},
    bond/.style={thick},
    title/.style={font=\small\bfseries},
    formula/.style={font=\scriptsize},
    leglabel/.style={font=\scriptsize}
]

\begin{scope}[xshift=-5.3cm]
    \node[title]   at (0,  2.5) {Chain (DLN)};
    \node[tensor] (wL) at (0,  1.0) {$W_L$};
    \node[tensor] (w2) at (0, -0.4) {$W_2$};
    \node[tensor] (w1) at (0, -1.4) {$W_1$};
    \draw[bond, dashed] (wL) -- (w2);
    \draw[bond] (w2) -- (w1);
    \draw[leg] (wL) -- ++(0,  0.6);
    \draw[leg] (w1) -- ++(0, -0.6) node[leglabel, below] {$x$};
    \node[formula] at (0, -2.8) {$T(x) = W_L(\cdots W_2(W_1(x)))$};
\end{scope}

\begin{scope}[xshift=0cm]
    \node[title]  at (0,     2.5) {Star (Tucker)};
    \node[tensor] (W1) at (0,     1.4) {$W_1$};
    \node[tensor] (W2) at (-1.3, -0.7) {$W_2$};
    \node[tensor] (W3) at (0,    -0.7) {$W_3$};
    \node[tensor] (W4) at (1.3,  -0.7) {$W_4$};
    \draw[bond] (W1) -- (W2);
    \draw[bond] (W1) -- (W3);
    \draw[bond] (W1) -- (W4);
    \draw[leg] (W2) -- ++(0, -1.3) node[leglabel, below] {$x_1$};
    \draw[leg] (W3) -- ++(0, -1.3) node[leglabel, below] {$x_2$};
    \draw[leg] (W4) -- ++(0, -1.3) node[leglabel, below] {$x_3$};
    \node[formula] at (0, -2.8) {$T(x_1,x_2,x_3) =$};
    \node[formula] at (0, -3.2) {$W_1\bigl(W_2(x_1),\, W_3(x_2),\, W_4(x_3)\bigr)$};
\end{scope}

\begin{scope}[xshift=5.3cm]
    \node[title]  at ( 0.0,  2.5) {General tree};
    \node[tensor] (W1) at ( 0.25,  1.4) {$W_1$};
    \node[tensor] (W2) at (-1.25,  0.6) {$W_2$};
    \node[tensor] (W3) at (-1.75, -0.7) {$W_3$};
    \node[tensor] (W4) at (-0.75, -0.7) {$W_4$};
    \node[tensor] (W5) at ( 0.75, -0.7) {$W_5$};
    \node[tensor] (W6) at ( 1.75, -0.7) {$W_6$};
    \draw[bond] (W1) -- (W2);
    \draw[bond] (W1) -- (W5);
    \draw[bond] (W1) -- (W6);
    \draw[bond] (W2) -- (W3);
    \draw[bond] (W2) -- (W4);
    \draw[leg] (W3) -- ++(0, -1.3) node[leglabel, below] {$x_1$};
    \draw[leg] (W4) -- ++(0, -1.3) node[leglabel, below] {$x_2$};
    \draw[leg] (W5) -- ++(0, -1.3) node[leglabel, below] {$x_3$};
    \draw[leg] (W6) -- ++(0, -1.3) node[leglabel, below] {$x_4$};
    \node[formula] at (0, -2.8) {$T(x_1,x_2,x_3,x_4) =$};
    \node[formula] at (0, -3.2) {$W_1\bigl(W_2(W_3(x_1), W_4(x_2)),$};
    \node[formula] at (0, -3.6) {$W_5(x_3),\, W_6(x_4)\bigr)$};
\end{scope}
\end{tikzpicture}

\caption{Examples of tree tensor networks, with every node a trainable weight tensor $W_v$. A path graph (left) recovers the deep linear network. A star graph (center) recovers the Tucker decomposition: leaf maps $W_2, W_3, W_4$ act on inputs and feed their outputs into a multilinear core $W_1$. General trees (right) are hierarchical, combining the outputs of disjoint subtrees through a hierarchy of internal multilinear maps. The formula under each panel gives the explicit nested expression for that topology.}
\label{fig:setup-examples}
\end{figure}

\begin{definition}[Tree tensor network]
\label{def:ttn}
A \emph{tree tensor network} consists of:
\begin{itemize}[leftmargin=*]
    \item a finite set $V$ of \emph{nodes} and a finite set $E$ of \emph{edges}, where each $e \in E$ has either one or two endpoints in $V$, such that the subnetwork of two-ended edges is a directed tree on $V$, with edges in $E$ oriented towards the root;
    \item a dimension $d_e \in \Z_{>0}$ for each edge $e \in E$; and
    \item a trainable multilinear map
    \(
        W_v:
        \prod_{e\in \operatorname{in}(v)}
        \mathbb R^{d_e}
        \longrightarrow
        \mathbb R^{d_{\operatorname{out}(v)}}
    \)
    at each node \(v\in V\), where \(\operatorname{in}(v)\) is the set of edges oriented into \(v\), and \(\operatorname{out}(v)\) is the unique outgoing edge from \(v\) to its parent (with the convention \(d_{\operatorname{out}(v)}=1\) if \(v\) is the root).
\end{itemize}
Edges with two endpoints are \emph{internal} (or \emph{bonds}, with \emph{bond dimension} $r_e := d_e$) and edges with one endpoint \emph{external}.
To parameterize the model, we may choose bases to identify each $W_v$ with a real tensor, as multilinear maps form a tensor product space.
We denote $\theta = \{W_v\}_{v \in V}$ for the collection of all node maps and $\Theta$ for the parameter space. Given vectors on the external edges, the \emph{represented tensor} \(T(\theta)\) is a multilinear map obtained by function composition: composing the maps \(W_v\) from the leaves to the root according to the directed graph structure. By multilinearity, in coordinates this composition is given by tensor contraction along bonds, and it is independent of the choice of root (\cref{app:multilinear-background}).
\end{definition}

We study learning an unknown target tensor $T^*$ from random samples of its entries.\footnote{Note that our choice here is not unique: we could choose a different empirical learning problem with the same population loss. We could instead choose to consider some external modes as outputs rather than all being inputs, add label noise, sample the inputs from a different distribution, etc; our choice here is purely for concreteness.} That is, let $\mathcal{X} = [d_1] \times \cdots \times [d_n]$ index the entries of $T^*$, and sample inputs $x \in \mathcal{X}$ uniformly from $\mathcal{X}$, with labels $y \in \R$ generated via evaluating the tensor contraction $y = T^*(x)$.
Define the per-sample loss:
\[
\ell(x, y; \theta) = \frac{1}{2}(y - T(\theta)(x))^2,
\]

Our results concern the population average loss\footnote{See \cref{app:empirical-population-gap} for discussion on what these results imply for empirical learning with SGD.}
\begin{equation}
\label{eq:pop_loss_main}
\Loss(\theta) = \E_{x,y}[\ell(x,y;\theta)]= \frac{1}{2|\mathcal{X}|}\|T(\theta) - T^*\|_F^2.%
\end{equation} %
The main loss-landscape results assume that the target is
\emph{realizable} by the TTN architecture, that is, there exists at least one parameter point
\(\theta^* \in \Theta\) such that $T(\theta^*) = T^*$ (equivalently, \(T^*\) lies in the image of the TTN parameterization). For TTNs, this condition can also be characterized directly as a type of rank condition on $T^*$ (see \cref{app:setup}).

\paragraph{Tree tensor networks can perform computation.}
\label{subsec:expressivity}

Unlike deep linear networks, TTNs are \emph{not} functionally linear, and in fact can perform basic computation. Consider any read-once Boolean formula $F$: a tree of $s$ unary and binary gates in which each input variable appears at most once. We will implement $F$ within a TTN with $s$ nodes and bond dimension $2$ (\cref{fig:bool-to-ttn}). First, encode the Boolean values $0$ and $1$ by the standard basis vectors $e_0=(1,0)^{\top}$ and $e_1=(0,1)^{\top}$ in $\R^2$. Associate a multilinear node map $W_g$ to each gate $g$.
For a binary gate $g$, define its bilinear map $W_g$ by specifying its action on basis vectors:
\[
    W_g(e_a,e_b)=e_{g(a,b)},
    \qquad\text{equivalently}\qquad
    (W_g)_{a,b,c}=\mathbf{1}\{c=g(a,b)\},
    \quad a,b,c\in\{0,1\}.
\]
For example, $W_{\mathrm{AND}}(e_1,e_1)=e_1$, while the other three basis input pairs yield $e_0$. An analogous procedure applies to unary gates; for instance, NOT is represented by the linear map $W_{\mathrm{NOT}}e_a=e_{1-a}$.

Then, connect the gates together: whenever the output of a gate $h$ is an argument of $g$, connect the output of $W_h$ to the corresponding input of $W_g$ by a dimension-$2$ bond; evaluating the network substitutes $W_h$'s output vector into that argument of $W_g$. An argument which is a formula variable $x_i$ instead becomes an external input edge receiving $e_{x_i}$.  One may verify that the overall output of the network on $k$ inputs $e_{x_1}, \ldots e_{x_k}$ is then given by $e_{F(x_1, \ldots, x_k)}$.\footnote{Strictly speaking, for simplicity, \cref{def:ttn} is formulated in such a way that a TTN must have a scalar output. If one wishes to avoid modifying \cref{def:ttn}, this can be achieved (albeit unaesthetically) by setting the root node to be $W_g(e_a,e_b,e_c)=e_{g(a,b)} \cdot e_c$, where the ``output'' $e_c$ is instead treated mathematically as an input to the TTN.}
For example, for the formula in \cref{fig:bool-to-ttn}, the resulting composition is
\(
    W_{\mathrm{OR}}\bigl(W_{\mathrm{AND}}(e_{x_1},e_{x_2}),
    W_{\mathrm{AND}}(e_{x_3},e_{x_4})\bigr)
    = e_{(x_1\wedge x_2)\vee(x_3\wedge x_4)}.
\)

\paragraph{Tree tensor networks are worst-case hard to learn.}

Perhaps surprisingly, the mere fact that TTNs can perform computation
directly implies learning difficulty for gradient-based methods; and
for the exact same reason as it does in real neural networks. Following the standard modeling of gradient descent as a statistical query algorithm \citep[cf.][]{shamir2018distribution,
goel2020superpolynomial}, we say that a concept class is
\emph{learnable by gradient descent in polynomially many steps} if it
is efficiently learnable from statistical queries in the sense of
\citet{kearns1998efficient}, with each query restricted to an
expected gradient (or value) of a bounded per-sample loss of a
polynomial-size differentiable model (for example, the gradient of our loss function $\Loss$).\footnote{Note that for ease of argument, this definition includes full-batch gradient descent, but not stochastic gradient methods. Nevertheless problems which are hard for SQ learners are also provably hard for methods like SGD at moderate batch size \citep{abbe2021power}.} Merely expressing parity functions, for instance, implies such hardness, as per Theorem \ref{thm:sq-parity}.

\begin{theorem}[Theorem 5, \citet{kearns1998efficient}]
\label{thm:sq-parity}
Let $\mathcal{F}_n$ denote the class of all parity functions over $n$ Boolean variables (where each concept is the parity of
some unknown subset of the Boolean variables $x_1, \ldots x_n$), and let $\mathcal{F} = \bigcup_{n \geq 1} \mathcal{F}_n$. Then $\mathcal{F}$ is not
efficiently learnable from statistical queries.
\end{theorem}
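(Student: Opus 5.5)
The statement is the classical result of \citet{kearns1998efficient}, so the plan is to reproduce the standard statistical-query dimension argument under the uniform distribution on $\{0,1\}^n$. Write each parity as a $\pm1$-valued character $\chi_S(x) = (-1)^{\sum_{i\in S} x_i}$ for $S \subseteq [n]$. The first step is the orthogonality relation $\E_x[\chi_S(x)\chi_{S'}(x)] = \mathbf{1}\{S = S'\}$. This holds because $\chi_S\chi_{S'} = \chi_{S\triangle S'}$, and any nonempty character has mean zero. It gives $2^n$ pairwise orthogonal concepts in $\mathcal{F}_n$.

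The second step decomposes an arbitrary statistical query $\phi:\{0,1\}^n\times\{-1,1\}\to[-1,1]$. Write
\[
\phi(x,y) = \tfrac{1+y}{2}\phi(x,1) + \tfrac{1-y}{2}\phi(x,-1).
\]
Substituting $y = \chi_S(x)$ yields
\[
\E_x[\phi(x,\chi_S(x))] = C_\phi + \E_x[g_\phi(x)\chi_S(x)],
\]
where the constant $C_\phi$ and the function $g_\phi = \tfrac12(\phi(\cdot,1)-\phi(\cdot,-1))$ do not depend on $S$. The correlation $\E_x[g_\phi\chi_S]$ is just the Fourier coefficient $\hat g_\phi(S)$. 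Parseval gives $\sum_S \hat g_\phi(S)^2 \le \|g_\phi\|^2 \le 1$. Hence for any tolerance $\tau$, at most $1/\tau^2$ sets $S$ satisfy $|\hat g_\phi(S)| \ge \tau$. Every other target receives the answer $C_\phi$, up to tolerance.

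The third step is an adversary argument. The adversary answers every query with $C_\phi$, which is a valid answer for all but $1/\tau^2$ of the targets. After $q$ queries, at least $2^n - q/\tau^2$ parities remain consistent with every answer. Any two distinct parities disagree on exactly half the inputs. So if $q$ is polynomial in $n$ and $1/\tau$ is polynomial as well, the learner cannot output a hypothesis that has error below $1/4$ for all the remaining parities. More precisely, a single hypothesis $h$ can be $\epsilon$-close to only a small number of orthogonal $\chi_S$, again by a Parseval-type bound on $\sum_S \E[h\chi_S]^2$. Then $2^n - \mathrm{poly}(n) \gg \mathrm{poly}(n)$ gives the contradiction, and the same holds for randomized learners by averaging over a uniformly random target.

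The main obstacle I expect is the bookkeeping of the definition of efficient SQ learnability. This means allowing the query function to depend on earlier answers, handling randomized learners, and allowing the hypothesis to be an arbitrary function. The adaptivity issue dissolves because the adversary's answers $C_\phi$ are target-independent, so the transcript is fixed in advance. For the final hypothesis, I would bound $\#\{S : \E[h\chi_S] \ge 1/2\} \le 4$ using Parseval, which closes the argument.
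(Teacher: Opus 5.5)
The paper gives no proof of this statement: it is quoted from \citet{kearns1998efficient} and used as a black box for \cref{cor:nn-hardness,cor:ttn-hardness}. Your proposal is a correct, self-contained reconstruction of the standard argument behind it. Its steps are:
\begin{itemize}
\item orthogonality of the characters $\chi_S$ under the uniform distribution;
\item the split $\E_x[\phi(x,\chi_S(x))]=C_\phi+\hat g_\phi(S)$ into a target-independent constant plus a Fourier coefficient;
\item Parseval, showing that a query of tolerance $\tau$ separates fewer than $\tau^{-2}$ targets from the constant answer;
\item a target-independent adversary, which makes adaptivity irrelevant;
\item the fact that one hypothesis with error below $1/4$ can match at most one parity.
\end{itemize}
The triangle inequality already gives the last step, since distinct parities disagree on half the inputs, so your final Parseval count is optional.

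Compared with the bare citation, your route buys three things that matter for how the paper uses the theorem.
\begin{itemize}
\item It is quantitative. A deterministic learner that reaches error below $1/4$ on every parity under the uniform distribution must, on the adversary's transcript, satisfy $\sum_i \tau_i^{-2}\ge 2^n-1$.
\item It handles $[-1,1]$-valued queries directly, whereas Kearns's oracle is stated for $\{0,1\}$-valued predicates. The paper's \emph{a fortiori} step to real-valued expected-gradient or expected-loss queries then needs no thresholding reduction.
\item Your averaging over a uniformly random target covers randomized learners, such as gradient descent from a random initialization.
\end{itemize}
In that randomized step, specify the oracle explicitly: it answers $C_\phi$ whenever this is within tolerance for the true target, and the true value otherwise. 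Then for each fixed seed the run follows the target-independent transcript for all but $\mathrm{poly}(n)$ targets, and succeeds on at most one of the rest.
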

 
\emph{A fortiori}, the class $\mathcal{F}$ is not learnable by gradient descent in polynomially many steps.
Note that this hardness is inherited by any superclass which realizes parity families. Standard neural
networks realize parity at linear size \citep{shalev2017failures}, implying Corollary \ref{cor:nn-hardness}. The same logic applies to tree tensor networks, which can realize every
parity at bond dimension $2$ (as they are read-once Boolean formulae), implying Corollary \ref{cor:ttn-hardness}. 

\begin{corollary}
\label{cor:nn-hardness}
ReLU networks of width
$O(n)$ are not learnable by gradient descent in polynomially many steps.
\end{corollary}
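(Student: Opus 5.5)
The argument is a reduction: if ReLU networks of width $O(n)$ were learnable by gradient descent in polynomially many steps, then so would be the parity class $\mathcal{F}$, contradicting \cref{thm:sq-parity}. There are two ingredients. The first is that width-$O(n)$ ReLU networks of polynomial size exactly realize every parity function. The second is that learnability by gradient descent, as defined above through statistical queries, is inherited by subclasses.

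\textbf{Step 1 (realization).} Fix $n$ and a subset $S\subseteq[n]$ with $|S|=k$. Put $s(x)=\sum_{i\in S}x_i\in\{0,\dots,k\}$. We need a piecewise-linear function $g$ with $g(j)=j \bmod 2$ on the integers $j\in\{0,\dots,k\}$, built from ReLUs. One choice is a first hidden layer of $k+1$ units $\mathrm{ReLU}(s(x)-j)$ for $j=0,\dots,k$. Since $\chi_S(x)=g(s(x))$ and $g$ is piecewise linear with breakpoints at integers, we can write $g(t)=\sum_j c_j\,\mathrm{ReLU}(t-j)$ with coefficients $c_j\in\{1,-2,2,\dots\}$ chosen so the slopes alternate between $+1$ and $-1$. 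This gives a depth-2 network of width $k+1\le n+1=O(n)$, with polynomially bounded weights and output in $\{0,1\}$; this is the construction of \citet{shalev2017failures}. If one prefers $\pm1$ labels, an affine output map handles it. The loss is then bounded on the domain, which satisfies the bounded per-sample loss requirement.

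\textbf{Step 2 (inheritance).} Suppose some gradient-descent learner $A$ efficiently learns the class $\mathcal{C}_n$ of width-$O(n)$ ReLU networks from statistical queries. By Step 1, $\mathcal{F}_n\subseteq\mathcal{C}_n$, with every parity of polynomial description size. Run $A$ on a target $\chi_S$. The guarantee of $A$ holds for every target in $\mathcal{C}_n$ and every distribution in the relevant class, in particular uniform. So $A$ makes polynomially many queries at polynomial tolerance and outputs an $\epsilon$-accurate hypothesis. That is an efficient SQ learner for $\mathcal{F}$, contradicting \cref{thm:sq-parity}. Since gradient queries of bounded losses are a special case of statistical queries, the contradiction applies directly to the restricted model in our definition.

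\textbf{Main obstacle.} The delicate point is bookkeeping in the definitions rather than mathematics. The size parameter of the ReLU class must be polynomially related to $n$, which Step 1 guarantees. The hypothesis class used by the learner need not be restricted, since \citet{kearns1998efficient}'s lower bound holds for arbitrary hypotheses. Both should be checked explicitly when matching the two notions of ``efficient.''
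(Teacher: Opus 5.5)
Your proposal is correct and matches the paper's argument, which likewise combines the linear-size ReLU realization of parity from \citet{shalev2017failures} with the observation that the SQ hardness of \cref{thm:sq-parity} is inherited by any superclass, gradient queries being a special case of statistical queries. Your explicit depth-2 construction only spells out the realization step that the paper leaves to the citation; one small correction is that the bounded-loss condition restricts the learner's queries, not the target class, so it needs no checking on the target side.
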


\begin{corollary}
\label{cor:ttn-hardness}
Polynomial-size TTNs with bond dimensions $\geq 2$ are not learnable by gradient descent in polynomially many steps.
\end{corollary}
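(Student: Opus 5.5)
The plan is a reduction: embed the parity class $\mathcal{F}$ of \cref{thm:sq-parity} into polynomial-size TTNs of bond dimension $2$. Then any polynomial-step gradient descent learner for the TTN class would give an efficient statistical query learner for parities. Since statistical query hardness passes to any superclass that contains the hard concepts, the statement will follow from \cref{thm:sq-parity} exactly as \cref{cor:nn-hardness} follows for ReLU networks.

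\textbf{Step 1: the embedding.} Fix $n$ and a subset $S\subseteq[n]$. The parity $\chi_S(x)=\bigoplus_{i\in S}x_i$ is a read-once formula: take a balanced binary tree of $|S|-1$ XOR gates over the variables in $S$. Variables outside $S$ are handled by attaching, for each $j\notin S$, a trivial leaf node whose external edge receives $e_{x_j}$ and whose map is the linear functional $(1,1)$. This leaf contributes a factor of $1$, so it ignores $x_j$ while still making the TTN a function of all $n$ inputs. The scalar-output issue is handled by the footnote construction after \cref{def:ttn}: the root is given an extra external leg $e_c$, and its map is $e_{g(a,b)}\cdot e_c$. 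Alternatively, contract the root output with $(1,-1)$ to obtain the $\pm1$ encoding of parity. By the verification in \cref{subsec:expressivity}, the network outputs $\chi_S(x)$, or its $\pm1$ version, on the one-hot inputs $e_{x_1},\dots,e_{x_n}$. All bond dimensions equal $2$, and the total size is $O(n)$ nodes of $O(1)$ parameters each. In particular, every $\chi_S$ is realized at polynomial size with bond dimension $2$. Larger bond dimensions $r\ge 2$ are covered by padding the tensors with zeros.

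\textbf{Step 2: transfer of hardness.} Suppose, for contradiction, that polynomial-size TTNs with bond dimension $\ge2$ were learnable by gradient descent in polynomially many steps in the paper's sense. Each gradient query of a bounded per-sample loss is a statistical query. Run the learner on a target $\chi_S$ under the uniform distribution on $\{0,1\}^n$. This distribution corresponds to uniform sampling of entries of the $2\times\cdots\times2$ target tensor, which is the sampling model of \eqref{eq:pop_loss_main}. Because each $\chi_S$ lies in the class by Step 1, the learner would then efficiently learn $\mathcal{F}$ from statistical queries, contradicting \cref{thm:sq-parity}.

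\textbf{Main obstacle.} The bookkeeping in the reduction is the delicate part. First, \cref{thm:sq-parity} is stated for Boolean concepts over $n$ bits, whereas the TTN acts on the tensor index space $\mathcal{X}=[2]^n$. These must be identified by $x_i\mapsto e_{x_i}$, and the encoding must keep the queried loss bounded. With $\pm1$ labels and outputs clipped or bounded on the concept class, this holds. Second, "polynomial size" must be checked uniformly in $n$, and the padding-to-arbitrary-dimension argument must not introduce spurious dependence on inputs. Both are routine once the trivial-leaf and padding constructions are written out explicitly.
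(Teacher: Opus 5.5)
Your proposal is correct and follows the paper's own argument. Parities are read-once formulas, hence realized by bond-dimension-$2$ TTNs, so the TTN class contains $\mathcal{F}$ and inherits the statistical-query hardness of \cref{thm:sq-parity}; your extra bookkeeping (variables outside $S$, scalar output, padding) fills in details the paper leaves implicit, but make sure the dummy leaves hang off a bounded-degree tree (e.g.\ as constant-$e_0$ unary gates feeding an XOR tree over all $n$ inputs), since attaching many of them to a single node and padding its bonds to dimension $\geq 2$ would make that node's tensor exponentially large.
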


These are \textit{concept class} learnability statements, as they imply that a worst-case target always exists for gradient descent, even if potentially most targets can be learned efficiently. Note that the hardness of both neural networks and TTNs follows from the
same underlying mechanism: any model class rich enough to realize Boolean formulae automatically contains such hard targets. In particular, this is a property of the hypothesis class alone and does not depend on the choice of
gradient method.

\begin{figure}[t]
\centering
\begin{tikzpicture}[
    scale=0.88, transform shape,
    gate/.style={circle, draw, fill=orange!25, minimum size=10mm,
                 inner sep=1pt, thick, font=\small},
    tensor/.style={circle, draw, fill=blue!25, minimum size=11mm,
                   inner sep=1pt, thick, font=\small},
    leg/.style={thick},
    bond/.style={thick},
    edgelab/.style={fill=white, inner sep=1.5pt, font=\scriptsize},
    inputlab/.style={font=\scriptsize, inner sep=2pt},
    panel/.style={font=\small\bfseries},
]

\node[gate] (or)  at (0,    0)    {OR};
\node[gate] (a1)  at (-1.7, -1.7) {AND};
\node[gate] (a2)  at ( 1.7, -1.7) {AND};
\draw[bond] (or) -- (a1) node[edgelab, pos=0.55] {$1$};
\draw[bond] (or) -- (a2) node[edgelab, pos=0.55] {$0$};
\draw[leg] (or) -- ++(0, 1.0) node[above, font=\scriptsize] {$F(x) = 1$};
\draw[leg] (a1) -- ++(-0.8, -1.4) node[below, inputlab] {$x_1 = 1$};
\draw[leg] (a1) -- ++( 0.8, -1.4) node[below, inputlab] {$x_2 = 1$};
\draw[leg] (a2) -- ++(-0.8, -1.4) node[below, inputlab] {$x_3 = 1$};
\draw[leg] (a2) -- ++( 0.8, -1.4) node[below, inputlab] {$x_4 = 0$};
\node[panel] at (0, -4.0) {Boolean formula};
\node[font=\footnotesize] at (0, -4.5) {$F = (x_1 \wedge x_2) \vee (x_3 \wedge x_4)$};

\node[tensor] (Wor)  at (10.0,    0)   {$W_{\mathrm{OR}}$};
\node[tensor] (Wa1)  at ( 8.0,  -1.7) {$W_{\mathrm{AND}}$};
\node[tensor] (Wa2)  at (11.6,  -1.7) {$W_{\mathrm{AND}}$};
\draw[bond] (Wor) -- (Wa1) node[edgelab, pos=0.55] {$[0,1]$};
\draw[bond] (Wor) -- (Wa2) node[edgelab, pos=0.55] {$[1,0]$};
\draw[leg] (Wor) -- ++(0, 1.0) node[above, font=\scriptsize] {$e_{F(x)} = [0,1]$};
\draw[leg] (Wa1) -- ++(-0.9, -1.4) node[below, inputlab] {$e_{x_1} = [0,1]$};
\draw[leg] (Wa1) -- ++( 0.9, -1.4) node[below, inputlab] {$e_{x_2} = [0,1]$};
\draw[leg] (Wa2) -- ++(-0.9, -1.4) node[below, inputlab] {$e_{x_3} = [0,1]$};
\draw[leg] (Wa2) -- ++( 0.9, -1.4) node[below, inputlab] {$e_{x_4} = [1,0]$};
\node[panel] at (10.0, -4.0) {Tree tensor network};

\begin{scope}[shift={(4.5, -0.5)}]
  \pgfmathsetmacro{\cellw}{0.50}
  \pgfmathsetmacro{\cellh}{0.50}
  \pgfmathsetmacro{\matw}{2*\cellw}
  \pgfmathsetmacro{\math}{2*\cellh}
  \pgfmathsetmacro{\colx}{0.78}    %
  \pgfmathsetmacro{\coly}{-0.42}   %
  \pgfmathsetmacro{\depthx}{0.7}
  \pgfmathsetmacro{\depthy}{0.7}

  \begin{scope}[cm={\colx, \coly, 0, 1, (0.4, -0.1)}]
    \node[font=\footnotesize] at (0.5*\cellw, -0.5*\cellh) {$1$};  %
    \node[font=\footnotesize] at (1.5*\cellw, -0.5*\cellh) {$1$};  %
    \node[font=\footnotesize] at (0.5*\cellw, -1.5*\cellh) {$1$};  %
    \node[font=\footnotesize] at (1.5*\cellw, -1.5*\cellh) {$0$};  %
    \draw[thick] (-0.06, 0.05) -- (-0.20, 0.05) -- (-0.20, -\math-0.05) -- (-0.06, -\math-0.05);
    \draw[thick] (\matw+0.06, 0.05) -- (\matw+0.20, 0.05) -- (\matw+0.20, -\math-0.05) -- (\matw+0.06, -\math-0.05);
  \end{scope}

  \begin{scope}[cm={\colx, \coly, 0, 1, (0.4+\depthx, -0.1+\depthy)}]
    \node[font=\footnotesize] at (0.5*\cellw, -0.5*\cellh) {$0$};
    \node[font=\footnotesize] at (1.5*\cellw, -0.5*\cellh) {$0$};
    \node[font=\footnotesize] at (0.5*\cellw, -1.5*\cellh) {$0$};
    \node[font=\footnotesize] at (1.5*\cellw, -1.5*\cellh) {$1$};
    \draw[thick] (-0.06, 0.05) -- (-0.20, 0.05) -- (-0.20, -\math-0.05) -- (-0.06, -\math-0.05);
    \draw[thick] (\matw+0.06, 0.05) -- (\matw+0.20, 0.05) -- (\matw+0.20, -\math-0.05) -- (\matw+0.06, -\math-0.05);
  \end{scope}

  \pgfmathsetmacro{\arrlen}{0.55}
  \pgfmathsetmacro{\collen}{sqrt(\colx*\colx + \coly*\coly)}
  \pgfmathsetmacro{\dlen}{sqrt(\depthx*\depthx + \depthy*\depthy)}
  \coordinate (axO) at (-0.65, 0.65);
  \draw[->, gray!70!black, thin] (axO) -- ++(0, -\arrlen)
      node[below=-1pt, font=\scriptsize, gray!70!black] {input 1};
  \draw[->, gray!70!black, thin] (axO) -- ++({\arrlen*\colx/\collen}, {\arrlen*\coly/\collen})
      node[right=-1pt, font=\scriptsize, gray!70!black] {input 2};
  \draw[->, gray!70!black, thin] (axO) -- ++({\arrlen*\depthx/\dlen}, {\arrlen*\depthy/\dlen})
      node[above right=-3pt, font=\scriptsize, gray!70!black] {output};

  \draw[dotted, gray!45!black, thin] (0.58, -0.30) ellipse (2.05 and 1.85);

  \coordinate (tanU) at (2.29,  0.72);
  \coordinate (tanL) at (1.12, -2.08);
\end{scope}

\draw[dashed, gray!55!black, thin] (Wa1.110) -- (tanU);
\draw[dashed, gray!55!black, thin] (Wa1.225) -- (tanL);

\end{tikzpicture}
\caption{Embedding a Boolean formula into a tree tensor network. \textbf{Left:} the syntax tree of $F = (x_1 \wedge x_2) \vee (x_3 \wedge x_4)$, evaluated on the assignment $(x_1, x_2, x_3, x_4) = (1,1,1,0)$. Inputs are external legs at the leaves; each internal edge is labelled with the output of the gate below it. \textbf{Right:} the corresponding TTN, obtained by replacing each gate with its $\{0,1\}$-indicator tensor and each Boolean value with its one-hot encoding ($0 \mapsto e_0 = [1,0]$, $1 \mapsto e_1 = [0,1]$). Each internal bond carries the encoded value of the corresponding subformula. \textbf{Callout:} the entries of $W_{\mathrm{AND}}$, in axonometric perspective. The two slices correspond to the two values of the output bond; the only $1$ on the back slice is at $(\text{input } 1, \text{input } 2) = (1, 1)$, the only input pair for which $\mathrm{AND}$ fires.}
\label{fig:bool-to-ttn}
\end{figure}

\section{The Loss Landscape is Benign}
\label{sec:benign}

\cref{cor:ttn-hardness} implies that there exist realizable TTN targets which cannot be learned in polynomial time by gradient descent on $\Loss$, but it does not specify a concrete mechanism by which gradient descent fails on such targets (and similarly, the absence of this mechanism on other realizable targets). It is clear that efficient learning must fail in \emph{some} way, but we do not know in \emph{what} way.
The TTN loss function is non-convex, and we know that optimization can easily become trapped in local minima for such functions \citep{jain2017nonconvex}. Historically, this failure mode has been a central focus of the neural network optimization literature \citep{auer1996exponentially, choromanska2015loss, safran2018spurious}. Thus, we may naturally suspect that optimization fails for worst-case targets by getting stuck in underperforming local minima.

We show that this explanation \emph{cannot be complete}, at least in many cases. For \emph{any} realizable TTN target, including worst-case targets, every local minimum is global (\cref{thm:no-spurious-local-min}) if one assumes a regularization condition, or at least there always exists a non-increasing-loss escape path to lower loss (\cref{cor:escape-paths}) if one does not assume the condition. Thus, even in the presence of worst-case hardness, the loss landscape can be benign.

\begin{definition}[Minimum-norm point]
\leanstatementlink{TTN/Contraction.lean}{75}\label{def:minimum-norm}
A point in parameter space $\theta$ is \emph{minimum norm} if it has minimum $L^2$ norm among all parameters producing the same output tensor. Explicitly, $\|\theta\| \leq \|\theta'\|$ for all $\theta'$ with $T(\theta') = T(\theta)$, where $\|\theta\|^2 = \sum_{v \in V} \|W_v\|_F^2$.
\end{definition}

A minimum-norm point is the most parameter-efficient representation of a given output tensor. Intuitively, it ensures that invisible ``useless'' structure cannot exist, so there is always sufficient internal capacity in the network to continue learning. We believe this condition to be a natural one. For example, \cref{app:minimum-norm} studies it in detail, showing that it coincides with the \textit{Kempf--Ness condition} of geometric invariant theory \citep{kempf1979length} for the gauge group of the network. It is also equivalent to a \emph{balancedness} condition which generalizes the well-known balancedness condition in deep linear networks \citep{arora2018optimization}, and reproduces known canonical forms of tree tensor networks \citep{acuaviva2023minimal}. We believe this condition to be a significant technical contribution of our work.
Most importantly for the present work, the minimum-norm assumption does not appear unduly restrictive for the purposes of discussing the difficulty of gradient-based optimization due to \Cref{prop:min-norm-preserved}.

\begin{restatable}{proposition}{minnormpreserved}
\label{prop:min-norm-preserved}
Let \(\theta(t)\) be a solution to the gradient flow equation \(\dot\theta(t)=-\nabla\Loss(\theta(t))\). If \(\theta(0)\) is a minimum-norm point, then \(\theta(t)\) is a minimum-norm point for
every \(t\), and so is every limit point of the trajectory.
\end{restatable}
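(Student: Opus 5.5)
The plan is to characterize minimum-norm points by a closed system of algebraic \emph{balancedness} equations that gradient flow conserves. This is the TTN analogue of the conserved quantities $W_{i+1}^\top W_{i+1} - W_i W_i^\top$ in deep linear networks.

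\textbf{Gauge group and moment map.} Let $G = \prod_{e \text{ bond}} \GL(r_e)$ act on $\Theta$ as follows: $g_e$ acts on the output leg of the child node of $e$, and $g_e^{-1}$ acts on the corresponding input leg of the parent node. Then $T(g\cdot\theta) = T(\theta)$, so $\Loss$ is $G$-invariant. For each bond $e$, take the child node $c$ and parent node $p$, and define
\[
\mu_e(\theta) = M_c M_c^\top - M_p^\top M_p .
\]
Here $M_c$ is the unfolding of $W_c$ with its output leg as rows, and $M_p$ is the unfolding of $W_p$ with the leg $e$ as columns. A direct computation gives, for any $\xi = (\xi_e) \in \bigoplus_e \mathfrak{gl}(r_e)$,
\[
\tfrac{d}{ds}\big|_{s=0}\tfrac12\|e^{s\xi}\cdot\theta\|^2 = \sum_e \tr(\xi_e\,\mu_e(\theta)) .
\]
So $\mu = (\mu_e)_e$ is the moment map for the $G$-action.

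\textbf{Conservation along the flow.} Differentiating $\Loss(e^{s\xi}\cdot\theta) = \Loss(\theta)$ at $s=0$ gives $\langle \nabla\Loss(\theta), \xi\cdot\theta\rangle = 0$ for every $\xi$ in the full Lie algebra, not just symmetric $\xi$. Along $\dot\theta = -\nabla\Loss(\theta)$, differentiating $\tr(\xi_e\mu_e(\theta(t)))$ gives a sum of two terms of the form $\langle \xi\cdot\theta, \dot\theta\rangle$ and $\langle \xi^\top\cdot\theta, \dot\theta\rangle$. Both vanish by the invariance identity. Since $\xi$ is arbitrary, $\mu(\theta(t))$ is constant in time.

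\textbf{Equivalence with minimum norm.} Next I need that $\theta$ is minimum-norm if and only if $\mu(\theta) = 0$. The ``only if'' direction is immediate: a minimum-norm point minimizes $\|g\cdot\theta\|$ over $G$, and $\mu$ is the first variation of that norm. The ``if'' direction is the main obstacle. Kempf--Ness only says that $\mu = 0$ implies $\theta$ minimizes the norm over the closure of its $G$-orbit. The fiber $T^{-1}(T(\theta))$ can be strictly larger than one orbit closure, for example at rank-deficient points where a node can be changed in directions invisible to $T$.

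I would close this gap using the characterization in \cref{app:minimum-norm}, which identifies the minimum-norm condition with the Kempf--Ness/balancedness condition. If I had to argue it directly, I would proceed by induction over the tree from the leaves. At a balanced point, each subtree's output map has the same Gram matrix as the corresponding parent unfolding. Using the canonical-form argument of \citet{acuaviva2023minimal}, I would then show that any $\theta'$ in the fiber satisfies $\|\theta'\|^2 \ge \sum_e \|\sigma_e\|_1$-type quantities (nuclear norms of the bond matricizations of $T$). I would also show that balanced points attain this bound, exactly as in the DLN proof that balanced factorizations minimize $\sum_i\|W_i\|_F^2$.

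\textbf{Conclusion.} Given the equivalence, $\mu(\theta(0)) = 0$ and conservation give $\mu(\theta(t)) = 0$, so every $\theta(t)$ is minimum-norm. For a limit point $\bar\theta = \lim_k \theta(t_k)$, continuity of the polynomial map $\mu$ gives $\mu(\bar\theta) = 0$, so $\bar\theta$ is minimum-norm by the same equivalence. This avoids any need to control how the fibers of $T$ vary in the limit.
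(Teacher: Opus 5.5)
Your main line is the paper's proof. Your $\mu_e$ is, up to sign and matricization convention, the balancedness defect $\Delta_e$, and its conservation follows by differentiating gauge invariance of $\Loss$ over the full Lie algebra, as in \cref{prop:balancedness-conserved}. The equivalence of minimum norm with balancedness (\cref{prop:min-norm-equivalences}), together with continuity of $\Delta_e$, then covers every $\theta(t)$ and every limit point.

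You should drop the fallback ``direct'' argument for balanced $\Rightarrow$ minimum norm, because it would fail. Scaling every node tensor by $\lambda^{1/|V|}$ maps the fiber over $T$ onto the fiber over $\lambda T$, so the minimal squared norm on a fiber is $(2/|V|)$-homogeneous in $T$. Nuclear norms of edge matricizations are $1$-homogeneous, so once $|V|\ge 3$ the fiber minimum cannot equal any nuclear-norm expression. Concretely, take a three-node chain with one-dimensional legs and target $8$. The balanced point $W_1=W_2=W_3=2$ has $\|\theta\|^2=12<16=\sum_e\|T^{(e)}\|_*$, so your proposed lower bound is false. Invoking \citet{acuaviva2023minimal} does not directly help either: that argument is complex GIT, and the paper notes it does not transfer to real parameters.

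The obstruction you correctly identified is that a fiber can be strictly larger than one orbit closure. The paper resolves it as follows.
\begin{itemize}
\item Any two points of a fiber have intersecting gauge-orbit closures (\cref{prop:equal-tensors-meeting-orbit-closures}). This is proved by contracting one edge at a time and using that every factorization's $\GL(r_e)$-orbit closure contains all minimal factorizations (\cref{cor:factorization-orbit-closures}).
\item Hence each fiber has a unique closed orbit (\cref{prop:unique-closed-gauge-orbit-in-fiber}).
\item A balanced point has a closed orbit by Richardson's theorem, so its orbit is the orbit of a global fiber minimizer.
\item Since a balanced point minimizes the norm on its own orbit, it is minimum norm.
\end{itemize}
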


See \cref{app:minimum-norm} for the proof\footnote{Note the setup in \cref{app:minimum-norm} allows a strictly more general loss $\Loss$ than the square loss we consider in (\ref{eq:pop_loss_main}).}. Thus, if the network is initialized to be minimum-norm (such as if it is initialized near the origin), it will remain so under gradient flow. In fact, merely \textit{locally minimizing} the norm over parameters producing the same $T(\theta)$ is equivalent to the minimum-norm condition (\cref{prop:min-norm-equivalences}), and \textit{$L^2$-regularized} gradient flow converges to minimum-norm points at an exponential rate (\cref{cor:regularization-balances}).
Under the minimum-norm assumption, in \Cref{thm:no-spurious-local-min}, we prove the absence of spurious local minima.

\begin{theoremframe}
\begin{restatable}{theorem}{nospuriouslocalmin}
\leanstatementlink{TTN/Landscape/LocalMin.lean}{181}
\label{thm:no-spurious-local-min}
Every local minimum of $\Loss$ that is a minimum-norm point is a global minimum with $\Loss = 0$.
\end{restatable}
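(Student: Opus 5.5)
The plan is to argue by contradiction through the residual $R := T(\theta) - T^*$. Suppose $\theta$ is a local minimum of $\Loss$, minimum-norm, and $R \neq 0$. Realizability gives a $\theta^*$ with $T(\theta^*) = T^*$, and I will use its existence to build a descent direction. The argument splits according to whether $T(\theta)$ has full Tucker rank at every bond $e$. Here full Tucker rank means that the matricization of $T(\theta)$ separating the leaves below $e$ from the rest has rank exactly $r_e$.

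\emph{Step 1 (first-order conditions).} Since $\theta$ is critical, for every node $v$ the environment of $W_v$ contracted against $R$ vanishes. That is, $\langle R, T(\theta)|_{W_v \to \dot W_v}\rangle = 0$ for all $\dot W_v$, where $T(\theta)|_{W_v \to \dot W_v}$ means $T$ with $W_v$ replaced by $\dot W_v$.

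\emph{Step 2 (full-rank case).} If every bond is full rank, the subtree maps feeding each bond are injective on the relevant subspaces, so the environments are nondegenerate. I would first gauge-fix into a root-orthogonal canonical form; this is harmless because $\Loss$ is gauge invariant. Then I would run an induction from the root down to the leaves. At the root, criticality says that $R$ projected onto $\bigotimes_{e \in \operatorname{in}(\text{root})} U_e$ vanishes, where $U_e$ is the column space of the subtree below $e$. Criticality at a child node, with the parent environment nondegenerate, then removes the component of $R$ in which exactly one subtree leaves its $U_e$. Repeating this gives that the projection of $R$ onto the tangent space of the TTN image vanishes. To conclude $R = 0$ from this, I would use realizability. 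The target $T^*$ has Tucker ranks at most $r_e$, so I can interpolate along a path from $\theta$ to $\theta^*$ inside the TTN image. By a convexity argument on the curve $t \mapsto T(\theta_t)$ (for instance, interpolating node by node, so that each step is affine in a single $W_v$), some single-node move strictly decreases $\|R\|^2$. A single-node move is affine in that node, so the decrease already appears to first or second order, which contradicts $\theta$ being a local minimum.

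\emph{Step 3 (rank-deficient case via minimum-norm).} Suppose some bond $e$ has rank $< r_e$. By the balancedness characterization of minimum-norm points (\cref{app:minimum-norm}), the Gram matrices of the two sides of $e$ coincide. Hence a null direction $s$ of the bond on one side is also null on the other, and $s$ is an unused channel. Now perturb $W_{\text{child}} \mathrel{+}= \varepsilon\, a \otimes s$ and $W_{\text{parent}} \mathrel{+}= \varepsilon\, s \otimes b$, with $a$ and $b$ arbitrary. Because $s$ is null on both sides, the output changes by exactly $\varepsilon^2 \Delta(a,b)$, where $\Delta(a,b)$ inserts a new term through the channel $s$; the first-order terms vanish. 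Local minimality then forces $\langle R, \Delta(a,b)\rangle \geq 0$, and replacing $a$ by $-a$ gives equality. Thus $R$ is orthogonal to every such rank-one insertion across $e$. The insertions $\Delta(a,b)$ range over products of an arbitrary child multilinear map with an arbitrary parent-side map, and these span everything the parent environment can see. Combined with Step 2 applied to the remaining full-rank structure, this should again give $R = 0$.

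\emph{Main obstacle.} I expect the hardest part to be Step 2, and specifically the final deduction of $R = 0$ from the vanishing projection. Orthogonality of $R$ to the tangent space does not by itself imply $R = 0$. What is needed is a genuinely global use of realizability that turns the path toward $\theta^*$ into a local descent direction. The approach I would try first is single-node (block-coordinate) convexity. I would show that if the full-rank critical point were not global, then some node $v$ could be re-solved by linear least squares to a strictly better value, using the fact that the nondegenerate environments let any rank-$\le r_e$ tensor consistent with the other nodes be reached. I would then derive the contradiction from this improvement.
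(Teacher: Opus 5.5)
There is a genuine gap, in both steps.

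\textbf{Step 2 cannot work as proposed.} $T(\theta)$ is linear in each node tensor, so along any single-node move $W_v\mapsto W_v+tD$ the loss is $\Loss(\theta)+t\langle R,\delta T\rangle+\tfrac{t^2}{2}\|\delta T\|_F^2$, and criticality kills the linear term. So at a critical point no single-node move of any size lowers the loss: each $W_v$ is already the least-squares optimum given the others. Node-by-node interpolation toward $\theta^*$ therefore never begins with a descending step from $\theta$, and the root-to-leaf induction only restates criticality.

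The missing idea is a \emph{local} use of realizability that strips leaves with the loss preserved exactly. At a leaf $v$ with bond $e$, gauge $W_v$ to have orthonormal columns. Full rank makes the environment $H_e$ full column rank. Criticality gives $R^{(e)}H_e=0$, so $T^{*(e)}H_e=W_v(H_e^\top H_e)$ has rank $r_e$. Since $\rank T^{*(e)}\le r_e$, this forces $\im T^{*(e)}=\im W_v$. Hence $R^{(e)}=W_v\widetilde R$ with $\|R\|_F=\|\widetilde R\|_F$. Criticality, full rank and realizability then pass to the tree with $v$ removed, and you induct down to a single linear node.

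\textbf{Step 3 stops short of $R=0$.} Your perturbation through a dormant direction $s$ is sound; it is the two-node case of the paper's construction. But local minimality then only says that $R$ is orthogonal to the image of the map $\Gamma$ reattaching the rest of the network to arbitrary tensors at the two endpoints of $e$, i.e.\ $\Gamma^*R=0$. This is far from $R=0$, and ``combined with Step~2'' is not an argument, since Step~2 needs full rank at every edge. Two ingredients are missing.

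First, if neighbouring edges are also deficient, the endpoint environments are degenerate. You must then perturb all $N$ nodes of a connected component of deficient edges at once: insert a unit $\eta_e\in D_e$ on every core edge and arbitrary factors on the other modes, so the output changes by exactly $t^NU$. Before that, orthonormalize and freeze the full-rank boundary subtrees, so the loss changes only by a constant.

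Second, you must show the projected residual is nonzero, and nothing in your proposal does this. An edge with $\rank T^{(e)}=\rank T^{*(e)}<r_e$ has a dormant channel, yet neither the test across it nor the full-rank case need give a contradiction.

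The paper supplies this with a second use of minimum norm plus criticality (\cref{prop:active-aligned}). The cross-Gram $M_e=H_e^{*\top}H_e$ is injective on the active subspace $D_e^\perp$: otherwise, shrinking an active direction $u$ with $M_eu=0$ fixes $\langle T,T^*\rangle$ and strictly lowers $\langle T,T\rangle$, contradicting criticality. This has two consequences:
\begin{itemize}
\item $\rank T^{(e)}\le r_e^*$, so every bond can be shrunk to $r_e^*$, and ``deficient'' then means model rank below target rank.
\item $M_e$ is invertible on full-rank boundary edges, so $\Gamma^*T^*$ keeps rank $r_e$ on core edges while the model has rank $<r_e$.
\end{itemize}
Only then is $\Gamma^*R\neq 0$, and a rank-one $U$ with $\langle \Gamma^*R,U\rangle<0$ gives the contradiction.
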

\end{theoremframe}

See \cref{app:main} for proof. As for points without the minimum-norm assumption, in \Cref{cor:escape-paths}, we establish the absence of bad valleys at every point in parameter space.

\begin{restatable}{corollary}{escapepaths}
\label{cor:escape-paths}
From any $\theta_0$ with $T(\theta_0) \neq T^*$, there exists a continuous path $\gamma: [0,1] \to \Theta$ with $\gamma(0) = \theta_0$ such that $t \mapsto \Loss(\gamma(t))$ is non-increasing and $\Loss(\gamma(1)) < \Loss(\gamma(0))$.
\end{restatable}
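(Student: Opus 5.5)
The plan is to reduce \cref{cor:escape-paths} to \cref{thm:no-spurious-local-min} by first moving, at constant loss, from $\theta_0$ to a minimum-norm point representing the same tensor, and then escaping from that point. Set $S=T(\theta_0)$ and consider the fibre $F=\{\theta : T(\theta)=S\}$. On $F$ the loss is constant, equal to $\Loss(\theta_0)>0$ because $T(\theta_0)\neq T^*$ (realizability is the standing assumption of \cref{thm:no-spurious-local-min}). The first step is to construct a continuous path inside $F$ from $\theta_0$ to a minimum-norm point $\hat\theta\in F$. Along this path $t\mapsto\Loss(\gamma(t))$ is constant, hence non-increasing.

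Two natural constructions are available for this path.
\begin{itemize}
\item The first uses the gauge/Kempf--Ness picture of \cref{app:minimum-norm}. The fibre contains the gauge orbit of $\theta_0$, obtained by inserting $g_eg_e^{-1}$ on each bond. Kempf--Ness guarantees that the norm attains its infimum on the orbit closure. A minimizer in the closure is reached along a one-parameter subgroup path $t\mapsto \exp(t\xi)\cdot\theta_0$ as $t\to\infty$, which we reparametrize to $[0,1)$. The limit still lies in $F$ by continuity of $T$.
\item The second, simpler route avoids this: run the norm-decreasing flow $\dot\theta=-P_F\nabla\|\theta\|^2$ restricted to the orbit, or use $L^2$-regularized gradient flow arguments (\cref{cor:regularization-balances}). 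Both keep $T$ fixed and converge to a local minimizer of norm on the fibre. By \cref{prop:min-norm-equivalences}, such a local minimizer is already globally minimum-norm.
\end{itemize}
The main obstacle is that the orbit may not be closed: the minimum-norm representative can lie only in the closure, where some rank collapses. I would handle this by allowing the path to end at the limit point. Continuity at $t=1$ is fine because the flow converges (Łojasiewicz for the real-algebraic norm function), and $T$ is continuous, so the endpoint is in $F$. If orbits were insufficient because $F$ is larger than one orbit, I would still only need some minimum-norm point reachable within $F$, and the gauge flow on the orbit closure suffices.

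Next, $\hat\theta$ is a minimum-norm point with $\Loss(\hat\theta)=\Loss(\theta_0)>0$. By \cref{thm:no-spurious-local-min} it is not a local minimum, so there exist points $\theta_k\to\hat\theta$ with $\Loss(\theta_k)<\Loss(\hat\theta)$. A straight segment from $\hat\theta$ to such a point need not be monotone, so I would use a second-stage path that is. Since $\Loss$ is a polynomial, hence real-analytic, the curve selection lemma applied to the semialgebraic set $\{\Loss<\Loss(\hat\theta)\}$ yields an analytic curve $\beta:[0,\epsilon)\to\Theta$ with $\beta(0)=\hat\theta$ and $\beta(s)$ in that set for $s>0$. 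Then $\Loss\circ\beta$ is analytic and nonconstant, so it is strictly decreasing on some $[0,\epsilon']$; restrict to this interval.

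Finally, concatenate the constant-loss path from the first step with $\beta|_{[0,\epsilon']}$ and rescale to $[0,1]$. The resulting path starts at $\theta_0$, has non-increasing loss, and ends with $\Loss(\gamma(1))<\Loss(\theta_0)$.
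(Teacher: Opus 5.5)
Your overall plan is the same as the paper's: move at constant loss inside the fibre to a minimum-norm point, then escape. Your second stage is sound. Curve selection on the semialgebraic set $\{\Loss<\Loss(\hat\theta)\}$ gives an analytic curve along which $\Loss$ is strictly decreasing. It needs only the \emph{statement} of \cref{thm:no-spurious-local-min}, which is slightly cleaner than the paper's appeal to the explicit core path.

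The gap is in the first stage, which carries essentially all the content of the corollary. None of your constructions of the constant-loss path works as stated.

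\textbf{Regularized flow.} The $L^2$-regularized flow does not keep $T$ fixed. \cref{cor:regularization-balances} controls the defects $\Delta_e$, not $T$, and even $\dot\theta=-\lambda\theta$ rescales $T$ by $e^{-\lambda|V|t}$.

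\textbf{One-parameter subgroups.} Your claim that a minimizer is reached as $t\to\infty$ along $\exp(t\xi)\cdot\theta_0$ is false in general. Take a two-node network with $r_e=1$ and one-dimensional legs, and $\theta_0=(2,\tfrac12)$. Its orbit $\{(2/g,g/2)\}$ is closed, and $\exp(t\xi)\cdot\theta_0$ diverges as $t\to\infty$ for every $\xi\neq0$. So the minimizer $(1,1)$ is reached only at finite time. When the orbit is not closed, you need a real one-parameter subgroup whose limit lands in the closed orbit. That is the real Hilbert--Mumford criterion (Birkes), which neither the paper nor you supply. Even then, a further segment inside the closed orbit is needed to reach a minimum vector.

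\textbf{Projected flow.} {\L}ojasiewicz's theorem is about gradient flows of an analytic function on $\Theta$. The projected flow $-P\nabla\|\theta\|^2$ onto orbit (or fibre) tangent spaces is not such a flow. In exactly the non-closed case you flagged, the limit lies in a lower-dimensional orbit where that projection is discontinuous. So neither convergence nor minimality of the limit follows.

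\textbf{How the paper does it.} The paper closes this step with semialgebraic topology rather than dynamics. The fibre $\mathcal F=\{\theta:T(\theta)=T(\theta_0)\}$ is a real algebraic set, so it has finitely many connected components, each closed and path-connected. The norm attains its minimum on the component $C\ni\theta_0$ at some $\theta_{\min}$. Since $C$ is open in $\mathcal F$, $\theta_{\min}$ is a local minimizer of the norm on $\mathcal F$. It is therefore minimum norm by \cref{prop:min-norm-equivalences} ((ii)$\Rightarrow$(i)). Any path in $C$ from $\theta_0$ to $\theta_{\min}$ has constant loss.

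\textbf{Rescuing your flow.} If you want to keep a flow, use the honest gradient flow of the polynomial $\Phi(\theta)=\sum_e\|\Delta_e(\theta)\|_F^2$.
\begin{itemize}
\item In the notation of \cref{prop:balancedness-conserved}, edge $e$ contributes $(4A\Delta_e,\,-4B\Delta_e)$ to $\nabla\Phi$. This is the infinitesimal gauge action of $X=4\Delta_e$, so $T$ is constant along the flow.
\item $\Phi$ is homogeneous of degree $4$, so Euler's identity gives $\langle\theta,\nabla\Phi\rangle=4\Phi$. Hence $\|\theta\|^2$ decreases at rate $8\Phi$, trajectories are bounded, and they converge by {\L}ojasiewicz.
\item Every critical point of $\Phi$ has $\Phi=0$, i.e.\ is balanced. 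It is therefore minimum norm by \cref{prop:min-norm-equivalences} ((iv)$\Rightarrow$(i)).
\end{itemize}
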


We prove \Cref{cor:escape-paths} in \cref{app:escape}. Crucially, both \cref{thm:no-spurious-local-min} and \cref{cor:escape-paths} apply for \textit{all} realizable targets, despite \cref{cor:ttn-hardness}.

It is worth remarking on the assumptions and setup we use in proving \cref{thm:no-spurious-local-min}, namely minimum-norm parameters, realizability, square loss, and acyclicity of the network topology. Our core claim, that benign landscapes can coexist with worst-case computational hardness, is an existence claim, for which these assumptions are sufficient. However, to identify which parts of our argument are critical for future generalizations, \cref{subsec:tightness} demonstrates that arbitrary relaxations of these conditions permit spurious local minima. We suspect, however, that several of these constraints can be significantly weakened without compromising the underlying landscape geometry.

\section{Where Can Hardness Arise in TTNs?}
\label{sec:where-hardness}

Our results in \cref{sec:benign} suggest that local minima, and the resulting failure of convergence, is not the mechanism behind worst-case optimization difficulty in TTNs. We propose instead that even if the loss landscape is free from spurious local minima, it is not free from optimization difficulty, which can still arise from highly degenerate (very flat) \emph{saddle points} \citep{dauphin2014identifying, sankar2017saddles, zhang2026saddle}. Rather than causing outright failure of convergence, such saddle points can instead cause convergence to be \emph{exponentially slow} \citep{du2017gradient}.
We show that in TTNs, saddle points can occur only where the network is rank-deficient - that is, where it uses only a fraction of its capacity - since they are confined to the Tucker-rank-deficient subset of parameter space. Finally, we provide a case study of the parity function to concretely demonstrate this degenerate saddle phenomenon in a known hard-to-learn target.

\subsection{Rank Deficiencies in Spurious Critical Points}

Famously, there are several inequivalent notions of ``rank'' for tensors, much unlike the situation for matrices, where these notions coincide \citep{lim2021tensors}. For our purposes, the relevant notion of tensor rank is the \textit{Tucker rank} (also called \textit{multilinear rank}).

\begin{definition}[Full Tucker rank]
\leanstatementlink{TTN/Landscape/FullRank.lean}{48}
\label{def:full-tucker-rank}
A parameter point $\theta$ has \emph{full Tucker rank} if, for every internal edge $e$ and each of its endpoints $v$, the mode-$e$ unfolding $\mat_e(W_v)$ has rank equal to the bond dimension $r_e$.
\end{definition}

The following \cref{thm:no-spurious} highlights that spurious critical points can only occur under rank deficiency.

\begin{theoremframe}
\begin{restatable}{theorem}{nospurious}
\leanstatementlink{TTN/Landscape/FullRank.lean}{2020}
\label{thm:no-spurious}
Any critical point of $\Loss$ which is not a global minimum is not full Tucker rank.
\end{restatable}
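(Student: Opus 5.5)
We argue by contraposition: let $\theta$ be a critical point of $\Loss$ with full Tucker rank, and show $T(\theta)=T^*$. Write $R := T(\theta)-T^*$ for the residual. The key observation is that full Tucker rank makes the derivative of $\theta \mapsto T(\theta)$ at $\theta$ large enough that criticality forces $R$ to vanish along every direction the network can reach, and realizability then forces $R=0$.

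\textbf{Step 1 (gradient in terms of environments).} For each node $v$, $T(\theta)$ is linear in $W_v$: $T(\theta) = \langle W_v, E_v(\theta)\rangle$, where the environment $E_v$ is the contraction of all other node tensors. Criticality in $W_v$ says that contracting $R$ against all node tensors except $W_v$ gives zero. Cutting the tree at the bonds incident to $v$, this means: contracting $R$ against the subtree maps hanging below each child bond of $v$, and against the complementary map on the parent side, gives zero for every slot of $W_v$.

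\textbf{Step 2 (full rank makes subtree maps surjective/injective).} Root the tree and take an internal edge $e$ from $v$ up to its parent. Let $S_e$ be the subtree below $e$ and $\Phi_e:\bigotimes_{i\in S_e}\R^{d_i}\to\R^{r_e}$ the map computed by $S_e$. We show by induction from the leaves that full Tucker rank forces $\Phi_e$ to be surjective. The induction step uses that $\mat_e(W_v)$ has rank $r_e$, so $W_v$ is surjective onto $\R^{r_e}$ as a map on the tensor product of its inputs. Together with surjectivity of the children's maps $\Phi_{e'}$, whose tensor product is then surjective, this gives surjectivity of $\Phi_e$. Symmetrically, the complementary map $\Psi_e$ (everything outside $S_e$, with bond $e$ left open) has rank $r_e$ in the bond index; this uses the full rank of $\mat_e$ at the parent endpoint and an induction downward from the root. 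The condition is imposed at both endpoints of each bond precisely so that both inductions go through.

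\textbf{Step 3 (conclude).} Since $\Phi_{e'}$ is surjective and $\Psi_{\mathrm{out}(v)}$ has full rank, Step 1 gives that $R$, viewed as the multilinear form obtained by substituting into the open slots of $v$, vanishes on all inputs. Equivalently, $R$ is orthogonal to $\{\langle W'_v, E_v(\theta)\rangle : W'_v\}$, and by the rank properties this set is the full space $U_v$ of tensors whose unfoldings factor through the bonds at $v$. In particular, $\langle R, T(\theta)\rangle = 0$. To reach $\langle R, T^*\rangle = 0$, use realizability: $T^*$ has every bond unfolding of rank at most $r_e$. The main obstacle is that $T^*$ need not lie in $U_v$ at the specific subspaces determined by $\theta$. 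We would first try a leaf-first argument: at a leaf $v$ with external edge $i$, $R$ is orthogonal to everything of the form $A\otimes \Psi$ with $A$ arbitrary on leg $i$. Hence contracting $R$ with $\Psi_{\mathrm{out}(v)}$ over the complementary indices is zero, i.e.\ $R\cdot \Psi^\top=0$ on the complement. Combining these conditions over all nodes, we would show that the row and column spaces of each unfolding of $R$ are orthogonal to those of $T(\theta)$, in the style of the Baldi--Hornik / Kawaguchi argument for deep linear networks. If $T\ne T^*$, we then construct a perturbation of $\theta$ along a bond moving $T(\theta)$ toward $T^*$ with nonzero first-order derivative, contradicting criticality. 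This final step, exploiting the rank bound on $T^*$ without the minimum-norm assumption, is where most of the work lies; we would reduce it one bond at a time to the matrix case via the unfolding $\mat_e$, and invoke the DLN result that full-rank critical points are global.
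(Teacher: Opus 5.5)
\textbf{There is a genuine gap in Step 3.} Your Steps 1--2 are sound and use the same ingredients as the paper: the leaf stationarity condition $R^{(e)}H_e=0$ and full column rank of both cut factors (\cref{lem:full-env-rank}). Step 3, however, never closes, and the route you sketch for it fails.

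\emph{Why the orthogonality step fails.} You claim that combining the nodewise first-order conditions makes both the row space and the column space of every unfolding $R^{(e)}$ orthogonal to those of $T^{(e)}$, in the Baldi--Hornik style. That does not follow. At any cut $e$, the factors $F_e,H_e$ can only move within the image of their subtree parameterizations. So criticality yields $R^{(e)}H_e=0$ only when one side is a single free node (a leaf $v$). It never yields the companion condition $W_v^{\top}R^{(e)}=0$, which would need the entire other side to vary freely.

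Concretely, take a full-Tucker-rank three-leaf star $T=G\times_1A\times_2B\times_3C$ with $n_2>r_2$ and $n_3>r_3$. The direction $a\otimes b'\otimes c'$ with $0\neq a\in\im A$, $b'\perp\im B$, $c'\perp\im C$ is orthogonal to the whole tangent space. Yet $A^{\top}(a\otimes b'\otimes c')\neq0$. So first-order conditions alone do not give the orthogonality you want. For the same reason, reducing ``one bond at a time'' to the matrix case and quoting the DLN theorem does not work: TTN criticality supplies only half of the matrix stationarity conditions at any cut.

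\emph{The missing idea.} What you need is the way realizability enters, and it enters as a \emph{containment}, not an orthogonality. At a leaf $v$ with edge $e$, your condition $R^{(e)}H_e=0$ together with full column rank of $H_e$ gives $T^{*(e)}H_e=W_v(H_e^{\top}H_e)$. This has rank $r_e$ and image $\im W_v$. Since $\rank T^{*(e)}\le r_e$, it follows that $\im T^{*(e)}=\im W_v$, so the columns of $R^{(e)}$ lie \emph{inside} $\im W_v$.

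After a gauge making $W_v^{\top}W_v=I$, this gives $R=W_v\widetilde R$ with $\widetilde R=W_v^{\top}R$ and $\|\widetilde R\|_F=\|R\|_F$. Moreover, $\widetilde R$ is orthogonal to the tangent space of the tree with $v$ deleted, where $v$'s bond becomes an external leg of its parent. Full Tucker rank and realizability also pass to this smaller tree. The paper's proof is exactly this leaf-stripping induction on $|V|$, ending at a single node where the model is linear and criticality forces $R=0$. Your ``leaf-first'' instinct was right, but you stopped at the one-sided condition instead of combining it with the rank bound on $T^{*(e)}$ to strip the leaf.
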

\end{theoremframe}

See \cref{app:full-rank} for the proof. In particular, any saddle points must be confined to rank-deficient points. Intuitively, such rank-deficient points represent points of ``partial progress'' where the model's effective capacity is lower than its actual capacity, and could be realized by a TTN with lower bond dimensions. This generalizes the phenomenon observed in DLNs where saddle points occur when the model has learned some fraction of the singular values of the target \citep{saxe2013exact}.

\subsection{The Parity Example}
\label{subsec:parity-ttn}

Parity functions are a canonical example of a hard-to-learn target: the class of
parities is not learnable by gradient descent in polynomially many steps
(\cref{thm:sq-parity}), and parities are frequently analyzed as a difficult limiting scenario for training dynamics \citep{malach2019learning, barak2022hidden, abbe2023sgd}. Concretely, consider the parity function on all $n$ input bits, $x \mapsto x_1 \oplus \cdots \oplus x_n$. It is a read-once Boolean formula, and thus is realizable as a TTN of bond dimension $2$, by the
construction of \cref{subsec:expressivity}.\footnote{We take $n$ to be a power
of two, and $n \geq 4$; the construction may be extended to arbitrary $n$ by adjusting the
tree.} To see this, place the XOR tensor $W \in
(\R^2)^{\otimes 3}$ given by
\begin{align*}
    W_{0,0,0} = W_{0,1,1} = W_{1,0,1} = W_{1,1,0} = 1, \qquad
    W_{0,0,1} = W_{0,1,0} = W_{1,0,0} = W_{1,1,1} = 0,
\end{align*}
at every node of a balanced binary tree with $n/2$ leaves and one root,
for a total of $|V| = n - 1$ nodes. Each leaf has two external input modes and one bond mode, each
non-root internal node has two child bonds and one parent bond, and
the root has two child bonds and one external output mode. The convention
is that the third index of $W$ is the parent bond at internal nodes
and the external output mode at the root. The contraction at input
$(x_1, \ldots, x_n)$ produces the one-hot encoding of $x_1 \oplus
\cdots \oplus x_n$ at the output mode, so the full output tensor is
\[
    T^*_{x_1, \ldots, x_n, y}
    \;=\;
    \mathbf{1}[\,y = x_1 \oplus \cdots \oplus x_n\,]
    \;\in\;
    (\R^2)^{\otimes(n+1)}.
\]
\Cref{thm:no-spurious-local-min} therefore applies, asserting that no (minimum-norm)
spurious local minima obstruct learning of parity. However, this does not imply the learning problem here is without difficulty.

Consider a parameter $\theta_0$ obtained by restricting every
bond in the TTN to dimension $1$ and minimizing the squared loss
within this rank-$1$ stratum (a best rank-1 approximation). For concreteness, we take the
specific $\theta_0$ at which every node tensor $W$ has uniform entries
\[
    W_{a,b,c} \;=\; \tfrac{1}{2}
    \quad\text{for all } a, b, c \in \{0, 1\}.
\]
This represents a state of having ``halfway'' learned the parity target.
Nevertheless, even with partial learning progress, further learning
from this point may be quite difficult.

\begin{restatable}{proposition}{parityorder}
\label{prop:parity-order}
The point $\theta_0$ is a critical point of $\Loss$, and a saddle of
order at least $n/2$. In particular, there exist constants $C, \rho > 0$ such that
\[
    \Loss(\theta) - \Loss(\theta_0) \;\ge\; -C\,\|\theta-\theta_0\|^{n/2 + 1}
    \qquad\text{for all } \theta \in \Theta \text{ such that } \|\theta-\theta_0\| \le \rho.
\]
\end{restatable}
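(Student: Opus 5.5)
The plan is to expand the loss around $\theta_0$ in a $\pm$ (Fourier/character) basis of $\R^2$. In that basis the residual $T(\theta_0)-T^*$ is a single rank-one tensor in the ``minus'' direction. The uniform node tensor annihilates that direction in every mode, so any perturbation term that interacts with the residual must be of high degree.

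\textbf{Step 1: compute $T(\theta_0)$ and the residual.} Write $\mathbf 1=(1,1)^\top$ and $\mathbf m=(1,-1)^\top$. Then $W_0:=\tfrac12\,\mathbf 1^{\otimes 3}$, so $W_0(a,b)=\tfrac12\langle\mathbf 1,a\rangle\langle\mathbf 1,b\rangle\,\mathbf 1$.

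An induction up the tree shows that on basis inputs every node outputs $(\tfrac12,\tfrac12)$. Hence $T(\theta_0)=\tfrac12\,\mathbf 1^{\otimes(n+1)}$.

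On the other hand, $\mathbf 1[y=x_1\oplus\cdots\oplus x_n]=\tfrac12\bigl(1+(-1)^{x_1+\cdots+x_n+y}\bigr)$. This gives $T^*=\tfrac12\mathbf 1^{\otimes(n+1)}+\tfrac12\mathbf m^{\otimes(n+1)}$, and therefore
\[
R:=T(\theta_0)-T^*=-\tfrac12\,\mathbf m^{\otimes(n+1)}.
\]

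\textbf{Step 2: exact decomposition of the loss difference.} Set $\theta=\theta_0+\Delta$ with $\Delta=\{\Delta_v\}$. From \eqref{eq:pop_loss_main},
\[
\Loss(\theta)-\Loss(\theta_0)=\frac{1}{2|\mathcal X|}\Bigl(\|T(\theta)-T(\theta_0)\|_F^2+2\langle T(\theta)-T(\theta_0),R\rangle\Bigr).
\]
The first term is nonnegative, so it suffices to bound the cross term. Up to the factor $-\tfrac12$, that term equals the scalar obtained by contracting $T(\theta)$ with $\mathbf m$ on all $n+1$ external legs, i.e.\ the network evaluated on the input $\mathbf m$ with the output leg paired against $\mathbf m$. (The corresponding contraction of $T(\theta_0)$ vanishes.)

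By multilinearity, this scalar expands as a sum over subsets $S\subseteq V$: nodes in $S$ carry $\Delta_v$ and nodes outside $S$ carry $W_0$.

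\textbf{Step 3: degree count (the key step).} Any term with a leaf outside $S$ vanishes, because that leaf receives $\mathbf m$ on an external leg and $\langle\mathbf 1,\mathbf m\rangle=0$. Similarly, if the root is not in $S$, its output is a multiple of $\mathbf 1$, which pairs to zero with the $\mathbf m$ on the output leg.

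So every surviving term has all $n/2$ leaves and the root in $S$. Since $n\ge4$, the root is not a leaf, and each surviving term has degree at least $n/2+1$ in $\Delta$. The cross term is therefore a polynomial in $\Delta$ with only monomials of degree $\ge n/2+1$. Its total degree is at most $|V|=n-1$, and its coefficients are products of bounded entries of $W_0$.

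On the ball $\|\Delta\|\le\rho$ we can then bound $|\text{cross term}|\le C\|\Delta\|^{n/2+1}$, using $\|\Delta_v\|\le\|\Delta\|$ and absorbing the higher-degree monomials via $\rho$. Combined with the nonnegativity of the first term, this gives the stated inequality.

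\textbf{Step 4: criticality.} The first-order part of $\Loss(\theta_0+\Delta)-\Loss(\theta_0)$ comes only from the cross term, because the squared term is quadratic in $T(\theta)-T(\theta_0)=O(\|\Delta\|)$. The cross term has no monomials of degree below $n/2+1\ge3$. Hence $\nabla\Loss(\theta_0)=0$, and the inequality is exactly the order-$n/2$ statement.

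The main obstacle I expect is the bookkeeping in Step 3. One must make rigorous that a $W_0$ node kills an $\mathbf m$ input regardless of what the other inputs are. This holds because $W_0$ is rank one, $\tfrac12\mathbf 1^{\otimes3}$, so it factors through $\langle\mathbf 1,\cdot\rangle$ in each input mode. One must also handle the convention at the root, where the output mode is treated as an external leg. Beyond the bound itself, showing that $\theta_0$ is genuinely a saddle (not a local minimum) would follow from \cref{thm:no-spurious-local-min} or an explicit descent direction, but the proposition as stated only needs the lower bound.
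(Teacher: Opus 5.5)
Your proposal is correct and is essentially the paper's proof. Both arguments write $T(\theta_0)$ and $T^*$ in the Hadamard basis, so the residual is $-\tfrac12 u_-^{\otimes(n+1)}$, and then expand $T(\theta)-T(\theta_0)$ multilinearly over subsets $S\subseteq V$. The identity $\langle u_+,u_-\rangle=0$ kills every term in which a leaf or the root is unperturbed, so surviving terms have degree at least $n/2+1$. The differences are cosmetic: you read criticality off the missing low-degree monomials rather than from the inequality plus differentiability. Your aside that $\theta_0$ is not a local minimum does follow from \cref{thm:no-spurious-local-min}, once you note that $\theta_0$ is balanced and hence minimum norm.
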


\cref{cor:escape-paths} shows that a descent path from $\theta_0$ exists, but \Cref{prop:parity-order} shows that any such path has order at least $n/2$. Consequently, the saddle point is highly ``flat'', with flatness growing in input size. The proof of \Cref{prop:parity-order} is in \cref{app:parity}.

\section{Conclusion}

In this work, we propose tree tensor networks as a toy model family that can host more nontrivial learning behavior while remaining analytically tractable. To this end, we show that despite hosting worst-case targets that cannot be learned by gradient descent in polynomial time (\cref{sec:setup}), tree tensor networks (conditionally) lack suboptimal local minima (\cref{sec:benign}). We suggest that what makes worst-case targets hard to learn may be that they have very degenerate saddle points in their loss landscape (\cref{sec:where-hardness}).

Tractable toy models have repeatedly shaped our understanding of deep learning, and the development of better and more illustrative toy models remains one of the most important open problems in the field \cite{simon2026there}. Deep linear networks alone have helped to clarify the roles of depth, initialization, and implicit regularization \citep{saxe2013exact, kawaguchi2016deep, arora2019implicit}. We believe tree tensor networks can play this role for questions existing toy models have not been able to address.

Empirical results suggest that many capabilities in deep neural networks are performed by mechanistic structures \citep{olah2020zoom, olsson2022context,nanda2023progress}, but our understanding of how and why such structures emerge over the course of training is limited.
A toy model can only address this question if it can represent such structures in the first place. Deep linear networks, which compute only linear maps, cannot, while tree tensor networks can, since they are able to represent nonlinear computation (read-once Boolean formulas) while remaining analytically tractable.

Our analysis of the parity function (\cref{subsec:parity-ttn}), for example, provides a proof-of-concept for how computational structure can influence loss landscape geometry in tree tensor networks. We hope that future work can study this interplay more systematically, and expect the mathematical tools developed here (e.g. \cref{app:minimum-norm}) to be useful in that effort. For instance, \citet{saxe2013exact} provided closed-form solutions to learning dynamics in deep linear networks, and an analogous result in tree tensor networks would allow us to characterize exactly how, why, and when computational structure emerges in such networks.

\section*{Acknowledgements}

We would like to thank Daniel Murfet for his advice and guidance on this project. We would also like to thank Rumi Salazar, Guillaume Corlouer, and Daniel Wilhelm for valuable discussions and helpful feedback on this manuscript.

Zach Furman was supported by the Melbourne Research Scholarship and Rowden White Scholarship during the completion of this research.

\iftr
\bibliographystyle{plainnat}
\else
\bibliographystyle{plainnat}
\fi
\bibliography{references}

\newpage
\appendix
\crefalias{section}{appendix}
\crefalias{subsection}{appendix}
\part*{Appendix}

\paragraph{Background and setup.}
\begin{itemize}
    \item \textbf{\cref{app:multilinear-background}} reviews the multilinear algebra used throughout, reconciling the feedforward (composition) view of TTNs from the main text with the undirected (tensor-contraction) view used in the proofs.
    \item \textbf{\cref{app:setup}} records basic setup used throughout the proofs: edge matricizations, cut factorizations, gauge freedom, and the characterization of realizability as a rank condition.
\end{itemize}

\paragraph{Landscape results.}
\begin{itemize}
    \item \textbf{\cref{app:full-rank}} proves \cref{thm:no-spurious}: critical points that are not global minima cannot have full Tucker rank.
    \item \textbf{\cref{app:main}} proves the main theorem, \cref{thm:no-spurious-local-min}: every minimum-norm local minimum is a global minimum.
    \item \textbf{\cref{app:escape}} proves \cref{cor:escape-paths}: even without the minimum-norm assumption, from every non-optimal point there is a continuous non-increasing path that strictly decreases the loss.
\end{itemize}

\paragraph{The minimum-norm condition.}
\begin{itemize}
    \item \textbf{\cref{app:minimum-norm}} studies the minimum-norm condition itself, using geometric invariant theory: minimum norm is equivalent to the \textit{Kempf-Ness condition} and a \textit{balancedness} condition (\cref{prop:min-norm-equivalences}), and it is preserved by gradient flow (\cref{prop:min-norm-preserved}).
    \item \textbf{\cref{app:counterexample}} shows the assumption is necessary: an explicit three-leaf star (Tucker) TTN with a spurious local minimum (\cref{thm:counterexample}).
\end{itemize}

\paragraph{Hardness in practice.}
\begin{itemize}
    \item \textbf{\cref{app:parity}} proves \cref{prop:parity-order}, characterizing the parity saddle point from \cref{subsec:parity-ttn}.
    \item \textbf{\cref{app:empirical-population-gap}} discusses why population-loss geometry bears on sample-based learning, and how degenerate saddles can slow stochastic gradient methods.
\end{itemize}

\paragraph{Formal verification.}
\begin{itemize}
    \item \textbf{\cref{app:formalisation}} describes the autoformalization of \cref{thm:no-spurious-local-min,thm:no-spurious}
    in \textsc{Lean}, its verification and correspondence with the
    manuscript, and the case for formalizing theoretical results
    about toy models in machine learning.
\end{itemize}

\vspace{0.2in}

\section{Background: Multilinear Algebra and Tensor Networks}
\label{app:multilinear-background}

The main text introduces tree tensor networks (TTNs) intuitively as feedforward models: directed trees where each node computes a multilinear map, and the network output is obtained by function composition. However, the geometric and algebraic proofs in \cref{app:setup} through \cref{app:parity} rely on an undirected, coordinate-based framework where networks are evaluated via \emph{tensor contraction}.

This appendix reviews the basic multilinear algebra foundations that justify transitioning between these two perspectives. Our treatment is necessarily abridged, and we direct the reader towards standard resources such as \citet{merris1997multilinear, roman2005advanced, lim2021tensors} for proofs and further discussion.

\subsection{Tensors, Arrays, and Multilinear Maps}

In the context of machine learning, we work over finite-dimensional Euclidean spaces \(\R^d\) equipped with the standard inner product and standard basis. The standard inner product gives an isomorphism
\[
    \R^d \cong (\R^d)^*,
\]
and the standard basis identifies elements of tensor product spaces with multidimensional arrays. Throughout the paper, we use these standard identifications freely.

Thus, in our setting, we move between three equivalent representations of the same data:
\begin{enumerate}
    \item \textbf{Tensors:} elements of a tensor product space \(\R^{d_1} \otimes \cdots \otimes \R^{d_k}\).
    \item \textbf{Arrays:} multidimensional grids of real numbers \(W \in \R^{d_1 \times \cdots \times d_k}\).
    \item \textbf{Multilinear maps:} functions \(\R^{d_1} \times \cdots \times  \R^{d_n} \rightarrow \R^{d_{n+1}} \otimes \cdots \otimes  \R^{d_k}\) linear in each of their $n$ arguments when all other arguments are held fixed.
\end{enumerate}

The equivalence between tensors and arrays is the usual coordinate representation. The equivalence between tensors and multilinear maps uses the Euclidean identifications \(\R^{d_i} \cong (\R^{d_i})^*\). Once these identifications are fixed, we may partition the \(k\) modes of an array into a set of input modes \(\mathcal I\) and output modes \(\mathcal O\).

\begin{proposition}
\label{prop:map-as-tensor}
Fix a partition \(\mathcal I \sqcup \mathcal O = \{1,\dots,k\}\). Using the standard Euclidean identifications of input spaces with their duals, there is a standard isomorphism between
    $\bigotimes_{i=1}^k \R^{d_i}$
and the space of multilinear maps
\[
    W:
    \prod_{i\in \mathcal I} \R^{d_i}
    \longrightarrow
    \bigotimes_{j\in \mathcal O} \R^{d_j}.
\]
\end{proposition}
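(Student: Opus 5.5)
We prove \cref{prop:map-as-tensor} by building an explicit linear map, checking it is well defined, and exhibiting its inverse. The tool is the universal property of the tensor product together with the canonical isomorphisms
\[
    \bigotimes_{i=1}^k \R^{d_i} \;\cong\; \Bigl(\bigotimes_{i\in\mathcal I} \R^{d_i}\Bigr) \otimes \Bigl(\bigotimes_{j\in\mathcal O} \R^{d_j}\Bigr) \;\cong\; \operatorname{Hom}\Bigl(\bigotimes_{i\in\mathcal I} (\R^{d_i})^*,\; \bigotimes_{j\in\mathcal O}\R^{d_j}\Bigr).
\]
The first is a reordering of factors, which the standard basis makes explicit. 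The second is the finite-dimensional identification $U\otimes Y \cong \operatorname{Hom}(U^*,Y)$.

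\textbf{Step 1.} Apply the Euclidean identification $(\R^{d_i})^*\cong \R^{d_i}$, $x\mapsto \langle x,\cdot\rangle$, factorwise. This turns the Hom space into $\operatorname{Hom}\bigl(\bigotimes_{i\in\mathcal I}\R^{d_i}, \bigotimes_{j\in\mathcal O}\R^{d_j}\bigr)$.

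\textbf{Step 2.} Invoke the universal property of $\bigotimes_{i\in\mathcal I}\R^{d_i}$. Composition with the canonical multilinear map $(x_i)_{i\in\mathcal I}\mapsto \bigotimes_i x_i$ is a linear bijection from linear maps out of the tensor product onto multilinear maps $\prod_{i\in\mathcal I}\R^{d_i}\to \bigotimes_{j\in\mathcal O}\R^{d_j}$.

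Composing Steps 1 and 2 gives the claimed isomorphism. In coordinates it reads
\[
    W(x^{(i)})_{i\in\mathcal I} \;=\; \sum_{(a_i)_{i\in\mathcal I}} W_{a_1\cdots a_k}\prod_{i\in\mathcal I} x^{(i)}_{a_i},
\]
with the output indices $(a_j)_{j\in\mathcal O}$ left free. This is exactly the contraction formula used later, which reconciles the composition view with the contraction view.

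\textbf{Step 3.} Check bijectivity directly.
\begin{itemize}
    \item \emph{Injectivity:} the array is recovered by evaluating $W$ on standard basis vectors $(e_{a_i})_{i\in\mathcal I}$ and reading off output coordinates.
    \item \emph{Surjectivity:} given any multilinear map, define the array entries by that same evaluation. Multilinearity then forces the map to agree with the coordinate formula everywhere.
\end{itemize}
As a consistency check, both spaces have dimension $\prod_{i=1}^k d_i$.

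The only delicate point is bookkeeping. We must confirm that the isomorphism depends only on the partition and the fixed standard bases, meaning the reordering of factors is canonical and the inner-product identification is basis-compatible. The main obstacle is really notational rather than mathematical: stating clearly which identifications are used so that later appendices can freely move between arrays, tensors, and maps.
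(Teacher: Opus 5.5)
Your proposal is correct and is the standard argument the paper implicitly relies on. The paper gives no proof of this background fact and defers to standard multilinear algebra references, where the isomorphism is established via the universal property of the tensor product, the finite-dimensional identification $U\otimes Y\cong\operatorname{Hom}(U^*,Y)$, and the Euclidean identification $\R^{d}\cong(\R^{d})^*$.

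Your Step 3 (evaluating on standard basis vectors and expanding by multilinearity) is a complete elementary proof by itself. Steps 1--2 add a coordinate-free explanation of why that bijection is canonical, but they are not logically needed for it.
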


This licenses us to switch between viewing \(W\) as a static array of parameters and as a computational map. The choice of directionality, meaning which modes are regarded as inputs and which are regarded as outputs, is not an intrinsic property of the tensor \(W\), and can be changed arbitrarily.

\begin{example}
An order-3 tensor $W \in \R^{d_1} \otimes \R^{d_2} \otimes \R^{d_3}$ can be interpreted in several equivalent ways:
\begin{itemize}
    \item A tensor in $\R^{d_1} \otimes \R^{d_2} \otimes \R^{d_3}$ (zero inputs, three outputs)
    \item A linear map in $\R^{d_1} \to \R^{d_2} \otimes \R^{d_3}$ (one input, two outputs).
    \item A bilinear map in $\R^{d_1} \times \R^{d_2} \to \R^{d_3}$ (two inputs, one output).
    \item A trilinear map in $\R^{d_1} \times \R^{d_2} \times \R^{d_3} \to \R$ (three inputs, zero outputs).
\end{itemize}
\end{example}

\subsection{Composition Is Tensor Contraction}

When we view tensors as multilinear maps, we compose them via ordinary function composition. When we view them as arrays, we compose them via tensor contraction.

\begin{definition}[Tensor contraction]
\label{def:tensor-contraction}
Let
\[
    A \in \R^{d_1 \times \cdots \times d_p \times r},
    \qquad
    B \in \R^{r \times e_1 \times \cdots \times e_q}.
\]
The \emph{contraction} of \(A\) and \(B\) along their shared \(r\)-dimensional
mode is the tensor
\[
    C = A \times_r B
    \in
    \R^{d_1 \times \cdots \times d_p \times e_1 \times \cdots \times e_q}
\]
with entries
\[
    C_{i_1,\ldots,i_p,j_1,\ldots,j_q}
    =
    \sum_{\alpha=1}^r
    A_{i_1,\ldots,i_p,\alpha}
    B_{\alpha,j_1,\ldots,j_q}.
\]
More generally, one may contract any chosen mode of \(A\) with any chosen mode
of \(B\) of the same dimension, after reordering the modes.
\end{definition}

If \(A\) and \(B\) are vectors, contraction is the standard inner product, while for matrices \(A\) and \(B\), contraction over one mode is standard matrix multiplication. Crucially, contraction is the algebraic mechanism that computes the composition of multilinear maps (\cref{prop:comp-is-contraction}).

\begin{proposition}
\label{prop:comp-is-contraction}
Let \(f\) and \(g\) be multilinear maps such that the output space of \(f\) matches one of the input spaces of \(g\). Let \(A_f\) and \(A_g\) be their corresponding coordinate arrays. Then the multilinear map obtained by composition,
\[
    (x,y) \mapsto g(f(x),y),
\]
corresponds exactly to the array obtained by contracting \(A_f\) and \(A_g\) along their shared mode.
\end{proposition}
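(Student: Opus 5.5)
We verify the identity entrywise, reducing both sides to the same finite sum. Fix standard bases and write $f : \prod_{i} \R^{d_i} \to \R^{r}$ with array $A_f \in \R^{d_1 \times \cdots \times d_p \times r}$. By \cref{prop:map-as-tensor}, this means
\[
f(x^{(1)},\dots,x^{(p)})_\alpha = \sum_{i_1,\dots,i_p} (A_f)_{i_1,\dots,i_p,\alpha}\, x^{(1)}_{i_1}\cdots x^{(p)}_{i_p}.
\]
Similarly, write $g : \R^r \times \prod_j \R^{e_j} \to \bigotimes_k \R^{c_k}$ with array $A_g \in \R^{r \times e_1 \times \cdots \times e_q \times c_1 \times \cdots}$, so that
\[
g(u,y^{(1)},\dots)_{\kappa} = \sum_{\alpha, j_1,\dots} (A_g)_{\alpha,j_1,\dots,\kappa}\, u_\alpha\, y^{(1)}_{j_1}\cdots .
\]
If the matched input of $g$ is not its first mode, we first reorder modes; this is harmless, as noted after \cref{def:tensor-contraction}.

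Substitute $u = f(x)$ into the expression for $g$. Since all sums are finite, we may interchange the order of summation. This gives
\[
g(f(x),y)_\kappa = \sum_{i,j} \Big(\sum_{\alpha=1}^r (A_f)_{i,\alpha}(A_g)_{\alpha,j,\kappa}\Big) \prod x^{(\cdot)}_{i_\cdot}\prod y^{(\cdot)}_{j_\cdot}.
\]
The parenthesized coefficient is exactly the entry $(A_f \times_r A_g)_{i,j,\kappa}$ from \cref{def:tensor-contraction}.

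The composite map is multilinear in $(x,y)$, because it is linear in each argument separately. Under the isomorphism of \cref{prop:map-as-tensor}, the array associated to a multilinear map is recovered by evaluating the map on tuples of standard basis vectors. Taking $x^{(a)} = e_{i_a}$ and $y^{(b)} = e_{j_b}$ in the displayed formula isolates the coefficient $(A_f\times_r A_g)_{i,j,\kappa}$. Hence the array of the composite map equals the contraction of $A_f$ and $A_g$ along the shared mode.

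There is no real obstacle here. The only care needed is bookkeeping of mode order: we must make sure the output modes of $g$ and the partition into inputs and outputs from \cref{prop:map-as-tensor} are carried through consistently. We also need the Euclidean identification $\R^d \cong (\R^d)^*$, so that pairing the output of $f$ with an input slot of $g$ is the plain sum over $\alpha$, with no metric factors.
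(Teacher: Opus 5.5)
Your proof is correct: substituting $u=f(x)$, interchanging the finite sums, and reading off coefficients on standard basis vectors shows that the composite map's array is exactly $A_f\times_r A_g$, and you handle mode reordering and the Euclidean identification appropriately. The paper gives no proof of this proposition; it treats it as a standard fact and defers to the multilinear algebra texts it cites, so your entrywise computation is exactly the routine argument it leaves to the reader.
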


For a fixed pattern of internal contractions, the final tensor is independent of the order in which those contractions are performed, up to the ordering of the remaining uncontracted modes. Thus tensor contraction effectively performs function composition, but without requiring a global choice of computational direction.

\subsection{Equivalence of TTN Definitions}

We can now bridge the directed, composition-based definition of TTNs used in the main text (\cref{def:ttn}) with the undirected, contraction-based definition used in the proofs (\cref{def:ttn-alternate}). Definitions of this form are more common in the literature, and generalize better to non-tree networks.

\begin{definition}[Tree tensor network, equivalent definition]
\leanstatementlink{TTN/Contraction.lean}{36}
\label{def:ttn-alternate}
A \emph{tree tensor network} consists of:
\begin{itemize}[leftmargin=*]
    \item a finite set $V$ of \emph{nodes} and a finite set $E$ of \emph{edges}, where each $e \in E$ has either one or two endpoints in $V$, such that the subnetwork of two-ended edges is a tree on $V$;
    \item a dimension $d_e \in \Z_{>0}$ for each edge $e \in E$; and
    \item a tensor $W_v \in \bigotimes\limits_{e \,\ni\, v} \R^{d_e}$ at each node $v \in V$, with one mode for each incident edge $e$.
\end{itemize}

\vspace{-0.2em}
Edges with two endpoints are \emph{internal} (or \emph{bonds}, with \emph{bond dimension} $r_e := d_e$) and edges with one endpoint \emph{external}. The \emph{represented tensor} $T(\theta)$ is obtained by tensor contraction along all internal edges.
\end{definition}

\textbf{Equivalence.}
To recover \cref{def:ttn} from \cref{def:ttn-alternate}, choose a root node. This choice orients each internal edge toward the root. For each non-root node \(v\), the edge connecting \(v\) to its parent is regarded as the output mode, while the remaining incident modes are regarded as input modes. At the root, all incident internal modes are regarded as input modes, and the root tensor produces a scalar.

By \cref{prop:map-as-tensor}, each node tensor can therefore be interpreted as a multilinear map. By \cref{prop:comp-is-contraction}, composing these multilinear maps from the leaves to the root computes the same represented tensor as contracting the undirected network along its internal edges.

Consequently, the root and orientation used in the main-text intuition are presentation choices. The undirected, contraction-based definition \cref{def:ttn-alternate} gives the same tensor network formalism as \cref{def:ttn}, but is often easier to manipulate algebraically. We therefore adopt this undirected perspective for the remainder of the appendix.

\begin{remark}
This algebraic flexibility is most clearly seen within graphical notation such as the \textit{Penrose graphical calculus}, or \textit{string diagrams} in the symmetric monoidal category \textbf{FdVect}, which formalize the standard visual language for tensor networks \citep{penrose1971applications, biamonte2011categorical}. Tensors are drawn as nodes, and modes as edges. Contraction is represented by connecting edges. Once the tensors and their labelled modes are fixed, the resulting contraction depends on which modes are connected, not on an arbitrary choice of data-flow direction. This is why tensor network diagrams are often drawn as undirected graphs.
\end{remark}

\section{Additional Setup}
\label{app:setup}

Here we record some facts and definitions around tree tensor networks which will be useful for the proofs: edge matricizations, cut factorizations, gauge freedom, and an equivalent characterization of realizability.

\paragraph{Notation.}

For each node $v \in V$, we write $n_v := \prod_{e \in E_v^{\mathrm{ext}}} d_e$ for the combined dimension of the external edges incident to $v$, where $E_v^{\mathrm{ext}}$ denotes the set of external edges at $v$ (with $n_v = 1$ when $v$ has no external edges). We denote the \emph{residual} of the loss $\Loss(\theta) = \frac{1}{2|\mathcal{X}|}\|T(\theta) - T^*\|_F^2$ by $R := T(\theta) - T^*$. Since the normalization constant $\frac{1}{|\mathcal{X}|}$ is fixed and positive, it has no effect on the loss landscape: rescaling $\Loss$ by a positive constant preserves critical points, local and global minima, saddle orders, and descent directions. Throughout the appendices we therefore work with the un-normalized loss $\tfrac12\|T(\theta) - T^*\|_F^2$, which (abusing notation) we continue to denote by $\Loss$.

\paragraph{Edge matricization.}

For any edge $e \in E$, removing $e$ bipartitions the nodes into sets
$V_e$ and $V_e^c$, because the network is a tree. After choosing an ordering of the external modes on
each side, this gives the \emph{edge matricization}
\[
    T^{(e)}
    \in
    \R^{m_e \times m_e^c},
    \qquad
    m_e = \prod_{v \in V_e} n_v,
    \qquad
    m_e^c = \prod_{v \in V_e^c} n_v.
\]

\paragraph{Cut factorization.} For a TTN parameter $\theta$, cutting the network at $e$ produces two subtrees, as the network is a tree. Contracting the two resulting subtrees with the bond index left open gives
matrices
    $F_e \in \R^{m_e \times r_e}$, and
    $H_e \in \R^{m_e^c \times r_e}$,
such that
    $T(\theta)^{(e)} = F_e H_e^{\top} .$

\paragraph{Gauge freedom.} The cut factorization also exposes a basic non-identifiability of the
parameterization. If \(M \in \GL(r_e)\), then
    $F_e \mapsto F_e M$, and $H_e \mapsto H_e M^{-\top}$
leaves the represented tensor unchanged, since
    $(F_e M)(H_e M^{-\top})^{\top} = F_e H_e^{\top} .$
Equivalently, this operation changes basis on the bond space associated with
\(e\) and so one may apply \(M\) to the bond-\(e\) mode on one side of the edge and
the inverse transformation on the other side. These transformations are the
\emph{gauge freedom} of the TTN parameterization.
The \emph{fiber} over an output tensor \(T_0\) is
    $\mathcal F_{T_0}
    :=
    \{\theta \in \Theta : T(\theta)=T_0\}.$
Thus gauge transformations move within a fiber, although a fiber may contain
parameter points not related by gauge transformations.

\paragraph{Realizability.} Recall that we define a tensor $T^*$ to be \textit{realizable} by a given TTN $T(\theta)$ if there exists at least one parameter $\theta^* \in \Theta$ such that $T(\theta^*) = T^*$ (\cref{sec:setup}). We now provide an alternative characterization of the realizability condition. This characterization is well-known (see e.g. \citet{grasedyck2011introduction}) but we prove it here for our specific setup.

\begin{proposition}
\label{prop:realizability-equivalence}
Let $r_e^* := \rank(T^{*(e)})$ for each internal edge \(e\). A tensor \(T^*\) is realizable by the TTN architecture
with bond dimensions \(\{r_e\}\) if and only if $r_e^* \le r_e$
for every internal edge \(e\).

Moreover, whenever these conditions hold, the realization may be chosen to use
only \(r_e^*\) bond directions at each edge. More precisely, for any collection
of \(r_e^*\)-dimensional subspaces
$S_e \subseteq \R^{r_e}$,
there exists a realization \(\theta^* \in \Theta\) of \(T^*\) such that each endpoint
tensor of \(e\) is supported on \(S_e\) in its \(e\)-mode. For this realization,
the cut factors satisfy
\[
    \rank(F_e^*)=\rank(H_e^*)=r_e^*
\]
for every internal edge \(e\).
\end{proposition}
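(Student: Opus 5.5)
Necessity is immediate from the cut factorization. If $\theta^*$ realizes $T^*$, then for every internal edge $e$ we have $T^{*(e)} = F_e H_e^\top$ with $F_e \in \R^{m_e \times r_e}$. Hence $r_e^* = \rank(T^{*(e)}) \le r_e$.

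For sufficiency we build a realization directly with bond dimension $r_e^*$ at each edge, using the standard hierarchical (HT/TT-SVD) construction. Root the tree at an arbitrary node $\rho$. For each non-root node $v$, let $e_v$ be its parent edge and $V_{e_v}$ the subtree below it. Write $U_v \in \R^{m_{e_v} \times r_{e_v}^*}$ for a matrix whose columns form a basis of the column space of $T^{*(e_v)}$, i.e.\ the span of the "subtree fibers" of $T^*$.

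The key nesting fact is the following. Suppose $v$ has children $c_1,\dots,c_k$ and external modes at $v$ with combined dimension $n_v$. Then
\[
\operatorname{col}(U_v) \subseteq \R^{n_v} \otimes \operatorname{col}(U_{c_1}) \otimes \cdots \otimes \operatorname{col}(U_{c_k}).
\]
To see this, note that any column of $T^{*(e_v)}$, reshaped with the modes of $V_{c_j}$ grouped together, has its $c_j$-grouped mode lying in the column space of $T^{*(e_{c_j})}$. This holds because fixing the complementary indices selects a column of $T^{*(e_{c_j})}$. Intersecting these conditions over $j$ gives the tensor-product containment. This is the main step to state carefully; the rest is bookkeeping.

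With the nesting fact in hand, we define the node tensors bottom-up.
\begin{itemize}
\item At a leaf, set $W_v = U_v$, viewed as a tensor with its external modes and the bond mode.
\item At an internal non-root node, the containment gives a unique coefficient tensor $W_v \in \R^{n_v} \otimes \bigotimes_j \R^{r_{c_j}^*} \otimes \R^{r_{e_v}^*}$ such that contracting $W_v$ with the $U_{c_j}$ yields $U_v$.
\item At the root, $T^*$ itself lies in $\R^{n_\rho} \otimes \bigotimes_j \operatorname{col}(U_{c_j})$ by the same argument. So the coefficient tensor $W_\rho$ exists and the contraction reproduces $T^*$.
\end{itemize}
By induction, the subtree below $v$ contracts to $U_v$.

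Finally, we embed the construction into the given dimensions $r_e \ge r_e^*$. For each internal edge $e$, fix an isometric embedding $Q_e : \R^{r_e^*} \to \R^{r_e}$ with image $S_e$. We apply $Q_e$ to the $e$-mode of the child endpoint tensor and $Q_e$ to the $e$-mode of the parent endpoint tensor. Since $Q_e^\top Q_e = I$, the contraction over the larger bond equals the original contraction, so $T$ is unchanged. Both endpoint tensors are then supported on $S_e$ in their $e$-mode.

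For the rank claim:
\begin{itemize}
\item On the child side of each edge, $F_e^* = U_v Q_e^\top$, which has rank $r_e^*$.
\item We have $T^{*(e)} = F_e^* H_e^{*\top}$, and $T^{*(e)}$ has rank $r_e^*$.
\item $H_e^*$ has at most $r_e^*$ nonzero-supported columns, namely those in $S_e$.
\end{itemize}
Together these force $\rank(H_e^*) = r_e^*$, and by symmetry of the roles the same holds on both sides.
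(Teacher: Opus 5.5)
Your proposal is correct and follows essentially the same route as the paper: necessity from the cut factorization $T^{*(e)} = F_e H_e^{\top}$, then sufficiency via the nesting $\im(T^{*(e)}) \subseteq \R^{n_u}\otimes U_{e_1}\otimes\cdots\otimes U_{e_k}$ with bottom-up coefficient tensors, isometric embedding of each bond onto $S_e$, and the rank sandwich for the cut factors. One phrase should be tightened: since $S_e$ need not be a coordinate subspace, replace ``nonzero-supported columns in $S_e$'' with the statement that $H_e^*$ (like $F_e^*$) vanishes on $S_e^\perp$, which is what gives $\rank(H_e^*) \le r_e^*$.
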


\begin{proof}
The forward direction follows from the $T(\theta)^{(e)} = F_e H_e^{\top}$ factorization above, which
gives $\rank(T(\theta)^{(e)}) \le r_e$.
For the converse, root the tree at an arbitrary internal node. For each directed edge $e$, let $V_e$ denote the nodes in the subtree below $e$, and
define
    $U_e
    :=
    \im(T^{*(e)})
    \subseteq
    \bigotimes_{v \in V_e} \R^{n_v}.$
By assumption,
    $\dim U_e = \rank(T^{*(e)}) \le r_e $.
We claim that these spaces are nested. Precisely, let $e$ have lower endpoint $u$ with
child edges $e_1,\ldots,e_k$ and $n_u$ external dimensions. Then we claim that $U_e \subseteq \R^{n_u} \otimes U_{e_1} \otimes \cdots \otimes U_{e_k}.$ 
Each element of $U_e$ is $T^*$ contracted against a fixed vector on the modes
outside the subtree, hence a tensor over the modes of $V_e = \{u\} \cup
\bigcup_j V_{e_j}$. Its fibers along a child edge $e_j$ are themselves partial
contractions of $T^*$ across $e_j$, so they lie in $U_{e_j} = \im(T^{*(e_j)})$.
A tensor all of whose mode-$e_j$ fibers lie in $U_{e_j}$ (for every $j$) lies in
$\R^{n_u} \otimes U_{e_1} \otimes \cdots \otimes U_{e_k}$, which is the claim.

Now choose a basis of each $U_e$. The inclusion above writes each basis vector of
$U_e$ as a tensor in $\R^{n_u}$ and the child bases; take these coefficients as
the node tensor at $u$. At the root $\rho$ the same nestedness gives $T^* \in
\R^{n_\rho} \otimes U_{e_1} \otimes \cdots \otimes U_{e_k}$; expand $T^*$ in the chosen bases and
use the coefficients as the root tensor. Contracting from the leaves upward, the
open bond on each edge $e$ ranges over the chosen basis of $U_e$, so the root
expansion reconstructs $T^*$. %

This constructs a realization with bond dimension exactly \(r_e^*\) on every
internal edge. To place these bond spaces (dimension $r_e^*$) inside the prescribed ambient bond spaces (dimension $r_e$), choose for each \(e\) an isometry
\[
    \iota_e:\R^{r_e^*}\hookrightarrow\R^{r_e}
\]
with image \(S_e\), and apply \(\iota_e\) to the \(e\)-mode of both endpoint
tensors. Since each contraction across \(e\) then factors through
\(\iota_e^{\top}\iota_e=I_{r_e^*}\), the represented tensor remains \(T^*\), while
the endpoint tensors are supported on \(S_e\).

Consequently, the corresponding cut factors \(F_e^*\) and \(H_e^*\) both vanish
on \(S_e^\perp\), and hence have rank at most \(r_e^*\). On the other hand, the cut factorization
\[
    T^{*(e)}=F_e^*H_e^{*\top}
\]
has rank \(r_e^*\), so both factors must have rank at least \(r_e^*\).
Therefore
\[
    \rank(F_e^*)=\rank(H_e^*)=r_e^*.
\]
\end{proof}

We freely use this equivalent characterization of realizability going forward.

\section{No Spurious Critical Points at Full Tucker Rank}
\label{app:full-rank}

In this section we prove that full Tucker rank critical points must be global minima; or contrapositively, that critical points which are not global minima must be rank-deficient (\cref{thm:no-spurious}). The proof proceeds by induction, stripping leaf nodes until the network is reduced to a single tensor, where criticality trivially guarantees global optimality. The induction relies on a property of full-rank networks: at any leaf, the remainder of the tree acts as a full-rank linear map (\cref{lem:full-env-rank}). Informally, this non-degeneracy, together with criticality and realizability, constrains the network to match the target at each leaf. We are then able to inductively remove each leaf, transferring the loss, criticality, full-rank, and realizability conditions to a strictly smaller tree.

\begin{lemma}
\label{lem:full-env-rank}
Under the full Tucker rank assumption, for any edge $e$ and either subtree on one side of $e$, the corresponding factor matrix ($F_e$ or $H_e$) has full column rank $r_e$.
\end{lemma}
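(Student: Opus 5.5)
We argue by induction on the size of the subtree on the chosen side of $e$. Orient $e$ so that the subtree in question, say $V_e$ with factor $F_e \in \R^{m_e \times r_e}$, hangs below $e$. Let $u$ be the endpoint of $e$ in $V_e$. It has external edges with combined dimension $n_u$ and child bonds $e_1,\dots,e_k$ leading into the subtrees $V_{e_1},\dots,V_{e_k}$, which partition $V_e\setminus\{u\}$. Contracting from the leaves upward gives the factorization
\[
    F_e = \bigl(I_{n_u} \otimes F_{e_1} \otimes \cdots \otimes F_{e_k}\bigr)\, \mat_e(W_u)^{\top},
\]
after reordering rows. Here $\mat_e(W_u) \in \R^{r_e \times (n_u r_{e_1}\cdots r_{e_k})}$ is the mode-$e$ unfolding of $W_u$, and $F_{e_j} \in \R^{m_{e_j}\times r_{e_j}}$ are the cut factors of the child subtrees taken with respect to their own bonds $e_j$. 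This identity is just \cref{prop:comp-is-contraction} applied at $u$.

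For the base case, $u$ is a leaf with respect to this orientation, so $k=0$. Then $F_e = \mat_e(W_u)^{\top}$, up to row ordering. This has rank $r_e$ directly by \cref{def:full-tucker-rank}, since $e$ is an internal edge incident to $u$.

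For the inductive step, assume each $F_{e_j}$ has full column rank $r_{e_j}$. This is the induction hypothesis, applied to the strictly smaller subtree $V_{e_j}$ below the internal edge $e_j$. A Kronecker product of matrices with full column rank has full column rank, because $\rank(A\otimes B)=\rank A\cdot\rank B$. Hence $K := I_{n_u}\otimes F_{e_1}\otimes\cdots\otimes F_{e_k}$ is injective. Injective maps preserve rank under left multiplication, so $\rank F_e = \rank(\mat_e(W_u)^{\top}) = r_e$, again by full Tucker rank at the endpoint $u$ of $e$. The same argument, with the roles of the two sides swapped, handles $H_e$ and the subtree $V_e^c$.

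The only step needing care is bookkeeping rather than mathematics. We must check that the displayed factorization holds with a consistent ordering of the external modes, and that the Kronecker factor ordering matches the unfolding convention. The row reordering is a permutation, so it does not affect rank, and I would simply state it as such. If $e$ is external, the claim is either vacuous or not needed, because \cref{def:full-tucker-rank} constrains only internal edges. The lemma should therefore be read for internal $e$, and the induction only ever descends along internal edges $e_j$.
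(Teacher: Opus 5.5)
Your proof is correct and is essentially the paper's own argument. It uses induction on subtree size, factors the cut factor as $(I_{n_u}\otimes F_{e_1}\otimes\cdots\otimes F_{e_k})$ applied to the mode-$e$ unfolding of $W_u$, and relies on injectivity of a Kronecker product of full-column-rank matrices to get rank $r_e$ from $\mat_e(W_u)$; restricting to internal edges $e$ is a sensible clarification, since that is the only case the paper uses.
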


\begin{proof}
The proof follows by induction on the size of the subtree. We prove the result for $H_e$, and the result for $F_e$ follows identically. First, for the base case, if the subtree is a single leaf $v$, then $H_e = W_v \in \R^{n_v \times r_e}$, which has rank $r_e$ by full Tucker rank. Now for the inductive step, suppose node $u$ is the root of the subtree, with edge $e$ connecting $u$ to the rest of the tree and edges $e_1, \ldots, e_k$ connecting $u$ to its child subtrees (with factors $H_{e_1}, \ldots, H_{e_k}$). Then
\[
H_e = (I_{n_u} \otimes H_{e_1} \otimes \cdots \otimes H_{e_k}) \mat_e(W_u)
\]
where the identity matrix $I_{n_u}$ accounts for $u$'s own external modes. By induction, each $H_{e_j}$ has full column rank $r_{e_j}$, so the Kronecker product is full column rank. Since $\mat_e(W_u)$ has rank $r_e$ by the full Tucker rank hypothesis, $\rank(H_e) = \rank(\mat_e(W_u)) = r_e$.
\end{proof}

\nospurious*

\begin{proof}
We prove the contrapositive that if $\theta \in \Theta$ is a full-Tucker-rank critical point, it is a global minimum. We proceed by strong induction on $|V|$. The base case of $|V| = 1$ is trivial, since with a single node the model is linear, and thus critical implies global minimality. Now suppose that $|V| \geq 2$ and $\Tt$ is a tree. Then there exists a leaf $v \in L$ with unique incident edge $e$ connecting $v$ to parent $u$. By the full Tucker rank hypothesis, the factor tensor $W_v$ at node $v$ has full column rank $r_e$. Considering its thin QR decomposition $W_v = Q_v S$, with $Q_v^{\top} Q_v = I_{r_e}$ and $S \in \mathbb{R}^{r_e \times r_e}$ non-singular, we absorb $S$ into the neighboring tensor at node $u$ via $W_u \leftarrow W_u \times_e S$. This reparameterization leaves the represented tensor $T$, the residual $R$, and all Tucker ranks invariant. Hence, we assume without loss of generality that $W_v^{\top} W_v = I_{r_e}$.

Since $v$ is a leaf, the cut factorization at edge $e$ becomes $T^{(e)} = W_v H_e^{\top}$, where $H_e$ has full column rank $r_e$ by \cref{lem:full-env-rank}, so $H_e^{\top} H_e$ is invertible and positive definite. Stationarity at node $v$ implies that the residual $R^{(e)} = T^{(e)} - T^{*(e)}$ satisfies $R^{(e)}H_e = 0$. Therefore,
\[
T^{*(e)} H_e = T^{(e)} H_e = W_v (H_e^{\top} H_e),
\]
which has rank $r_e$ and image $\im(W_v)$. By realizability, $\rank(T^{*(e)}) \leq r_e$, so $\im(T^{*(e)}) = \im(W_v)$. Since both $T^{(e)}$ and $T^{*(e)}$ have columns in $\im(W_v)$, so does $R^{(e)} = T^{(e)}-T^{*(e)}$. 

Now define the \textit{reduced residual} $\widetilde{R} := W_v^{\top} R$ and observe that $R = W_v \widetilde{R}$ since $W_v^{\top} W_v = I$. Define the reduced tree $\widetilde{\Tt}$ by removing $v$ and $e$, where $u$'s former bond mode along $e$ becomes an external mode of dimension $r_e$. Observe that the residual of the loss on $\widetilde{T}$ becomes $\widetilde{R}$. We verify all hypotheses transfer:
\begin{itemize}
    \item \emph{Loss:} $\|R\|_F^2 = \|W_v \widetilde{R}\|_F^2 = \|\widetilde{R}\|_F^2$ (by $W_v^{\top} W_v = I$).
    \item \emph{Criticality:} Any perturbation $\delta \widetilde{T}$ of the reduced network $\widetilde{T}$ lifts to a perturbation $\delta T$ of the original network $T$ (where $W_v$ is left fixed), and $\delta T = W_v \delta \widetilde{T}$ by multilinearity. Hence
    \[
    \langle \widetilde{R},\, \delta\widetilde{T}\rangle
    = \langle W_v^{\top} R,\, \delta\widetilde{T}\rangle
    = \langle R,\, W_v\,\delta\widetilde{T}\rangle
    = \langle R,\, \delta T\rangle,
    \]
    and thus criticality transfers because $\langle \widetilde{R},\, \delta\widetilde{T}\rangle = \langle R,\, \delta T\rangle =0$.
    \item \emph{Full Tucker rank:} Unfolding ranks at remaining nodes are unchanged.
    \item \emph{Realizability:} $\rank(\widetilde{T}^{*(e')}) \leq \rank(T^{*(e')}) \leq r_{e'}$ for edges $e'$ in $\widetilde{\Tt}$.
\end{itemize}

By the inductive hypothesis, $\widetilde{R} = 0$, so $R = W_v \widetilde{R} = 0$.
\end{proof}

\begin{remark}
\label{rem:rank-deficiency}
The full-rank assumption on $W_v$ is essential in the stripping argument.
If $\rank(W_v)=s<r_e$, then stationarity still implies
\[
     T^{*(e)}H_e = W_v(H_e^{\top} H_e),
\]
so the columns of $T^{*(e)}H_e$ lie in $\im(W_v)$. But this only tests
$T^{*(e)}$ against the $s$-dimensional image of $H_e$. It does not force the
entire image of $T^{*(e)}$ to lie in $\im(W_v)$. Thus the target unfolding may
have components outside the model image at the leaf, and the residual need not
factor through $W_v$.
\end{remark}

\section{No Spurious Local Minima at Minimum-Norm Points}
\label{app:main}

\begin{figure}[t]
    \centering
    \begin{tikzpicture}[
    x=1.5cm, y=1cm,   %
    >=latex
]

\draw[marked edge] (0,0) -- (1,0);           %
\draw[marked edge] (0,0) -- (-1,-0.75);       %
\draw[marked edge] (1,0) -- (2, 0.75);       %
\draw[marked edge] (1,0) -- (2,-0.75);       %
\draw[marked edge] (2,0.75) -- (3, 0.55);    %

\draw[edge] (0,0) -- (1,0);          %

\draw[edge] (0,0)  -- (-1, 0.75);    %
\draw[edge] (0,0)  -- (-1,-0.75);    %
\draw[edge] (-1,0.75)  -- (-2, 1.80); %
\draw[edge] (-1,0.75)  -- (-2, 0.55); %
\draw[edge] (-1,-0.75) -- (-2,-0.55); %
\draw[edge] (-1,-0.75) -- (-2,-1.80); %

\draw[edge] (1,0)  -- (2, 0.75);    %
\draw[edge] (1,0)  -- (2,-0.75);    %
\draw[edge] (2, 0.75) -- (3, 1.80); %
\draw[edge] (2, 0.75) -- (3, 0.55); %
\draw[edge] (2,-0.75) -- (3,-0.55); %
\draw[edge] (2,-0.75) -- (3,-1.80); %

\newcommand{\stubsat}[4]{%
  \draw[stub] (#1,#2) -- ++({#3}:0.55);
  \draw[stub] (#1,#2) -- ++({#4}:0.55);
}

\stubsat{-2}{ 1.80}{120}{150}   %
\stubsat{-2}{ 0.55}{150}{180}   %
\stubsat{-2}{-0.55}{180}{210}   %
\stubsat{-2}{-1.80}{210}{240}   %

\stubsat{3}{ 1.80}{ 30}{ 60}   %
\stubsat{3}{ 0.55}{  0}{ 30}   %
\stubsat{3}{-0.55}{-30}{  0}   %
\stubsat{3}{-1.80}{-60}{-30}   %

\node[marked vertex] (C0) at (0, 0) {};
\node[marked vertex] (C1) at (1, 0) {};

\node[vertex]        (L1a) at (-1,  0.75) {};
\node[marked vertex] (L1b) at (-1, -0.75) {};

\node[vertex] (L2a) at (-2,  1.80) {};
\node[vertex] (L2b) at (-2,  0.55) {};
\node[vertex] (L2c) at (-2, -0.55) {};
\node[vertex] (L2d) at (-2, -1.80) {};

\node[marked vertex] (R1a) at (2,  0.75) {};
\node[marked vertex] (R1b) at (2, -0.75) {};

\node[vertex]        (R2a) at (3,  1.80) {};
\node[marked vertex] (R2b) at (3,  0.55) {};
\node[vertex]        (R2c) at (3, -0.55) {};
\node[vertex]        (R2d) at (3, -1.80) {};

\end{tikzpicture}
    \caption{The rank-deficient core is a connected subgraph induced by rank-deficient edges (marked in red). As each vertex in this subgraph (light red) corresponds to a tensor that is perturbed in the escape path, the size of this subgraph equals the exponent of the escape direction in the saddle, so in this case $\sim \left\| \theta - \theta_0 \right\|^6$. Note that there can be multiple, mutually disconnected rank-deficient cores in the whole tensor network. In that case, each core represents one escape path in the saddle point, and the \emph{smallest} core delivers the least suppressed escape (\cref{rem:escape-degree}).}
\label{fig:rank-deficient-core}
\end{figure}

This section proves that every minimum-norm local minimum is globally optimal (\cref{thm:no-spurious-local-min}). The key idea is that any non-optimal critical point must contain a \emph{rank-deficient region}: a connected component of internal edges whose Tucker ranks are strictly smaller than their available bond dimensions. Intuitively, such a region is using only part of the capacity available to it, behaving as though those bonds had smaller dimensions than the architecture permits.

We isolate this connected component and collapse the surrounding full-rank parts of the network into fixed boundary tensors, producing a smaller \emph{rank-deficient core} (\cref{subsec:core}, \cref{fig:rank-deficient-core}). The total loss decomposes into a constant term plus the loss of this core, so it suffices to study optimization within the core itself. Rank deficiency then creates latent directions in parameter space that are invisible to first order but can be activated through a coordinated perturbation of all tensors in the core. We construct such a perturbation explicitly and show that whenever the residual is nonzero it yields a strictly decreasing descent direction (\cref{subsec:descent}). Consequently, any minimum-norm critical point with positive loss admits an arbitrarily local loss-decreasing perturbation, contradicting local optimality. The only remaining local minima are therefore global minima.

Note that while both \cref{thm:no-spurious} and \cref{thm:no-spurious-local-min} use stripping arguments to reduce the network, the reductions operate differently. In \cref{thm:no-spurious}, the full-rank assumption forces the model to match the target's column space at every leaf. This alignment allows us to iteratively remove individual leaves while preserving the \textit{exact} loss. Here, rank deficiency breaks this alignment, preventing us from collapsing the network entirely. Instead, we must rely on a weaker form of alignment (\cref{subsec:alignment-analysis}), which allows us to isolate a rank-deficient core while preserving the loss only up to an \textit{additive constant}. However, because we only need to construct a descent path to rule out a local minimum, rather than prove the loss is zero, this partial reduction is sufficient: any loss-decreasing path within the reduced core lifts to a loss-decreasing path in the full network (\cref{thm:no-spurious-local-min}).

\subsection{Dormant Subspaces}
\label{subsec:minimality-dormant}

\begin{definition}[Dormant subspace]
\label{def:dormant}
For an internal edge $e = \{u,v\}$, the \emph{dormant subspace} $D_e \subseteq \R^{r_e}$ at $e$ is
\[
D_e := \ker(\mat_e W_u) \cap \ker(\mat_e W_v),
\]
the subspace along which both endpoint tensors vanish on their $e$-mode. That is, for every $d \in D_e$, contracting $W_u$ or $W_v$ against $d$ on its $e$-mode gives the zero tensor.
\end{definition}

\begin{proposition}
\label{prop:minimality-clean}
If $\theta$ is minimum norm, then for every internal edge $e = \{u, v\}$, we have 
\[
D_e = \ker(\mat_e W_u) = \ker(\mat_e W_v) = \ker(F_e) = \ker(H_e).
\]
Further, the dimension of the dormant subspace is determined purely by the rank of the model. In particular,
\[
\dim D_e = r_e - \rank(T^{(e)}).
\]
\end{proposition}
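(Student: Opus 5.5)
The plan is to exploit the gauge freedom at the single edge $e$ and the fact that a minimum-norm point cannot be shortened by any norm-reducing move inside its fiber. Fix the internal edge $e=\{u,v\}$, with $u$ on the side whose cut factor is $F_e$ and $v$ on the side of $H_e$. I will establish the chain of equalities in three steps, and then read off the dimension formula.

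\emph{Step 1: the two endpoint kernels coincide.} Suppose $d \in \ker(\mat_e W_u)$ is a unit vector. Let $P$ be the orthogonal projector onto $d^\perp$, and consider replacing $W_v$ by $W_v \times_e P$, i.e.\ projecting $v$'s $e$-mode away from $d$. Every contraction across $e$ passes through $W_u$. Since $W_u$ annihilates $d$ on its $e$-mode, inserting $P$ on the bond does not change the represented tensor $T(\theta)$. The new parameter therefore lies in the same fiber. Its norm satisfies $\|W_v\times_e P\|_F^2 = \|W_v\|_F^2 - \|W_v\times_e d\|_F^2$. Minimality of $\theta$ forces $W_v\times_e d=0$, so $\ker(\mat_e W_u)\subseteq\ker(\mat_e W_v)$. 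The reverse inclusion follows by symmetry. Hence both kernels equal their intersection, which is $D_e$.

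\emph{Step 2: the factor kernels equal the endpoint kernels.} The inclusion $\ker(\mat_e W_u)\subseteq \ker(F_e)$ is immediate, because $F_e$ is $W_u$ contracted with the rest of its subtree. The opposite inclusion is the main obstacle: $F_e$ could in principle kill a bond direction $d$ even though $W_u$ does not, namely when $W_u\times_e d$ lies in the kernel of the subtree contraction. My plan is again a projection argument. Take a unit $d\in\ker(F_e)$ and project the bond on $v$'s side away from $d$, replacing $W_v$ by $W_v\times_e P$. Because $T^{(e)}=F_eH_e^\top$ and $F_e d=0$, we get $F_e P H_e^\top = F_e H_e^\top$, so the fiber is preserved. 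Minimality then forces $W_v\times_e d=0$, which gives $\ker F_e\subseteq \ker(\mat_e W_v)$. By Step 1 this is $D_e$. Symmetrically, $\ker H_e\subseteq D_e$, using the projection on $u$'s side. Combining the two inclusions yields $D_e=\ker F_e=\ker H_e$.

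\emph{Step 3: the dimension formula.} From Step 2, $\ker F_e=\ker H_e=D_e$. Let $Q=D_e^\perp$, which has dimension $k=r_e-\dim D_e$. Both $F_e$ and $H_e$ are injective on $Q$, so $F_e|_Q$ and $H_e|_Q$ have full column rank $k$. The cut factorization $T^{(e)}=F_eH_e^\top=F_e|_Q (H_e|_Q)^\top$ is then a product of two full-column-rank matrices with inner dimension $k$, and therefore has rank exactly $k$. This gives $\dim D_e = r_e-\rank(T^{(e)})$.

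The only delicate point is the one handled in Step 2: the projection must be applied on the side \emph{opposite} to the factor that kills $d$. Phrasing the argument directly via $T^{(e)}=F_eH_e^\top$ keeps it clean and avoids any induction over the subtree.
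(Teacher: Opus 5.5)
Your proof is correct and follows essentially the same route as the paper. The paper closes the cycle $\ker(\mat_e W_u)\subseteq\ker(F_e)\subseteq\ker(\mat_e W_v)\subseteq\ker(H_e)\subseteq\ker(\mat_e W_u)$ using exactly your move of zeroing the $d$-slice of the opposite endpoint and invoking minimality, and it reads off $\rank(T^{(e)})=r_e-\dim D_e$ from injectivity of $F_e$ on $D_e^\perp=\im(H_e^\top)$; your Step~1 is redundant, since it is the special case of Step~2 given $\ker(\mat_e W_u)\subseteq\ker(F_e)$, but it does no harm.
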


In other words, the minimum-norm property prevents parameter structure that does not contribute to the output: any slice of a node tensor annihilated by the rest of the network must itself be zero. In particular, the four notions of ``dormancy'' at an edge coincide, and are represented by the single object $D_e$.

\begin{proof}
We prove the kernel equalities by establishing the cycle of inclusions
\[
\ker(\mat_e W_u) \subseteq \ker(F_e) \subseteq \ker(\mat_e W_v) \subseteq \ker(H_e) \subseteq \ker(\mat_e W_u).
\]
The cycle forces all four subspaces to be equal, and hence also equal to $D_e = \ker(\mat_e W_u) \cap \ker(\mat_e W_v)$.

Each cut factor $F_e, H_e$ is linear in its adjacent node tensor $W_u, W_v$. Concretely, $F_e$ is obtained by contracting the remaining tensors on the $u$-side of the cut into the non-$e$ modes of $W_u$; this operation is linear and leaves the $e$-mode untouched, so $F_e = E_u \, \mat_e W_u$ for some matrix $E_u$. Likewise $H_e = E_v \, \mat_e W_v$. This gives two of the four inclusions: $\ker(\mat_e W_u) \subseteq \ker(F_e)$ and $\ker(\mat_e W_v) \subseteq \ker(H_e)$.

The other two inclusions use the minimum-norm hypothesis. Consider $\ker(F_e) \subseteq \ker(\mat_e W_v)$, with $\ker(H_e) \subseteq \ker(\mat_e W_u)$ following via the same argument. Let $d \in \ker(F_e)$ be a unit vector, and let $\theta'$ be obtained from $\theta$ by zeroing the $d$-slice of $W_v$: that is, by replacing $\mat_e W_v$ with $(\mat_e W_v)(I - dd^{\top})$. Since $H_e = E_v \, \mat_e W_v$, this changes $H_e$ into $H_e (I - dd^{\top})$, and the output does not change:
\[
T(\theta')^{(e)} = F_e (I - dd^{\top}) H_e^{\top} = T(\theta)^{(e)} - (F_e d)(H_e d)^{\top} = T(\theta)^{(e)},
\]
where the last equality uses $F_e d = 0$. On the other hand, $\|\theta'\|^2 = \|\theta\|^2 - \|(\mat_e W_v)\, d\|^2$. Because $\theta$ is minimum norm, $\|\theta'\| \geq \|\theta\|$, so $(\mat_e W_v)\, d = 0$; that is, $d \in \ker(\mat_e W_v)$.

It remains to compute $\dim D_e$, the dimension of the dormant subspace. In the factorization $T^{(e)} = F_e H_e^{\top}$, the image of $H_e^{\top}$ is $\ker(H_e)^\perp = D_e^\perp$. On $D_e^\perp$ the matrix $F_e$ is injective, because $\ker(F_e) = D_e$. Therefore $\rank(T^{(e)}) = \dim(D_e^\perp) = r_e - \dim D_e$.
\end{proof}

Consequently, at a minimum-norm point every internal edge satisfies $\rank(\mat_e W_u) = \rank(\mat_e W_v) = \rank(T^{(e)})$, so $e$ is rank-deficient in the sense of \cref{def:full-tucker-rank} if and only if $\rank(T^{(e)}) < r_e$, if and only if $D_e \neq 0$. As every parameter considered in this appendix is minimum norm, we use these three characterizations of rank deficiency (and the corresponding characterizations of full-rank edges) interchangeably in what follows.

\subsection{Active Bond Directions are Target-Aligned}
\label{subsec:alignment-analysis}

We use the minimum-norm and criticality conditions to ensure that no structure in the model is ``unused'': informally, subtrees in the model must have non-zero alignment with their corresponding subtree in the target. Without such a property, counterexamples like \cref{app:counterexample} are possible.

Fix a root vertex and orient each non-root internal edge away from the root. Fix also an internal edge $e$ with subtree factors $F_e, H_e$ (model) and
$F_e^*, H_e^*$ (target), for an arbitrary realization of the target (\cref{app:setup}). Recall the dormant subspace
$D_e = \ker(F_e) = \ker(H_e)$ from
\cref{prop:minimality-clean}, and write
$A_e = D_e^\perp$ for the \emph{active subspace} and
$M_e := H_e^{*\top} H_e \in \R^{r_e \times r_e}$ for the
\emph{cross-Gram matrix} at~$e$.
 
The cross-Gram matrix measures how the model's bond usage at~$e$
overlaps with the target's: the $u$-direction in bond space
contributes to $\langle T, T^*\rangle$ only insofar as $M_e u \neq 0$.
The following proposition says that at a minimum-norm critical point,
every active bond direction has nonzero target alignment.
 
\begin{proposition}
\label{prop:active-aligned}
At a minimum-norm critical point, $M_e$ is injective on~$A_e$.
\end{proposition}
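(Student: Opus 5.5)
The plan is to use a single family of first-order stationarity conditions: reparameterizing the bond mode at $e$ on one side. The cut factorization then gives a matrix identity that can be tested against a hypothetical active vector $u$ with $M_e u = 0$.

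\textbf{Step 1 (bond-mode stationarity).} Let $e=\{u_0,v_0\}$, with $u_0$ the endpoint on the $F_e$ side. For an arbitrary $P\in\R^{r_e\times r_e}$, perturb $W_{u_0}$ by applying $(I+tP)$ to its $e$-mode. As in the proof of \cref{prop:minimality-clean}, $F_e=E_{u_0}\,\mat_e W_{u_0}$, so this perturbation changes $F_e$ to $F_e(I+tP)$ and leaves $H_e$ unchanged. The first-order change of the represented tensor is $\delta T^{(e)} = F_e P H_e^{\top}$. Criticality gives $\langle R^{(e)}, F_e P H_e^{\top}\rangle = \tr(P^{\top} F_e^{\top} R^{(e)} H_e)=0$ for all $P$, hence
\[
F_e^{\top} R^{(e)} H_e = 0 .
\]

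\textbf{Step 2 (expand with the target realization).} Fix a realization $T^{*(e)} = F_e^* H_e^{*\top}$ of the target (\cref{app:setup}). Substituting $R^{(e)} = F_eH_e^{\top} - F_e^*H_e^{*\top}$ into the identity of Step 1 yields
\[
F_e^{\top}F_e\, H_e^{\top}H_e \;=\; F_e^{\top}F_e^{*}\, M_e ,
\]
where $M_e = H_e^{*\top}H_e$ is the cross-Gram matrix.

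\textbf{Step 3 (kernel chase using minimum norm).} Suppose $x\in A_e$ satisfies $M_e x=0$; we show $x=0$. Applying the identity of Step 2 to $x$ gives $F_e^{\top}F_e\,(H_e^{\top}H_e x)=0$. Hence $H_e^{\top}H_e x \in \ker(F_e^{\top}F_e)=\ker F_e$. By \cref{prop:minimality-clean}, $\ker F_e = D_e$. On the other hand, $H_e^{\top}H_e x\in \im(H_e^{\top}) = (\ker H_e)^\perp = D_e^\perp = A_e$, again using $\ker H_e = D_e$ from \cref{prop:minimality-clean}. So $H_e^{\top}H_e x\in D_e\cap D_e^{\perp}=0$. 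This gives $\|H_e x\|^2=0$, i.e.\ $x\in\ker H_e = D_e$. Since also $x\in A_e=D_e^\perp$, we conclude $x=0$, so $M_e$ is injective on $A_e$.

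\textbf{Main obstacle.} The only delicate point is Step 1: checking that the bond-mode reparameterization $F_e\mapsto F_e(I+tP)$ is realized by varying the single node tensor $W_{u_0}$, so that the criticality condition applies to it. This follows from the linear dependence $F_e = E_{u_0}\mat_e W_{u_0}$ already used in \cref{prop:minimality-clean}. Everything else is linear algebra.

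The minimum-norm hypothesis is used essentially in Step 3. Without it, $\ker F_e$ and $\ker H_e$ could differ, and $H_e^{\top}H_e x$ could be a nonzero vector killed by $F_e$; that is exactly the unused structure exhibited in \cref{app:counterexample}.
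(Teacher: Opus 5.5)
Your proof is correct. It reaches the result by a somewhat different route than the paper's.

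The paper first applies a gauge transformation at $e$ to arrange $F_e^{\top}F_e = P_{A_e}$. It must then check that this gauge preserves criticality, the kernel equalities, and injectivity of $M_e$ on $A_e$, since minimum norm itself need not survive it. It then argues by contradiction along the explicit path $H_e \mapsto H_e(I - t\,uu^{\top})$, splitting the loss as $\tfrac12\|T\|_F^2 - \langle T, T^*\rangle + \tfrac12\|T^*\|_F^2$. The alignment term stays constant because $M_e u = 0$, while $\|T\|_F^2$ drops at rate $\|H_e u\|^2$.

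The derivative of the loss along that path is $-u^{\top}F_e^{\top}R^{(e)}H_e u$. This is your stationarity identity $F_e^{\top}R^{(e)}H_e = 0$ tested against the single direction $uu^{\top}$. With only one test direction, the scalar that must be shown nonzero is $u^{\top}F_e^{\top}F_e\,H_e^{\top}H_e u$. A product of two Gram matrices has no definite sign in such a quadratic form, which is why the paper normalizes the gauge.

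By using all $P$ at once, you get the full vector equation $F_e^{\top}F_e\,H_e^{\top}H_e x = 0$. Your kernel chase with $\ker F_e = \ker H_e = D_e$ then does the job the normalization did. The result is shorter, avoids the gauge bookkeeping, and shows directly that $\ker M_e = D_e$.

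What the paper's version buys in return is an explicit first-order descent direction with a clear interpretation. Shrinking an active bond direction that the target does not see lowers $\|T\|_F^2$ without losing any alignment, which fits the paper's narrative about unused structure.
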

 
\begin{proof}
We may assume that $F_e^{\top} F_e = P_{A_e}$, the orthogonal projector onto $A_e$. To arrange this, note that the Gram matrix $F_e^{\top} F_e$ is positive semidefinite, with kernel $\ker F_e = D_e$ (\cref{prop:minimality-clean}), so it maps $A_e$ to itself and restricts to a positive-definite operator there. Let $S \in \GL(r_e)$ act as the identity on $D_e$ and as $\bigl((F_e^{\top} F_e)|_{A_e}\bigr)^{-1/2}$ on $A_e$; then $S$ is symmetric and preserves both $D_e$ and $A_e$. The gauge transformation $F_e \mapsto F_e S$, $H_e \mapsto H_e S^{-\top} = H_e S^{-1}$ at $e$ (\cref{app:setup}) then yields $(F_e S)^{\top} (F_e S) = S (F_e^{\top} F_e) S = P_{A_e}$. This gauge leaves $T$ unchanged, and, being an invertible linear change of parameters that preserves the loss, it maps critical points to critical points. It preserves the kernel equalities $\ker(F_e)=\ker(H_e)=D_e$ from \cref{prop:minimality-clean}, since $\ker(F_e S) = S^{-1} D_e = D_e$ and $\ker(H_e S^{-1}) = S D_e = D_e$. And it replaces $M_e$ by $M_e S^{-1}$, which is injective on $A_e$ if and only if $M_e$ is, because $S^{-1}$ maps $A_e$ bijectively to itself. The minimum-norm hypothesis itself need not survive the gauge, but the remainder of the proof uses only criticality and the equalities $\ker F_e = \ker H_e = D_e$.

Now suppose toward a contradiction that $M_e u = 0$ for some unit vector $u \in A_e$. Expanding the squared loss,
\begin{equation}
\label{eq:aligned-loss-expansion}
\Loss = \frac{1}{2}\langle T, T\rangle - \langle T, T^*\rangle + \frac{1}{2}\langle T^*, T^*\rangle.
\end{equation}
We construct a smooth path $\theta(t)$ with $\theta(0) = \theta$ along which $\langle T, T^*\rangle$ is constant and $\langle T, T\rangle$ strictly decreases at $t = 0$; the loss then strictly decreases at $t = 0$, which is impossible at a critical point.

Let $W_v$ be the endpoint tensor of $e$ on the $H_e$-side, so that $H_e = E_v \, \mat_e W_v$ for a fixed matrix $E_v$, as in the proof of \cref{prop:minimality-clean}. Define $\theta(t)$ by scaling down the $u$-component of $W_v$: that is, replacing $\mat_e W_v$ with $(\mat_e W_v)(I - t\, uu^{\top})$, keeping every other node tensor fixed. Write $T(t) := T(\theta(t))$ and $H_e(t) := H_e(\theta(t))$. Along this path $F_e$ is constant, while $H_e$ changes according to
\[
H_e(t) = H_e (I - t\, uu^{\top}) = H_e - t\, (H_e u)\, u^{\top}.
\]

We first show the target alignment term $\langle T(t), T^*\rangle$ of \cref{eq:aligned-loss-expansion} is constant along the path. Through the matricization of $T$ and $T^*$ along edge $e$, $\langle T, T^* \rangle = \tr\bigl(T^{(e)\top}\, T^{*(e)}\bigr) = \tr\bigl(H_e F_e^{\top} F_e^* H_e^{*\top}\bigr)$, which is linear in $H_e$. Substituting $H_e(t)$,
\[
\langle T(t), T^* \rangle = \langle T, T^* \rangle - t\, \tr\bigl((H_e u)\, u^{\top} F_e^{\top} F_e^* H_e^{*\top}\bigr) = \langle T, T^* \rangle - t\, u^{\top} F_e^{\top} F_e^* (M_e u) = \langle T, T^* \rangle,
\]
where the second equality uses cyclicity of the trace and the definition $M_e = H_e^{*\top} H_e$, and the third uses $M_e u = 0$.

Now we show the self-overlap term $\langle T(t), T(t) \rangle$ of \cref{eq:aligned-loss-expansion} strictly decreases for small $t > 0$. By the same matricization identity and the normalization $F_e^{\top} F_e = P_{A_e}$,
\[
\langle T(t), T(t) \rangle = \tr\bigl(H_e(t)\, P_{A_e}\, H_e(t)^{\top}\bigr) = \|H_e(t)\|_F^2,
\]
where the second equality holds because every row of $H_e(t)$ lies in $A_e$: the rows of $H_e$ span $(\ker H_e)^\perp = A_e$, and $u \in A_e$. Expanding the rank-one update, with $\|u\| = 1$,
\[
\|H_e(t)\|_F^2 = \|H_e\|_F^2 - 2t\, \|H_e u\|^2 + t^2\, \|H_e u\|^2.
\]
Moreover $H_e u \neq 0$, because $u$ is a nonzero vector of $A_e = (\ker H_e)^\perp$.

Since the target alignment term $\langle T(t), T^* \rangle$ of \cref{eq:aligned-loss-expansion} is constant, the loss along the path is
\[
\Loss(\theta(t)) = \Loss(\theta) - \Bigl(t - \frac{t^2}{2}\Bigr) \|H_e u\|^2,
\]
and its derivative at $t = 0$ is $-\|H_e u\|^2 < 0$. But $t \mapsto \theta(t)$ is a smooth path with $\theta(0) = \theta$, so at a critical point this derivative is $\langle \nabla \Loss(\theta), \dot{\theta}(0) \rangle = 0$ --- a contradiction.
\end{proof}

\begin{corollary}
\label{cor:rank-bound}
At a minimum-norm critical point, $\rank(T^{(e)}) \leq r^*_e$ for every
internal edge~$e$.
\end{corollary}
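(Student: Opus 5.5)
The bound will follow directly from \cref{prop:active-aligned} combined with the rank characterizations of \cref{prop:minimality-clean}. Fix an internal edge $e$ and use the notation of \cref{subsec:alignment-analysis}. Here $M_e = H_e^{*\top} H_e \in \R^{r_e \times r_e}$, and $H_e^*$ is the target-side cut factor of an arbitrary realization $\theta^*$ of $T^*$. That realization exists by the realizability hypothesis.

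First, I would bound the rank of $M_e$ from below. By \cref{prop:active-aligned}, $M_e$ is injective on $A_e = D_e^\perp$. Hence
\[
\rank(M_e) \geq \dim A_e = r_e - \dim D_e = \rank(T^{(e)}),
\]
where the last equality is the dimension formula of \cref{prop:minimality-clean}.

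Second, I would bound the rank of $M_e$ from above by the rank of the target cut factor: $\rank(M_e) \leq \rank(H_e^{*\top}) = \rank(H_e^*)$. To control $\rank(H_e^*)$, I would not use a generic realization. Instead I would choose the realization provided by \cref{prop:realizability-equivalence}, for which $\rank(F_e^*) = \rank(H_e^*) = r_e^*$. The proof of \cref{prop:active-aligned} allowed an arbitrary realization, so this choice is legitimate. Combining the two bounds gives $\rank(T^{(e)}) \leq \rank(M_e) \leq r_e^*$.

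The only delicate point is the freedom in choosing the target realization. For an arbitrary realization, $H_e^*$ could have rank up to $r_e > r_e^*$, and the upper bound would be lost. So the main step is checking that \cref{prop:active-aligned} holds for every realization, in particular the minimal-rank one. Its proof uses only that $T^{*(e)} = F_e^* H_e^{*\top}$, which holds for any realization, so this check goes through.
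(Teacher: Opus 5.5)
Your proposal is correct and follows the paper's proof essentially step for step: you fix the realization from \cref{prop:realizability-equivalence} with $\rank(H_e^*) = r_e^*$, use \cref{prop:minimality-clean} to get $\dim A_e = \rank(T^{(e)})$, and bound $\rank(M_e)$ below by $\dim A_e$ (injectivity from \cref{prop:active-aligned}) and above by $\rank(H_e^*)$. Your check that \cref{prop:active-aligned} holds for any target realization is exactly the point the paper covers with its remark that neither side of the inequality depends on the chosen realization.
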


\begin{proof}
Neither side of the claimed inequality refers to any particular realization of the target $T^*$; by \cref{prop:realizability-equivalence}, we may choose a realization with $\rank(H_e^*) = r_e^*$ at every internal edge. By \cref{prop:minimality-clean}, $\dim(A_e) = r_e - \dim(D_e) = \rank(T^{(e)})$. By \cref{prop:active-aligned}, $M_e$ is injective on $A_e$, so $\dim(A_e) \leq \rank(M_e)$. And $M_e = H_e^{*\top} H_e$ has rank at most $\rank(H_e^*) = r_e^*$.
\end{proof}

\begin{corollary}
\label{cor:full-rank-invertible}
At a minimum-norm critical point, if $\rank(T^{(e)}) = r_e$ then $M_e$ is
invertible.
\end{corollary}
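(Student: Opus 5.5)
The plan is to combine the dormant-subspace description of \cref{prop:minimality-clean} with the injectivity statement of \cref{prop:active-aligned}.

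First, since $\rank(T^{(e)}) = r_e$, \cref{prop:minimality-clean} gives $\dim D_e = r_e - \rank(T^{(e)}) = 0$. Hence $D_e = 0$ and the active subspace is all of bond space, $A_e = D_e^\perp = \R^{r_e}$.

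Second, \cref{prop:active-aligned} says that at a minimum-norm critical point the cross-Gram matrix $M_e = H_e^{*\top} H_e$ is injective on $A_e$. With $A_e = \R^{r_e}$ this means $M_e$ is injective on its whole domain. Because $M_e$ is a square $r_e \times r_e$ matrix, injectivity is the same as invertibility.

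One point needs checking: $M_e$ depends on the chosen realization $\theta^*$ of the target, through $H_e^*$. The argument in \cref{prop:active-aligned} holds for an arbitrary realization, so the conclusion holds for every realization; no special choice is needed.

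I do not expect any real obstacle. The statement is essentially immediate from the two earlier propositions, and the only subtlety is the realization-independence just noted. As a consistency check, \cref{cor:rank-bound} gives $r_e = \rank(T^{(e)}) \le r_e^*$. Realizability gives $r_e^* \le r_e$, so $r_e^* = r_e$, which is compatible with $H_e^*$ having the full column rank that invertibility of $M_e$ requires.
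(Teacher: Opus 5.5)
Your proposal is correct and follows the paper's proof: $\rank(T^{(e)}) = r_e$ gives $D_e = 0$ by \cref{prop:minimality-clean}, hence $A_e = \R^{r_e}$, and \cref{prop:active-aligned} then makes the square matrix $M_e$ injective, hence invertible. Your remarks on realization-independence and the consistency check $r_e^* = r_e$ are accurate but not needed for the argument.
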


\begin{proof}
If $\rank(T^{(e)}) = r_e$, then $\dim(D_e) = r_e - \rank(T^{(e)}) = 0$ by \cref{prop:minimality-clean}, so $A_e = D_e^\perp = \R^{r_e}$. By \cref{prop:active-aligned}, $M_e$ is then injective on all of $\R^{r_e}$, and an injective square matrix is invertible.
\end{proof}

\subsection{Reduction to Full Target Rank}
\label{subsec:full-rank-reduction}

Before going further, it is convenient to reduce to the case where the target $T^*$ has full rank along every internal edge. This subsection justifies that reduction. Throughout, $\theta$ denotes the minimum-norm critical point under consideration.

\paragraph{The reduced problem.}
Note that we may assume without loss of generality that the target bond rank $r^*_e > 0$ for every internal edge $e$ (equivalently, the target $T^* \neq 0$), since if this was not the case, \cref{cor:rank-bound} would imply $\rank(T^{(e)}) \leq r^*_e = 0$, and therefore $T(\theta) = 0 = T^*$. For each internal edge $e$, choose an $r_e^*$-dimensional subspace $S_e \subseteq \R^{r_e}$ containing $A_e$; this is possible by \cref{cor:rank-bound}. Apply \cref{prop:realizability-equivalence} with these subspaces to fix a target realization $\{W_v^*\}$ supported on the $S_e$ directions.

After choosing orthonormal coordinates on each $S_e$, define a reduced TTN on the same tree with bond dimensions $r_e^*$ by restricting every internal bond space from $\R^{r_e}$ to $S_e$. Equivalently, each node tensor is replaced by its component supported on the $S_e$ directions along all incident internal edges. Let $\rho:\Theta \to \widetilde{\Theta}$ denote this orthogonal restriction map, and let $\iota:\widetilde{\Theta}\to\Theta$ denote the reverse coordinate inclusion, obtained by zero-padding the discarded $S_e^\perp$ slices. Set $\widetilde{\theta}:=\rho(\theta)$, let $\widetilde{T}^*$ be the target represented by the restricted target tensors, and write $\widetilde{\Loss}$ for the squared loss of the reduced problem.

\begin{lemma}
\label{lem:reduction}
The reduced problem satisfies:
\begin{enumerate}
    \item $\rank(\widetilde{T}^{*(e)}) = r_e^*$ for every internal edge.
    \item The zero-padding map $\iota:\widetilde{\Theta}\to\Theta$ is a linear isometry with
    \[
        \iota(\widetilde{\theta})=\theta,
        \qquad
        T\circ\iota=\widetilde{T},
        \qquad
        \Loss\circ\iota=\widetilde{\Loss}.
    \]
    \item $\widetilde{\theta}$ is a minimum-norm critical point of $\widetilde{\Loss}$.
    \item $\widetilde{T}(\widetilde{\theta}) = \widetilde{T}^*$ if and only if $T(\theta) = T^*$.
    \item If $\theta$ is a local minimum of $\Loss$, then $\widetilde{\theta}$ is a local minimum of $\widetilde{\Loss}$.
\end{enumerate}
\end{lemma}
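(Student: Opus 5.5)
The plan is to prove the five items in order, with item 2 as the structural backbone from which items 3--5 follow.

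\emph{Item 2.} Since $S_e \supseteq A_e$ and $A_e = D_e^\perp$, every endpoint tensor of $e$ vanishes on $S_e^\perp \subseteq D_e$ in its $e$-mode, by \cref{prop:minimality-clean}. Thus every node tensor of $\theta$ is already supported on the $S_e$ directions along all incident bonds, so $\theta = \iota(\rho(\theta))$. The map $\iota$ is zero-padding in orthonormal coordinates, hence a linear isometry. Contracting a zero-padded network reproduces the contraction over the $S_e$ coordinates, because the padded slices contribute zero terms; this gives $T\circ\iota = \widetilde T$. The target realization $\{W_v^*\}$ is supported on $S_e$, so $\iota$ of its restriction realizes $T^*$, hence $\widetilde T^* = T^*$ as tensors, and therefore $\Loss\circ\iota = \widetilde\Loss$.

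\emph{Items 1 and 4.} Item 1 follows because $\widetilde T^* = T^*$ and $\rank T^{*(e)} = r_e^*$, independently of the bond dimension. Item 4 follows since $\widetilde T(\widetilde\theta) = T(\iota\widetilde\theta) = T(\theta)$ and $\widetilde T^* = T^*$.

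\emph{Items 3 and 5.} For criticality, differentiate $\widetilde\Loss = \Loss\circ\iota$ to get $\nabla\widetilde\Loss(\widetilde\theta) = \iota^{\top}\nabla\Loss(\iota\widetilde\theta) = 0$. For minimum norm, any $\widetilde\theta'$ with $\widetilde T(\widetilde\theta') = \widetilde T(\widetilde\theta)$ gives $\iota\widetilde\theta'$ in the fiber of $\theta$, so $\|\widetilde\theta'\| = \|\iota\widetilde\theta'\| \ge \|\theta\| = \|\widetilde\theta\|$. For item 5, $\iota$ is continuous, so a neighborhood $U$ of $\theta$ on which $\Loss \ge \Loss(\theta)$ pulls back to $\iota^{-1}(U)$. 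This is a neighborhood of $\widetilde\theta$ on which $\widetilde\Loss = \Loss\circ\iota \ge \Loss(\theta) = \widetilde\Loss(\widetilde\theta)$.

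\emph{Main obstacle.} The only delicate point is the support claim in item 2. It uses the minimum-norm equality $\ker(\mat_e W_u) = \ker(\mat_e W_v) = D_e$ for \emph{both} endpoints, not merely the intersection. It also requires $A_e \subseteq S_e$, which is guaranteed by \cref{cor:rank-bound}. I also need to check carefully that restricting every incident bond simultaneously, including at nodes with several bonds, commutes with contraction. This should be a direct multilinearity argument.
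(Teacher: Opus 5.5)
Your proposal is correct and follows essentially the same route as the paper. Both arguments show $\theta$ is supported on the $S_e$, so $\iota(\rho(\theta))=\theta$; use that the target realization is supported on the same $S_e$, so $\widetilde T^*=T^*$; and then read off items 1 and 3--5 from $T\circ\iota=\widetilde T$ and the isometry property of $\iota$.

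One small correction to your closing remark. Since $D_e$ is by definition the intersection $\ker(\mat_e W_u)\cap\ker(\mat_e W_v)$, the inclusion $S_e^\perp\subseteq A_e^\perp=D_e$ already gives vanishing at both endpoints. For the support claim, the minimum-norm hypothesis enters only upstream, through $\dim A_e=\rank(T^{(e)})\le r_e^*$ (\cref{prop:minimality-clean}, \cref{cor:rank-bound}), which is what makes an $r_e^*$-dimensional $S_e\supseteq A_e$ exist.
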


\begin{proof}
Since $A_e \subseteq S_e$, we have $S_e^\perp \subseteq A_e^\perp = D_e$ by \cref{prop:minimality-clean}. Both endpoint tensors of $e$ therefore vanish on $S_e^\perp$. Thus $\theta$ is already supported on the chosen subspaces, so
\[
    \iota(\widetilde{\theta})=\iota(\rho(\theta))=\theta .
\]
The target realization is supported on the same subspaces by construction, so restricting it does not change the represented external tensor: $\widetilde{T}^*=T^*$.
Consequently, the edge matricizations of the reduced target have the same ranks as those of $T^*$:
\[
    \rank(\widetilde{T}^{*(e)})=\rank(T^{*(e)})=r_e^* .
\]
This proves (1).

For any reduced parameter $\widetilde{\phi}\in\widetilde{\Theta}$, zero-padding commutes with TTN contraction, so $T(\iota(\widetilde{\phi}))=\widetilde{T}(\widetilde{\phi})$. Since $\widetilde{T}^*=T^*$, this gives $\Loss(\iota(\widetilde{\phi}))=\widetilde{\Loss}(\widetilde{\phi})$. The map $\iota$ is a linear isometry because it only inserts zero components in the orthogonal complements $S_e^\perp$. Hence (2) follows.

We now derive the remaining assertions from (2). If $\widetilde{\theta}'$ lies in the reduced fiber over $\widetilde{T}(\widetilde{\theta})$, then $\iota(\widetilde{\theta}')$ lies in the original fiber over $T(\theta)$. Therefore, by minimum-norm of $\theta$ and the isometry of $\iota$,
\[
    \|\widetilde{\theta}\|=\|\theta\|
    \leq \|\iota(\widetilde{\theta}')\|
    =\|\widetilde{\theta}'\|.
\]
Thus $\widetilde{\theta}$ is minimum norm in its reduced fiber. Criticality also transfers: for every reduced tangent direction $\widetilde{\delta}$,
\[
    \left.\frac{d}{dt}\right|_{t=0}
    \widetilde{\Loss}(\widetilde{\theta}+t\widetilde{\delta})
    =
    \left.\frac{d}{dt}\right|_{t=0}
    \Loss(\theta+t\,\iota(\widetilde{\delta})),
\]
and the right-hand side vanishes because $\theta$ is critical. This proves (3).

Since $\iota(\widetilde{\theta})=\theta$ and $\widetilde{T}^*=T^*$, (2) gives
\[
    \widetilde{T}(\widetilde{\theta})-\widetilde{T}^*
    =
    T(\theta)-T^*,
\]
which proves (4). Finally, if $\theta$ is a local minimum of $\Loss$, then the continuity of $\iota$ and the identity $\Loss\circ\iota=\widetilde{\Loss}$ imply that no sufficiently small reduced perturbation can decrease $\widetilde{\Loss}$. Hence $\widetilde{\theta}$ is a local minimum of $\widetilde{\Loss}$, proving (5).
\end{proof}

Thus, for the remainder of \cref{app:main}, we may work in the reduced problem and assume full target rank without loss of generality. Any local descent path found after this reduction zero-pads to a local descent path for the original problem.

\subsection{The Rank-Deficient Core}
\label{subsec:core}

Here we reduce the network to a smaller \emph{rank-deficient core} (\cref{lem:core-properties} and \cref{fig:rank-deficient-core}). This is a second reduction, distinct from the full-target-rank reduction above (\cref{subsec:full-rank-reduction}). This reduction freezes all tensors outside one connected component of rank-deficient edges and absorbs them into the target tensor, in such a way as to preserve the loss up to an additive constant.

\begin{lemma}
\label{lem:core-properties}
Assume $T(\theta)\neq T^*$. Then there is a connected component $C$ of rank-deficient internal edges and a reduced TTN, with parameter $\widetilde{\theta}$, target $\widetilde{T}^*$, and ordinary squared loss $\widetilde{\Loss}$, such that:

\begin{enumerate}
    \item the internal edges of the reduced TTN are exactly the edges of $C$;
    \item every internal edge of the reduced model $\widetilde{T}(\widetilde{\theta})$ has rank strictly less than $r_e$;
    \item every internal edge of the reduced target $\widetilde{T}^*$ has rank $r_e$;
    \item $\widetilde{T}(\widetilde{\theta}) \neq \widetilde{T}^*$;
    \item if $\theta$ is a local minimum of $\Loss$, then $\widetilde{\theta}$ is a local minimum of $\widetilde{\Loss}$;
    \item every strict local descent path for $\widetilde{\Loss}$ at $\widetilde{\theta}$ lifts to a strict local descent path for $\Loss$ at $\theta$.
\end{enumerate}
\end{lemma}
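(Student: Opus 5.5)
We first choose the component $C$. By the reduction of \cref{subsec:full-rank-reduction} (\cref{lem:reduction}) we work at a minimum-norm critical point $\theta$ with full target rank, so every internal edge $e$ has $\rank(T^{*(e)})=r_e$. If every internal edge were full rank for the model, \cref{prop:minimality-clean} would give $\rank(\mat_e W_u)=\rank(\mat_e W_v)=r_e$ at every edge, i.e.\ full Tucker rank. \cref{thm:no-spurious} would then force $T(\theta)=T^*$, a contradiction. Hence the set of rank-deficient edges is nonempty, and we take $C$ to be any connected component of it, with vertex set $V_C$. Deleting the edges of $C$ together with the full-rank edges leaves a forest. Each full-rank edge $f$ adjacent to $V_C$ is a cut whose far side $B_f$ is a subtree avoiding $C$; it may contain further rank-deficient edges belonging to other components.

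The reduced TTN keeps the nodes $V_C$ and the edges of $C$. Every full-rank edge $f$ leaving $V_C$ becomes an external mode of dimension $r_f$. This gives (1), and (2) holds because ranks across edges of $C$ are unchanged (the far-side factor $H_f$ has full column rank by \cref{lem:full-env-rank}, applied only to the full-rank side, or more simply by the cut-factor kernel identity of \cref{prop:minimality-clean}). To define the reduced target and loss, gauge each boundary edge $f$ so that the model's far-side factor is orthonormal, $H_f^\top H_f=I$. This gauge is legitimate because $\ker H_f=D_f=0$. Writing $P$ for the orthogonal projector onto $\bigotimes_f \im H_f$ (tensored with identity on the external legs of $V_C$), we set $\widetilde T^*:=(\bigotimes_f H_f^\top)T^*$. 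Pythagoras then gives
\[
\Loss(\phi)=\widetilde\Loss(\widetilde\phi)+\tfrac12\|(I-P)T^*\|_F^2
\]
for every $\phi$ that agrees with $\theta$ off $V_C$. The additive term is constant, and $\widetilde\Loss$ is an ordinary squared loss. Property (5), and (6) via the lift $\widetilde\phi\mapsto\phi$, follow immediately from this decomposition and continuity of the lift. Property (4) would follow if $P T^*\ne T(\theta)$; we argue it through (3) below.

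The main obstacle is (3): the reduced target must have rank $r_e$ on each $e\in C$, even though we have projected the target onto the model's boundary subspaces. Our plan is to use the cross-Gram matrices. For a boundary edge $f$, take the target realization with $H_f^*$ on the same side. Then $\widetilde T^*$ corresponds to replacing the target's far-side factor by $H_f^\top H_f^* = M_f^\top$, and \cref{cor:full-rank-invertible} says $M_f$ is invertible. So $\widetilde T^*$ equals the target TTN restricted to $V_C$, contracted with invertible matrices on its boundary legs. Invertible maps on external legs preserve every edge-matricization rank, so $\rank\widetilde T^{*(e)}=\rank T^{*(e)}$ restricted appropriately. This in turn equals $r_e$, because the target's far-side factors $H_f^*$ have full column rank $r_f$ (full target rank plus \cref{prop:realizability-equivalence}) and hence can be stripped without losing rank. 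Property (4) then follows at once from (2) and (3), since the model and target differ in rank on any edge of $C$. We expect the delicate bookkeeping to be ensuring that $M_f$'s invertibility is used with the correct side orientation at every boundary edge simultaneously. To handle this, we would choose a single target realization supported compatibly, and we would orient all boundary edges away from $C$.
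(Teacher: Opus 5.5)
Your proposal is correct and follows essentially the same route as the paper. You pick a rank-deficient component via \cref{thm:no-spurious}, gauge each full-rank boundary factor to be orthonormal (legitimate since $\ker H_f = D_f = 0$), use the resulting isometry to split off a constant so that (5)--(6) follow, get (3) from invertibility of the cross-Gram matrices $M_f$ (\cref{cor:full-rank-invertible}), and deduce (4) from (2)--(3).

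A few side remarks. \cref{lem:full-env-rank} cannot be applied to the far subtree, because that subtree may contain rank-deficient edges, as you yourself note; rely only on your fallback via \cref{prop:minimality-clean}. Your explicit point that the target's far-side factors have full column rank makes precise a step the paper's proof of (3) leaves implicit. The orientation bookkeeping you worry about is a non-issue, since \cref{cor:full-rank-invertible} holds for any fixed realization and either side of the edge.
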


\begin{proof}
If no internal edge is rank-deficient, then \cref{thm:no-spurious} implies $T(\theta)=T^*$, contrary to assumption. Hence at least one internal edge is rank-deficient. Choose a connected component $C$ of the subgraph induced by the rank-deficient internal edges, and root the tree at a node of $C$. Let $V_C$ be the set of vertices incident to edges of $C$. Every internal edge leaving $V_C$ is full-rank, since $C$ is a connected component.

We first describe a single boundary stripping step. Let $e$ be a full-rank boundary edge, and let $H_e$ be the factor produced by the frozen subtree outside $V_C$. Let $F_e$ be the matricized contraction of the retained side across $e$, so that the represented tensor has edge matricization $F_e H_e^{\top}$. Since $e$ is full-rank, \cref{prop:minimality-clean} gives $D_e=0$, so $H_e$ has full column rank. Set $G:=H_e^{\top} H_e$. The change of variables
\[
    H_e \mapsto H_eG^{-1/2},
    \qquad
    F_e \mapsto F_eG^{1/2}
\]
leaves $F_e H_e^{\top}$ unchanged. Thus it is only an invertible reparametrization of the retained boundary factor, and we may compute in coordinates where $H_e^{\top} H_e=I_{r_e}$.
The reduced parameter $\widetilde{\theta}$ below is defined in these coordinates; when we later lift reduced perturbations back to the original network, we undo this change of variables. In these coordinates the cross-Gram $M_e=H_e^{*\top}H_e$ differs from the original cross-Gram by right multiplication by $G^{-1/2}$, so it is still invertible.

Now apply the same change of variables to every maximal subtree attached to $V_C$. Let $B$ be the set of boundary edges leaving $V_C$. After these changes, each frozen factor satisfies $H_e^{\top} H_e=I_{r_e}$ for $e\in B$.

For any choice of the retained core tensors, let $X$ be the tensor obtained by contracting only the vertices in $V_C$, leaving open the original external modes incident to $V_C$ and one additional mode of dimension $r_e$ for each $e\in B$. Reattaching the frozen subtrees is the linear map
\[
    \Gamma X := \text{the tensor obtained from $X$ by contracting its $e$-mode with $H_e$ for every $e\in B$}.
\]
Since the columns of each $H_e$ are orthonormal, $\Gamma$ is an isometry. Therefore, for every such open tensor $X$,
\[
    \frac12\|\Gamma X-T^*\|_F^2
    =
    \frac12\|X-\Gamma^*T^*\|_F^2
    +
    \frac12\|(I-\Gamma\Gamma^*)T^*\|_F^2 .
\]
Observe the second term is a constant independent of $X$. We define the reduced TTN to be the network on the vertices $V_C$, with the boundary edges in $B$ treated as external modes, and we set
\[
    \widetilde{T}^* := \Gamma^*T^* .
\]
Concretely, this target reduction acts as follows on each stripped boundary edge. For a boundary edge $e\in B$, the adjoint $\Gamma^*$ contracts the corresponding target factor with $H_e$. Thus, using the target cut factorization $T^{*(e)}=F_e^*H_e^{*\top}$, the target factor on the new boundary mode is
\[
    T^{*(e)}H_e = F_e^*H_e^{*\top}H_e = F_e^*M_e .
\]
The reduced parameter $\widetilde{\theta}$ is the retained collection of core tensors in the chosen coordinates, and the reduced loss is
\[
    \widetilde{\Loss}(\widetilde{\phi})
    :=
    \frac12\|\widetilde{T}(\widetilde{\phi})-\widetilde{T}^*\|_F^2 .
\]
The internal edges of this reduced TTN are exactly the edges of $C$, proving (1). Since the construction only changes coordinates on boundary modes, every internal edge of $\widetilde{T}(\widetilde{\theta})$ remains rank-deficient, proving (2).

It remains to check the target ranks on the core edges. For a stripped boundary edge, $M_e$ is invertible by \cref{cor:full-rank-invertible}. Thus replacing $F_e^*$ by $F_e^*M_e$ only multiplies the corresponding boundary mode of the core target by an invertible matrix. In any matricization across an internal core edge, this appears as multiplication on the left or right by an invertible matrix, and therefore does not change the rank. Hence, for each $e\in C$,
\[
    \rank(\widetilde{T}^{*(e)})=\rank(T^{*(e)})=r_e.
\]
This proves (3). For any $e\in C$, the model has $\rank(\widetilde{T}(\widetilde{\theta})^{(e)})<r_e$ while the target has $\rank(\widetilde{T}^{*(e)})=r_e$, so $\widetilde{T}(\widetilde{\theta})\neq \widetilde{T}^*$, proving (4).

It remains to record the local variational relation between the two losses. The stripping construction defines a local embedding
\[
    \jmath:\widetilde{\Theta}\to\Theta
\]
near $\widetilde{\theta}$: given reduced core tensors, undo the boundary coordinate changes and reattach all frozen tensors outside $V_C$. This map sends $\widetilde{\theta}$ to $\theta$. By the isometry identity above,
\[
    \Loss(\jmath(\widetilde{\phi}))=\widetilde{\Loss}(\widetilde{\phi})+C
\]
for all $\widetilde{\phi}$ sufficiently close to $\widetilde{\theta}$, where $C$ is independent of $\widetilde{\phi}$. The map $\jmath$ is continuous, and the coordinate changes on the retained boundary modes are invertible. Therefore local minimality of $\theta$ implies local minimality of $\widetilde{\theta}$, proving (5). The same identity shows that any strict local descent path for $\widetilde{\Loss}$ lifts through $\jmath$ to a strict local descent path for $\Loss$, proving (6).
\end{proof}

\subsection{Descent on the Rank-Deficient Core}
\label{subsec:descent}
We now construct a descent path on the rank-deficient core, using the reduced squared loss supplied by \cref{lem:core-properties}.

For a core vertex $v$, let $\widetilde n_v$ be defined analogous to $n_v$ (\cref{app:setup}), as the product of the dimensions of the external modes incident to $v$ in the reduced TTN, with $\widetilde n_v=1$ if there are none. Thus a rank-one tensor on the reduced external modes can be written as $\bigotimes_v u_v$, with $u_v\in\R^{\widetilde n_v}$.

By \cref{lem:core-properties}, the reduced residual $\widetilde{R} := \widetilde{T}(\widetilde{\theta})-\widetilde{T}^*$ is nonzero. Hence there is a rank-one tensor
$U = \bigotimes_{v} u_v$, with $u_v \in \R^{\widetilde n_v}$, such that $\langle \widetilde{R}, U \rangle < 0$.

For each internal core edge $e$, choose a unit vector $\eta_e \in D_e$. For each core node $v$ with incident core edges $e_1, \ldots, e_k$, perturb
\[
    \delta W_v := \eta_{e_1} \otimes \cdots \otimes \eta_{e_k} \otimes u_v.
\]
Let $\widetilde{\theta}(t)$ be the reduced parameter obtained by replacing each core tensor $W_v$ by $W_v+t\,\delta W_v$.

\begin{lemma}
\label{lem:mixed-vanishing}
Let $N$ be the number of core nodes. Then
\[
    \widetilde{T}(\widetilde{\theta}(t))
    =
    \widetilde{T}(\widetilde{\theta}) + t^N U .
\]
\end{lemma}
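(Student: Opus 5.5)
Since each $W_v(t)=W_v+t\,\delta W_v$ is affine in $t$ and contraction is multilinear in the node tensors, we expand $\widetilde{T}(\widetilde{\theta}(t))$ over subsets. For each subset $S$ of core nodes, the term $T_S$ is the contraction in which node $v$ carries $\delta W_v$ if $v\in S$ and $W_v$ otherwise. This gives
\[
\widetilde{T}(\widetilde{\theta}(t))=\sum_{S\subseteq V_C} t^{|S|}\,T_S .
\]
The term $S=\emptyset$ is $\widetilde{T}(\widetilde{\theta})$. We will show that $T_{V_C}=U$ and that $T_S=0$ for every nonempty proper subset $S$; the claim then follows with $N=|V_C|$.

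\emph{Mixed terms vanish.} Take $\emptyset\neq S\subsetneq V_C$. The core graph is connected, because $C$ is a connected component and the reduced internal edges are exactly $C$ by \cref{lem:core-properties}(1). So some core edge $e=\{u,v\}$ has $u\in S$ and $v\notin S$. On the $e$-mode, $\delta W_u$ carries the factor $\eta_e$, while $W_v$ is unperturbed. The contraction along $e$ therefore pairs $\eta_e$ with the $e$-mode of $W_v$, which gives $(\mat_e W_v)\eta_e$, up to the ordering of the remaining modes. Since $\eta_e\in D_e\subseteq\ker(\mat_e W_v)$, this is zero, and by multilinearity the whole contraction $T_S$ vanishes.

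We must check that $D_e$ is still the dormant subspace of the reduced core tensors. The stripping of \cref{lem:core-properties} only changes coordinates on the boundary modes, which are not core edges. Before that, the reduction of \cref{subsec:full-rank-reduction} and \cref{prop:minimality-clean} at the minimum-norm point give $D_e=\ker(\mat_e W_u)=\ker(\mat_e W_v)$. The reduced point is still minimum norm by \cref{lem:reduction}, and rank deficiency (\cref{lem:core-properties}(2)) gives $D_e\neq 0$, so a unit vector $\eta_e$ exists. I expect this bookkeeping to be the main point needing care: we must confirm that the kernels are taken with respect to the tensors actually present in the reduced core and that the boundary reparametrizations do not touch core-edge modes.

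\emph{Full term.} For $S=V_C$, every node carries $\delta W_v=\bigotimes_{e\ni v}\eta_e\otimes u_v$. Each core edge $e$ is contracted between two copies of $\eta_e$, which contributes $\langle\eta_e,\eta_e\rangle=1$. What remains is the tensor product of the external factors, $\bigotimes_v u_v=U$. Hence $T_{V_C}=U$, and the expansion reduces to $\widetilde{T}(\widetilde{\theta})+t^N U$.
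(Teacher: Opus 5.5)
Your proof is correct and is essentially the paper's argument: you expand over subsets $S\subseteq V_C$ by multilinearity, and you kill every proper nonempty $S$ using a core edge that crosses the cut together with $\eta_e\in D_e=\ker(\mat_e W_v)$. The full term then contracts to $U$ with coefficient $t^N$. Your added check goes slightly beyond the paper, which leaves it implicit: the boundary reparametrizations of \cref{lem:core-properties} act only on non-core modes, so the core-edge kernels and $D_e\neq 0$ survive the reductions.
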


\begin{proof}
For \(S\subseteq V_C\), define \(\widetilde{\theta}^S\) by
\[
    W_v(\widetilde{\theta}^S)
    =
    \begin{cases}
        \delta W_v, & v\in S,\\
        W_v(\widetilde{\theta}), & v\notin S.
    \end{cases}
\]
By multilinearity of the reduced TTN contraction,
\begin{equation}
\label{eq:mixed-vanishing-expansion}
    \widetilde{T}(\widetilde{\theta}(t))
    =
    \sum_{S\subseteq V_C} t^{|S|}\widetilde{T}(\widetilde{\theta}^S).
\end{equation}
We claim that \(\widetilde{T}(\widetilde{\theta}^S)=0\) for every proper
nonempty subset \(S\subsetneq V_C\). Indeed, connectedness of the core gives
an internal core edge \(e=\{x,y\}\) with \(x\in S\) and \(y\notin S\). The
tensor \(\delta W_x\) has factor \(\eta_e\) on its \(e\)-mode, while the tensor
at \(y\) is unperturbed. Since \(D_e=\ker(\mat_e W_y)\) by
\cref{prop:minimality-clean}, 
contracting \(W_y\) against \(\eta_e\) along its
\(e\)-mode yields zero. Hence the entire term
\(\widetilde{T}(\widetilde{\theta}^S)\) is zero.

Of the terms in \cref{eq:mixed-vanishing-expansion}, only \(S=\varnothing\) and \(S=V_C\) remain. The first term is
\(\widetilde{T}(\widetilde{\theta})\). In the second, every internal core edge
contracts two copies of \(\eta_e\), giving
\(\langle\eta_e,\eta_e\rangle=1\), and the remaining external factors give
\(\bigotimes_v u_v=U\). Since this term uses one perturbation at each of the
\(N\) core vertices, its coefficient is \(t^N\).
\end{proof}

\begin{lemma}
\label{lem:strict-descent}
For all sufficiently small $t > 0$,
\[
    \widetilde{\Loss}(\widetilde{\theta}(t))
    <
    \widetilde{\Loss}(\widetilde{\theta}).
\]
\end{lemma}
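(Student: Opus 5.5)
The plan is to substitute the exact polynomial form of the output from \cref{lem:mixed-vanishing} into the reduced squared loss and expand. Writing $\widetilde{R} = \widetilde{T}(\widetilde{\theta}) - \widetilde{T}^*$, \cref{lem:mixed-vanishing} gives $\widetilde{T}(\widetilde{\theta}(t)) - \widetilde{T}^* = \widetilde{R} + t^N U$. Hence
\[
    \widetilde{\Loss}(\widetilde{\theta}(t))
    =
    \widetilde{\Loss}(\widetilde{\theta})
    + t^N \langle \widetilde{R}, U\rangle
    + \tfrac12 t^{2N}\|U\|_F^2 .
\]
This is an exact identity in $t$, so no Taylor remainder needs to be controlled.

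Next I would check that the choices made before the lemma are legitimate, since the identity above is only as good as the perturbation behind it. First, each $D_e$ is nonzero for every core edge. This follows because every internal edge of the reduced model is rank-deficient by \cref{lem:core-properties}(2), and because $\widetilde{\theta}$ is still minimum-norm, so that $\dim D_e = r_e - \rank(T^{(e)}) > 0$ by \cref{prop:minimality-clean}. The minimum-norm property of $\widetilde{\theta}$ is the step that needs care. If it is not directly available, I would instead note that \cref{lem:mixed-vanishing} only uses $\eta_e \in \ker(\mat_e W_y)$ at both endpoints. That kernel condition survives the core reduction, because the boundary reparametrizations act only on boundary modes and the core tensors are otherwise unchanged. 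Second, a rank-one $U$ with $\langle \widetilde{R}, U\rangle < 0$ exists because $\widetilde{R} \neq 0$ by \cref{lem:core-properties}(4), and rank-one tensors span the whole tensor space. So some rank-one $U_0$ has $\langle \widetilde{R}, U_0\rangle \neq 0$, and one can take $U = \pm U_0$ accordingly.

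To conclude, set $c := -\langle \widetilde{R}, U\rangle > 0$. The loss change is then $t^N\bigl(-c + \tfrac12 t^N \|U\|_F^2\bigr)$, which is strictly negative for every $0 < t < (2c/\|U\|_F^2)^{1/N}$, as the statement requires. The main obstacle is not this final inequality, which is elementary, but the bookkeeping that justifies \cref{lem:mixed-vanishing}'s hypotheses on the reduced core: nonzero dormant subspaces, and the kernel property at both endpoints of each core edge. Granting those, the descent is immediate. By \cref{lem:core-properties}(6) it lifts to strict descent for $\Loss$, with order $N$ matching the size of the core.
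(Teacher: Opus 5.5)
Your proposal is correct and follows the paper's proof exactly: substitute $\widetilde{T}(\widetilde{\theta}(t))=\widetilde{T}(\widetilde{\theta})+t^N U$ from \cref{lem:mixed-vanishing} into the squared loss to obtain the exact change $t^N\langle \widetilde{R},U\rangle+\tfrac12 t^{2N}\|U\|_F^2$, which is negative for small $t>0$ because $\langle \widetilde{R},U\rangle<0$. Your extra bookkeeping goes beyond what the paper writes for this lemma but is sound: $U$ exists because rank-one tensors span, and $D_e\neq 0$ holds with the kernel property at both endpoints. Your fallback, that these kernels survive the boundary change of variables, is the more robust justification, since that change of variables is not orthogonal and need not preserve minimum norm.
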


\begin{proof}
By \cref{lem:mixed-vanishing} and the definition of the loss function,
\[
    \widetilde{\Loss}(\widetilde{\theta}(t))
    -
    \widetilde{\Loss}(\widetilde{\theta})
    =
    t^N \langle \widetilde{R}, U \rangle
    +
    \frac{1}{2} t^{2N} \|U\|_F^2 .
\]
Since $\langle \widetilde{R}, U \rangle < 0$, this is negative for all sufficiently small $t > 0$.
\end{proof}

The path $\widetilde{\theta}(t)$ is continuous and satisfies $\widetilde{\theta}(0)=\widetilde{\theta}$, so this descent is arbitrarily local in parameter space.

We now prove the main theorem by merely lifting this descent path on the core to the entire network.

\subsection{Main Theorem}
\label{subsec:main-theorem}

\nospuriouslocalmin*

\begin{proof}
Suppose for contradiction that $\theta$ is a minimum-norm local minimum with $T(\theta) \neq T^*$. By \cref{lem:core-properties}, there is a reduced core parameter $\widetilde{\theta}$ which is a local minimum of the reduced loss $\widetilde{\Loss}$. But \cref{lem:strict-descent} gives a path $\widetilde{\theta}(t)$ with $\widetilde{\Loss}(\widetilde{\theta}(t))<\widetilde{\Loss}(\widetilde{\theta})$ for all sufficiently small $t>0$, a contradiction.
\end{proof}

\begin{remark}
\label{rem:escape-degree}
The descent direction has order $N$, the number of nodes in the rank-deficient core: $\widetilde{\Loss}(\widetilde{\theta}(t)) - \widetilde{\Loss}(\widetilde{\theta}) \sim t^N \langle \widetilde{R}, U \rangle$. Intuitively, this means that the more the network needs to learn ``in one step,'' the higher the order of the descent direction and the harder it is to escape the saddle point. \citet{abbe2023sgd} see similar phenomenon in neural networks and hypothesize that this is a general mechanism for learning difficulty.
\end{remark}

\subsection{Tightness of Assumptions}
\label{subsec:tightness}

This subsection details the necessity of the core assumptions underlying \cref{thm:no-spurious-local-min}. Note that the core claim of this paper is an \textit{existence} claim (that a model class with worst-case hardness can have a benign loss landscape) for which these assumptions are sufficient, but future generalizations may wish to weaken them.

For each condition, we outline explicit counterexamples or known results demonstrating that removing it permits the existence of spurious local minima. However, we believe that generalized results may still be possible by merely weakening these assumptions or the desired conclusions.

\paragraph{Minimum-norm is necessary.}
The minimum-norm condition in \cref{thm:no-spurious-local-min} is essential.  \cref{app:counterexample} constructs an explicit spurious local minimum for a realizable $3$-leaf Tucker decomposition at a parameter that is not minimum norm.

\paragraph{Realizability is necessary.}
 Without realizability, \cref{thm:no-spurious-local-min} fails even in the simplest nontrivial TTN topology. Consider the star graph with all bond dimensions $r_e = 1$: the TTN output is $T(\theta) = \lambda u_1 \otimes \cdots \otimes u_m$, and fitting a non-rank-1 target $T^*$ reduces to the best rank-1 tensor approximation problem. For $m \geq 3$ this problem is known to have spurious local minima for generic targets, in contrast to the matrix case ($m=2$), where \citet{baldi1989neural} rules them out. An explicit $3\times 3\times 3 \times 3$ supersymmetric target with a nonglobal local minimum is given in \citet[Example 4]{kofidis2002best}.

\paragraph{The square loss is necessary.}

Consider the case of a two-layer deep linear network $W_1 \in \R^{m \times n}, W_2 \in \R^{n \times m}$. If the hidden width $m$ is less than the input/output width $n$, then \citet{trager2019pure} shows one can have spurious critical points if one uses losses other than the square loss.\footnote{This does not appear to be prevented by our minimum-norm assumption.} As tree tensor networks generalize deep linear networks, this shows that the square loss assumption is necessary. 

However, note that this does not preclude the possibility of results like that of \citet{trager2019pure}, which show that the mere parameterization cannot create spurious critical points in deep linear networks if there are no such critical points in function space. We conjecture that a similar result holds in tree tensor networks under the minimum-norm assumption, but leave this to future work.

\paragraph{The tree topology is necessary.} 

\citet{chen2020tensor} show that tensor networks with a ring topology can have spurious local minima. They exhibit, for any $d\geq 3$, a target of bond dimension $r+1$ and external dimension $n \geq r^2+1$ together with a non-strict spurious local minimum in the ring class of bond dimension $r^{d-1}$, so the example persists under substantial over-parameterization. They use cycles to construct a mechanism similar to our counter-example in \cref{app:counterexample}, where local variations to match the target create cross-terms that dominate the loss. We suspect that this example does not violate the minimum-norm assumption, and thus it seems plausible that acyclicity is an independently necessary assumption.

\section{Escape Paths}
\label{app:escape}

As a corollary of the analysis in \cref{app:main}, we establish absence of bad valleys without the minimum-norm assumption.

\escapepaths*

\begin{proof}
The fiber $\mathcal{F} := \{\theta' : T(\theta') = T(\theta_0)\}$ is the preimage of a point under the polynomial map $T$, hence a real algebraic variety. Let $C$ be the connected component of $\mathcal{F}$ containing $\theta_0$. Since $\mathcal{F}$ is closed and the norm is proper, the norm attains its minimum on $C$ at some $\theta_{\min} \in C$.

Semialgebraic sets have finitely many connected components, each semialgebraic \citep{bochnak2013real}. Hence $C$ is open as a subset of $\mathcal{F}$, and so $\theta_{\min}$ is a local minimum of the norm over the full fiber $\mathcal{F}$. By \cref{prop:min-norm-equivalences}, $\theta_{\min}$ locally minimizing the norm over $\mathcal{F}$ implies the minimum-norm condition. Since connected semialgebraic sets are path-connected \citep{bochnak2013real}, there exists a continuous path from $\theta_0$ to $\theta_{\min}$ within $C$. Along this path, $T$ is constant, hence $\Loss$ is constant.

Since $T(\theta_{\min}) = T(\theta_0) \neq T^*$, the proof of \cref{thm:no-spurious-local-min} gives a continuous path from $\theta_{\min}$ along which $\Loss$ strictly decreases. Concatenating this path with the prior constant-loss path from $\theta_0$ to $\theta_{\min}$ gives the desired non-increasing path from $\theta_0$ to strictly lower loss.
\end{proof}

\begin{remark}
This is weaker than \cref{thm:no-spurious-local-min}: the constant-loss segment may leave any neighborhood of $\theta_0$, so the corollary does not rule out spurious local minima in unrestricted parameter space.
\end{remark}

\section{The Minimum-Norm Condition}
\label{app:minimum-norm}

Our core result, \cref{thm:no-spurious-local-min}, applies only for points in parameter space satisfying the minimum-norm condition
(\cref{def:minimum-norm}). We prove two main results to support the use of this assumption; we believe it is a significant contribution rather than a limitation. First, minimum norm is
equivalent to a \emph{balancedness} condition, an easily-checkable algebraic identity (\cref{def:balanced} and \cref{prop:min-norm-equivalences}). This identity
generalizes the well-known balancedness condition of the deep linear network literature, where
it is conserved by gradient flow
\citep{arora2018optimization,du2018algorithmic} and assumed in standard
convergence analyses \citep{arora2018convergence,bah2022learning}. Second, gradient flow on any loss which depends only on the tensor $T(\theta)$ represented by the TTN (including the square loss we use) conserves balancedness, and \(\ell_2\)
regularization drives the balancedness defect to zero at an exponential rate
(\cref{prop:balancedness-conserved,cor:regularization-balances}).

\textit{The minimum-norm hypothesis is therefore equivalent to (a generalization of) a commonly-assumed and easily-checkable balancedness assumption, preserved by training dynamics, and produced by regularization}. We believe this is a significant contribution of our work, and one that will provide a basis for future work on training dynamics of tree tensor networks.

Significantly, our proofs in this appendix make substantial use of (real) geometric invariant theory (GIT) \citep{mumford1994geometric, richardson1990minimum}. The gauge freedom of a TTN is an action of the reductive group
\(G_{\Tt}=\prod_e\GL(r_e)\) on parameter space; minimum-norm parameters are equivalent to parameters which satisfy the \emph{Kempf-Ness condition} \citep{kempf1979length,richardson1990minimum}; balancedness is the vanishing
of its moment map; and the conservation law associated to balancedness is a Noether law for the gauge
symmetry. The core result is that every
fiber of the contraction map contains a unique closed gauge orbit
(\cref{prop:unique-closed-gauge-orbit-in-fiber}). This can be viewed as a canonical-form
statement: each representable tensor has a distinguished parameterization,
unique up to the orthogonal gauge group, and the contraction map separates
closed gauge orbits exactly as a GIT quotient map does. These conclusions require the tree topology
and fail on networks with cycles (\cref{rem:cycles}). The same machinery
appears in \citet{lindsey2025regularization}, who apply the Kempf--Ness
theorem to deep linear networks, and in \citet{acuaviva2023minimal}, who
define canonical forms of complex tensor networks by norm minimization over
the gauge group.

\subsection{The Fiber has a Unique Closed Gauge Orbit}
 
This subsection proves that every fiber of the contraction map contains a unique
closed gauge orbit (\cref{prop:unique-closed-gauge-orbit-in-fiber}). The
argument has three steps. We first treat the two-node network, where
contraction is just matrix multiplication, in \cref{lem:minimal-factorizations,cor:factorization-orbit-closures}. Second, we show that
contracting an internal edge of the tree commutes with taking orbit closures
(\cref{lem:edge-contraction-orbit-closures}). Third, we collapse the tree one edge at a time
to show that any two parameters with the same represented tensor have
intersecting orbit closures (\cref{prop:equal-tensors-meeting-orbit-closures}),
which gives uniqueness, since distinct orbits are disjoint (\cref{prop:unique-closed-gauge-orbit-in-fiber}). Existence of a closed gauge orbit is general, but \textit{uniqueness} requires the tree topology, as it
fails on networks with cycles (\cref{rem:cycles}).
 
\subsubsection{The Two-Node Case: Minimal Factorizations}
 
Throughout this subsection, fix dimensions \(m,n,r\) and let \(\GL(r)\) act on
pairs \((A,B)\in\R^{m\times r}\times\R^{r\times n}\) by
\begin{equation}
\label{eq:minimal-factorization-gauge}
    g\cdot(A,B)=(Ag^{-1},gB),
\end{equation}
leaving the product \(AB\) invariant.
 
\begin{definition}[Minimal factorization]
\label{def:minimal-factorization}
A pair \((A,B)\) is a \emph{minimal factorization} of \(C:=AB\) if
\[
    \rank A=\rank B=\rank C.
\]
\end{definition}

\begin{lemma}
\label{lem:minimal-factorizations}
Let \(A\in\R^{m\times r}\), \(B\in\R^{r\times n}\), and \(C=AB\).
\begin{enumerate}
    \item[(i)] The pair \((A,B)\) is a minimal factorization of \(C\) if and only if
    \(\R^r=\im B\oplus\ker A\).
    \item[(ii)] The orbit closure \(\overline{\GL(r)\cdot(A,B)}\) contains a
    minimal factorization of \(C\).
    \item[(iii)] The minimal factorizations of \(C\) form a single
    \(\GL(r)\)-orbit.
\end{enumerate}
\end{lemma}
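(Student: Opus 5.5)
We prove the three parts in order, working with the linear-algebraic subspaces \(\im B\) and \(\ker A\) in \(\R^r\). Throughout, \(\rank C\le\min(\rank A,\rank B)\), and \(\im C = A(\im B)\).

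\textbf{Part (i).} Suppose first that \(\R^r=\im B\oplus\ker A\). Then \(A\) restricted to \(\im B\) is injective, so \(\rank C=\dim A(\im B)=\rank B\). The direct sum also gives \(\rank B=\dim\im B=r-\dim\ker A=\rank A\). Conversely, suppose \(\rank A=\rank B=\rank C\).
\begin{itemize}
\item From \(\rank C=\rank B\), the restriction \(A|_{\im B}\) is injective, so \(\im B\cap\ker A=0\).
\item From \(\rank B=\rank A=r-\dim\ker A\), the dimensions of \(\im B\) and \(\ker A\) add up to \(r\).
\end{itemize}
Together these give the direct sum.

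\textbf{Part (ii).} We degenerate the pair along a one-parameter subgroup. Choose a complement \(K\) of \(\im B\cap\ker A\) inside \(\ker A\), and a complement \(J\) of \(\im B\cap \ker A\) inside \(\im B\). The space \(S:=\im B+\ker A\) decomposes as
\[
S = J\oplus(\im B\cap\ker A)\oplus K.
\]
Pick a complement \(L\) with \(\R^r=S\oplus L\). Let \(g_t\) act as follows:
\begin{itemize}
\item on \(J\oplus K\) as the identity;
\item on \(\im B\cap\ker A\) as multiplication by \(t\);
\item on \(L\) as multiplication by \(t^{-1}\).
\end{itemize}
We then track the two factors as \(t\to0\).
\begin{itemize}
\item \(g_tB\): its columns lie in \(\im B=J\oplus(\im B\cap\ker A)\). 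The \(J\)-components are fixed and the \(\im B\cap\ker A\)-components shrink, so \(g_tB\to B':=\pi B\), where \(\pi\) projects onto \(J\) along the decomposition.
\item \(Ag_t^{-1}\): it kills \(\ker A\), equals \(A\) on \(J\), and equals \(tA\) on \(L\). Hence \(Ag_t^{-1}\to A':=A\) restricted to \(J\) and zero on \(\ker A\oplus L\).
\end{itemize}
Now check the limit. \(AB'=AB=C\), since \(B-B'\) has columns in \(\ker A\); the product is also preserved automatically by continuity. Moreover \(\im B'=J\) and \(\ker A'=\ker A\oplus L\), which are complementary in \(\R^r\). So \((A',B')\) is minimal by (i), and it lies in the orbit closure.

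\textbf{Part (iii).} The set of minimal factorizations is \(\GL(r)\)-stable, since ranks are preserved by the action. Given two minimal factorizations \((A,B)\) and \((A',B')\) of \(C\), we construct \(g\) with \(g\cdot(A,B)=(A',B')\) in three steps.
\begin{enumerate}
\item By (i), \(A|_{\im B}:\im B\to\im C\) is an isomorphism, since it is injective and surjective onto \(\im C\) with equal dimensions. Likewise \(A'|_{\im B'}:\im B'\to\im C\) is an isomorphism.
\item Define \(g\) on \(\im B\) by \(g=(A'|_{\im B'})^{-1}A|_{\im B}\), and on \(\ker A\) by any isomorphism onto \(\ker A'\). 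The dimensions match: \(\dim\ker A=r-\rank C=\dim\ker A'\). Using \(\R^r=\im B\oplus\ker A=\im B'\oplus\ker A'\), this defines an invertible \(g\).
\item Check the two identities. On \(\im B\) we have \(A'g=A\) by construction, and on \(\ker A\) both sides vanish, so \(A'g=A\), i.e. \(Ag^{-1}=A'\). For the other factor, \(A'gB=AB=C=A'B'\), and both \(gB\) and \(B'\) have columns in \(\im B'\), on which \(A'\) is injective. Hence \(gB=B'\).
\end{enumerate}

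The main obstacle is part (ii): choosing the decomposition so that the limit exists and remains in the orbit closure. In particular, the factor \(t^{-1}\) on \(L\) must not blow up \(B\). It does not, because \(B\) has no \(L\)-component, and this is why \(L\) is taken complementary to all of \(S\), which contains \(\im B\).
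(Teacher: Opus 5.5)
Your proof is correct and follows the paper's argument essentially step for step. You use the same rank--nullity bookkeeping for (i), the same one-parameter-subgroup degeneration for (ii) (scaling $\im B\cap\ker A$ by $t$ and a complement by $t^{-1}$), and for (iii) the same construction of $g$ as $(A'|_{\im B'})^{-1}A|_{\im B}$ on $\im B$, extended by any isomorphism $\ker A\to\ker A'$. The only difference is cosmetic: your finer decomposition in (ii) leaves the part of $\ker A$ outside $\im B$ unscaled, which is harmless since $A$ kills it. To get $\ker A'=\ker A\oplus L$ exactly, you should state that $A$ is injective on $J$, which holds because $J\cap\ker A\subseteq J\cap(\im B\cap\ker A)=0$.
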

 
\begin{proof}
\emph{(i)} Since \(A(\im B)=\im(AB)=\im C\), the restriction
\(A|_{\im B}:\im B\to\im C\) is surjective. Hence we have the equivalences
\[
    \rank B=\rank C
    \quad\Longleftrightarrow\quad
    A|_{\im B}\text{ is injective}
    \quad\Longleftrightarrow\quad
    \im B\cap\ker A=0 .
\]
If this holds, then \(\dim\im B=\rank C\). Under this condition, we have the further equivalences
\[
    \rank A=\rank C
    \quad\Longleftrightarrow\quad
    \dim\ker A=r-\dim\im B
    \quad\Longleftrightarrow\quad
    \dim(\im B+\ker A)=r .
\]
Combining the two equivalences, we have that \(\rank A = \rank B = \rank C\) is equivalent to \(\im B\cap\ker A=0\) and \(\dim(\im B+\ker A)=r\). Notice that the two conditions \(\im B\cap\ker A=0\) and \(\dim(\im B+\ker A)=r\) are together equivalent to the internal direct
sum decomposition \(\R^r=\im B\oplus\ker A\).
 
\emph{(ii)} Let \(K=\im B\cap\ker A\), and choose a complement \(L\) of \(K\) in
\(\im B\) and a complement \(M\) of \(\im B\) in \(\R^r\), so that
\(\R^r=L\oplus K\oplus M\). Let \(\pi_L,\pi_K,\pi_M\) be the associated
projections. Note \(\pi_MB=0\) (as \(\im B=L\oplus K\)) and \(A\pi_K=0\) (as
\(K\subseteq\ker A\)).

We will take the limit of a particular $\GL(r)$-action to eliminate $K$ and $M$, leading to a minimal factorization. Precisely, for \(t>0\) define \(g_t=\id_L\oplus\,t\,\id_K\oplus\,t^{-1}\id_M\in\GL(r)\), which acts on $(A, B)$ according to \cref{eq:minimal-factorization-gauge}. In the limit as $t$ goes to zero, this action yields
\[
    \lim_{t\rightarrow 0} \bigl(Ag_t^{-1},\,g_tB\bigr)
    = \lim_{t\rightarrow 0} \bigl(A\pi_L+tA\pi_M,\;\pi_LB+t\pi_KB\bigr)
    =
    \bigl(A\pi_L,\,\pi_LB\bigr).
\]
Label this limit by \(\bigl(A_0, B_0\bigr) :=\bigl(A\pi_L,\,\pi_LB\bigr)\). By construction, the pair \(\bigl(A_0, B_0\bigr)\) is a member of the orbit closure \(\overline{\GL(r)\cdot(A,B)}\). Moreover, it is a factorization of \(C\), since \((Ag_t^{-1})(g_tB) = AB = C\) for all finite $t$, and matrix multiplication is continuous.

It is a minimal factorization by~(i): \(\im B_0=\pi_L(L\oplus K)=L\), and
\(\ker A_0=K\oplus M\), since \(A\) is injective on \(L\) (indeed
\(L\cap\ker A\subseteq L\cap K=0\)); hence \(\R^r=\im B_0\oplus\ker A_0\).
 
\emph{(iii)} Let \((A_0,B_0)\) and \((A_1,B_1)\) be minimal factorizations of
\(C\). To show they lie on a single \(\GL(r)\)-orbit, we will construct an explicit \(g \in \GL(r)\) such that \(g\cdot(A_0,B_0)=(A_1,B_1)\).
By part~(i), we have
\(\R^r=\im B_i\oplus\ker A_i\) for \(i=0,1\). Observe that the restriction
\[
    A_i|_{\im B_i}:\im B_i\to\im C
\]
is an isomorphism: it is surjective because \(A_i(\im B_i)=\im C\), and
injective because \(\im B_i\cap\ker A_i=0\). Define
\[
    h:=\bigl(A_1|_{\im B_1}\bigr)^{-1}\circ\bigl(A_0|_{\im B_0}\bigr)
    :\im B_0\to\im B_1
\]
so that \(A_1h=A_0\) on \(\im B_0\). Since \(\ker A_0\) and \(\ker A_1\) have
the same dimension, extend \(h\) by any isomorphism
\(\ker A_0\to\ker A_1\). This gives an invertible map \(g\in\GL(r)\), using the
two decompositions \(\R^r=\im B_i\oplus\ker A_i\) from~(i).

We conclude by showing that \(g\cdot(A_0,B_0)=(A_1,B_1)\), which requires that \(A_0g^{-1}=A_1\) and \(gB_0 = B_1\). First, we show \(gB_0 = B_1\). Observe that
\[
    A_1(gB_0)=A_0B_0=C=A_1B_1.
\]
Both \(gB_0\) and \(B_1\) have columns in \(\im B_1\), where \(A_1\) is
injective, so \(gB_0=B_1\). To show that \(A_0g^{-1}=A_1\), again use the decomposition \(\R^r=\im B_1\oplus\ker A_1\) from~(i). On \(\im B_1\), we have \(g^{-1}=h^{-1}\) and \(A_0h^{-1}=A_1\), so obviously \(A_0g^{-1}=A_1\); on \(\ker A_1\), \(g^{-1}\) maps into \(\ker A_0\), so both
sides vanish. Therefore \((A_1,B_1)=g\cdot(A_0,B_0)\).
\end{proof}

\begin{corollary}
\label{cor:factorization-orbit-closures}
Let \(A\in\R^{m\times r}\), \(B\in\R^{r\times n}\), and \(C=AB\). Then \(\overline{\GL(r)\cdot(A,B)}\) contains \emph{every}
minimal factorization of \(C\), and in particular one with
\begin{equation}
\label{eq:factorization-orbit-closures}
    \|A_0\|_F^2=\|B_0\|_F^2=\|C\|_*,
\end{equation}
the nuclear norm (sum of singular values) of \(C\). Consequently, any two
factorizations of the same matrix have intersecting orbit closures.
\end{corollary}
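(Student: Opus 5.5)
The corollary follows by combining parts (ii) and (iii) of \cref{lem:minimal-factorizations}, with an explicit SVD-based minimal factorization supplying the norm identity. The orbit closure is closed and $\GL(r)$-invariant: it is the closure of an invariant set, and each $g$ acts as a homeomorphism. By (ii), $\overline{\GL(r)\cdot(A,B)}$ contains some minimal factorization $(A_0,B_0)$ of $C$. By (iii), every other minimal factorization of $C$ has the form $g\cdot(A_0,B_0)$ for some $g\in\GL(r)$. By invariance of the closure, all of these lie in $\overline{\GL(r)\cdot(A,B)}$. This proves the first claim.

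For the norm identity, I will write down one specific minimal factorization. Let $\rho:=\rank C\le r$ and take a compact SVD $C=U\Sigma V^\top$, with $U\in\R^{m\times\rho}$, $V\in\R^{n\times\rho}$ having orthonormal columns and $\Sigma=\operatorname{diag}(\sigma_1,\dots,\sigma_\rho)$. Set
\[
A_0=[\,U\Sigma^{1/2}\;\;0\,]\in\R^{m\times r},\qquad B_0=\begin{bmatrix}\Sigma^{1/2}V^\top\\ 0\end{bmatrix}\in\R^{r\times n}.
\]
Then $A_0B_0=C$ and $\rank A_0=\rank B_0=\rho=\rank C$, so $(A_0,B_0)$ is minimal. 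Moreover $\|A_0\|_F^2=\tr(\Sigma^{1/2}U^\top U\Sigma^{1/2})=\tr\Sigma=\|C\|_*$, and likewise $\|B_0\|_F^2=\|C\|_*$. By the first claim, this $(A_0,B_0)$ lies in the orbit closure. The degenerate case $C=0$ is covered with $\rho=0$ and $A_0,B_0=0$.

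For the final sentence, let $(A,B)$ and $(A',B')$ both factor $C$. By the first claim, both orbit closures contain the same $(A_0,B_0)$, so they intersect. The argument uses only the dimensions $m,n,r$ and the product $C$, so the main technical work lies in (ii) and (iii), which are already established. The only point needing care is that the closure of an orbit is $\GL(r)$-invariant, which is immediate from continuity of the action.
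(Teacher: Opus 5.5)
Your proof is correct and matches the paper's argument essentially step for step. The first claim comes from $\GL(r)$-invariance of the orbit closure together with parts (ii) and (iii) of \cref{lem:minimal-factorizations}, and the norm identity from the SVD factorization splitting $\sqrt{\sigma_j}$ evenly; your block form is the paper's construction with $e_j$ taken as the first $\rho$ standard basis vectors, except that you check minimality directly from ranks rather than via the direct-sum criterion in part (i).
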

 
\begin{proof}
The orbit closure \(\overline{\GL(r)\cdot(A,B)}\) is \(\GL(r)\)-invariant:
for any \(g\in\GL(r)\),
\[
    g\cdot\overline{\GL(r)\cdot(A,B)}
    =
    \overline{g\cdot(\GL(r)\cdot(A,B))}
    =
    \overline{\GL(r)\cdot(A,B)}.
\]
Thus, if the orbit closure contains one point of a \(\GL(r)\)-orbit, it contains
that entire orbit. By \cref{lem:minimal-factorizations}(ii), the orbit closure \(\overline{\GL(r)\cdot(A,B)}\) contains some
minimal factorization of \(C\). \cref{lem:minimal-factorizations}(iii) tells us that the \(\GL(r)\)-orbit of any minimal factorization contains every minimal factorization of \(C\). Together these three facts imply that the orbit closure \(\overline{\GL(r)\cdot(A,B)}\) contains every minimal
factorization of \(C\).
 
To show a representative satisfying \cref{eq:factorization-orbit-closures} exists, let
\(C=\sum_{j=1}^{k}\sigma_j\,u_jw_j^{\top}\) be a singular value decomposition
with \(k=\rank C\) and \(\sigma_j>0\). Since \(k\le\rank B\le r\), we may
choose orthonormal \(e_1,\dots,e_k\in\R^r\) and set
\[
    A_0=\sum_{j}\sqrt{\sigma_j}\;u_je_j^{\top},
    \qquad
    B_0=\sum_{j}\sqrt{\sigma_j}\;e_jw_j^{\top}.
\]
Then \(A_0B_0=C\), \(\im B_0=\Span(e_1,\dots,e_k)\), and
\(\ker A_0=\Span(e_1,\dots,e_k)^{\perp}\), so \((A_0,B_0)\) is minimal by
\cref{lem:minimal-factorizations}(i), with
\(\|A_0\|_F^2=\|B_0\|_F^2=\sum_j\sigma_j=\|C\|_*\).
 
Finally, the orbit closures of any two factorizations of \(C\) both contain the
(nonempty) set of minimal factorizations of \(C\), so they intersect.
\end{proof}
 
\subsubsection{The General Case}
 
Fix an internal edge \(e=\{u,v\}\). As \(\Tt\) is a tree, every edge connects two distinct nodes \(u \neq v\). Contracting \(e\) merges \(u\) and \(v\)
into a single node and yields a network \(\Tt/e\) on the same external
legs with one fewer internal edge; since \(\Tt\) is a tree, so is
\(\Tt/e\). Let \(c_e:\Theta_{\Tt}\to\Theta_{\Tt/e}\)
be the partial contraction that replaces \(W_u,W_v\) by their contraction
across \(e\) and leaves all other node tensors unchanged. Matricize the two
adjacent tensors along \(e\), with the bond as column index on the \(u\)-side
and as row index on the \(v\)-side:
\[
    A=\mat_e(W_u)\in\R^{m\times r_e},
    \qquad
    B=\mat_e(W_v)\in\R^{r_e\times n},
\]
where \(m\) and \(n\) are the products of the dimensions of the remaining modes
of \(u\) and of \(v\). The merged node tensor is then the matrix product
\(AB\), and the gauge factor \(\GL(r_e)\) acts on \((A,B)\) exactly as in the
previous subsection, leaving all other node tensors fixed. The gauge group
of the smaller network is
\(G_{\Tt/e}=\prod_{e'\in E_{\mathrm{int}}(\mathcal
T)\setminus\{e\}}\GL(r_{e'})\); the map \(c_e\) is
\(G_{\Tt/e}\)-equivariant, is invariant under the \(\GL(r_e)\)-factor,
and satisfies
\(T_{\Tt}=T_{\Tt/e}\circ c_e\) by associativity of contraction.
 
\begin{lemma}
\label{lem:edge-contraction-orbit-closures}
For every internal edge \(e=\{u,v\}\) and every \(\theta\in\Theta_{\Tt}\),
\[
    c_e\!\left(\overline{G_{\Tt}\cdot\theta}\right)
    =
    \overline{G_{\Tt/e}\cdot c_e(\theta)}.
\]
\end{lemma}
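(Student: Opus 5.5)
The plan is to prove the two inclusions separately, using the product decomposition $G_{\Tt}=\GL(r_e)\times G_{\Tt/e}$. The map $c_e$ is continuous, invariant under the $\GL(r_e)$ factor, and $G_{\Tt/e}$-equivariant. So for $(g,h)\in\GL(r_e)\times G_{\Tt/e}$ we have $c_e((g,h)\cdot\theta)=h\cdot c_e(\theta)$, which gives
\[
c_e(G_{\Tt}\cdot\theta)=G_{\Tt/e}\cdot c_e(\theta).
\]
By continuity, $c_e\bigl(\overline{G_{\Tt}\cdot\theta}\bigr)\subseteq\overline{c_e(G_{\Tt}\cdot\theta)}=\overline{G_{\Tt/e}\cdot c_e(\theta)}$. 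This is the easy inclusion.

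For the reverse inclusion, I first reduce to the case where $(A,B)=(\mat_e W_u,\mat_e W_v)$ is a minimal factorization. By \cref{lem:minimal-factorizations}(ii), $\overline{\GL(r_e)\cdot(A,B)}$ contains a minimal factorization $(A_0,B_0)$ of $AB$. Let $\theta_0$ be $\theta$ with $W_u,W_v$ replaced accordingly. Then $\theta_0\in\overline{G_{\Tt}\cdot\theta}$, so $\overline{G_{\Tt}\cdot\theta_0}\subseteq\overline{G_{\Tt}\cdot\theta}$, and $c_e(\theta_0)=c_e(\theta)$ because $A_0B_0=AB$. It therefore suffices to prove the claim for $\theta_0$.

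Now take $\varphi=\lim_k h_k\cdot c_e(\theta_0)$ with $h_k\in G_{\Tt/e}$. The elements $h_k$ act on the non-$e$ bond modes at $u$ and at $v$ separately: by invertible linear maps $P_k$ on the row space of $A_0$ and $Q_k$ on the column space of $B_0$. Hence the merged tensor of $h_k\cdot c_e(\theta_0)$ is $C_k=A_kB_k$, with $A_k=P_kA_0$ and $B_k=B_0Q_k$. All other node tensors of $h_k\cdot\theta_0$ coincide with those of $h_k\cdot c_e(\theta_0)$, so they converge to the corresponding tensors of $\varphi$.

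The main obstacle is that $A_k$ and $B_k$ themselves may diverge even though $C_k\to C_\infty$. I fix this by rebalancing with the $\GL(r_e)$ factor. Multiplication by invertible matrices preserves all three ranks, so $(A_k,B_k)$ is still a minimal factorization of $C_k$. By \cref{lem:minimal-factorizations}(iii) and \cref{cor:factorization-orbit-closures}, there is $g_k\in\GL(r_e)$ with
\[
\|A_kg_k^{-1}\|_F^2=\|g_kB_k\|_F^2=\|C_k\|_*.
\]
The nuclear norms $\|C_k\|_*$ converge, hence are bounded. So the sequence $(g_k,h_k)\cdot\theta_0$ is bounded in $\Theta_{\Tt}$, and a subsequence converges to some $\theta'\in\overline{G_{\Tt}\cdot\theta_0}$. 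By continuity of $c_e$,
\[
c_e(\theta')=\lim_k h_k\cdot c_e(\theta_0)=\varphi.
\]
This gives $\overline{G_{\Tt/e}\cdot c_e(\theta)}\subseteq c_e\bigl(\overline{G_{\Tt}\cdot\theta}\bigr)$ and completes the proof.
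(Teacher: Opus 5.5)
Your proof is correct and follows essentially the same route as the paper. The easy inclusion uses equivariance plus continuity. For the reverse inclusion, you lift $h_k$ to $G_{\Tt}$, rebalance the pair at $e$ to factors of squared Frobenius norm $\|C_k\|_*$ so the sequence is bounded, and extract a convergent subsequence. The only difference is that you first pass to a minimal factorization $\theta_0$, which $P_k,Q_k$ preserve, so the rebalancing is achieved by an honest element $g_k\in\GL(r_e)$ via \cref{lem:minimal-factorizations}(iii); the paper instead applies \cref{cor:factorization-orbit-closures} directly to each lifted point and takes a point of $\overline{\GL(r_e)\cdot\theta_i}$.
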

 
\begin{proof}
(\(\subseteq\)) Since \(c_e\) is invariant under the \(\GL(r_e)\)-factor and
equivariant for the remaining gauge group, we have
\(c_e(G_{\Tt}\cdot\theta)=G_{\Tt/e}\cdot c_e(\theta)\) pointwise;
continuity of \(c_e\) then gives the inclusion of orbit closures.
 
(\(\supseteq\)) An element \(\eta \in \overline{G_{\Tt/e}\cdot c_e(\theta)}\) is given by \(\eta=\lim_i h_i\cdot c_e(\theta)\) for some sequence \(h_i\in G_{\Tt/e}\). We aim to prove \(\eta \in c_e(\overline{G_{\Tt}\cdot\theta})\); that is, that there exists \(\tilde{\eta} \in \overline{G_{\Tt}\cdot\theta}\) such that \(c_e(\tilde{\eta}) = \eta\). Lift each \(h_i\) to
\(\widetilde h_i\in G_{\Tt}\) by letting it act trivially on the bond at
\(e\), and set \(\theta_i=\widetilde h_i\cdot\theta\in G_{\mathcal
T}\cdot\theta\). Then \(c_e(\theta_i)= h_i\cdot c_e(\theta) \to \eta\). If \(\theta_i\) was bounded, one could pass to a subsequence of \(\theta_i\) that converges to a point \(\tilde{\eta} \in \overline{G_{\Tt}\cdot\theta}\), with \(c_e(\tilde{\eta})=\eta\), and conclude the proof.

However, even though \(c_e(\theta_i)\) converges, the sequence \(\theta_i\) itself need not be bounded: gauges on the other bonds of
\(u\) and \(v\) can blow up the two tensors adjacent to \(e\) while their
contraction remains bounded. We correct this by replacing the pair of tensors at \(e\)
with a norm-controlled minimal factorization of the same merged tensor.
 
Precisely, let \((A_i,B_i)=(\mat_e W_u(\theta_i),\,\mat_e W_v(\theta_i))\) and let
\(C_i=A_iB_i\) be the matricized merged tensor of \(c_e(\theta_i)\). Applying
\cref{cor:factorization-orbit-closures} to the pair \((A_i,B_i)\), while
leaving all other node tensors fixed, gives a point \(\widetilde\theta_i\in
\overline{\GL(r_e)\cdot\theta_i}\) obtained from \(\theta_i\) by replacing
\((A_i,B_i)\) with a minimal factorization of \(C_i\) whose two factors each
have squared Frobenius norm equal to \(\|C_i\|_*\). Then
\(c_e(\widetilde\theta_i)=c_e(\theta_i)\), since the merged tensor and all other
node tensors are unchanged.

Moreover the sequence \(\widetilde\theta_i\) is bounded. The node tensors away
from \(u\) and \(v\) are unchanged when passing from \(\theta_i\) to
\(\widetilde\theta_i\), and they appear as node tensors in the convergent
sequence \(c_e(\theta_i)\), so they are bounded. The two tensors at \(u\) and
\(v\) have squared Frobenius norm equal to \(\|C_i\|_*\), which is bounded since
\(C_i\) converges.
Finally,
\[
    \widetilde\theta_i
    \in\overline{\GL(r_e)\cdot\theta_i}
    \subseteq
    \overline{G_{\Tt}\cdot\theta_i}
    =
    \overline{G_{\Tt}\cdot\theta},
\]
where the inclusion uses \(\GL(r_e)\subseteq G_{\Tt}\), and the equality
uses \(\theta_i\in G_{\Tt}\cdot\theta\).
 
Passing to a subsequence, \(\widetilde\theta_i\to\widetilde\eta\). Since
\(\overline{G_{\Tt}\cdot\theta}\) is closed,
\(\widetilde\eta\in\overline{G_{\Tt}\cdot\theta}\), and by continuity
\(c_e(\widetilde\eta)=\lim_i c_e(\widetilde\theta_i)=\lim_i
c_e(\theta_i)=\eta\). Hence
\(\eta\in c_e\!\left(\overline{G_{\Tt}\cdot\theta}\right)\).
\end{proof}
 
\begin{proposition}
\label{prop:equal-tensors-meeting-orbit-closures}
If \(\theta,\theta'\in\Theta_{\Tt}\) satisfy \(T(\theta)=T(\theta')\),
then
\[
    \overline{G_{\Tt}\cdot\theta}
    \cap
    \overline{G_{\Tt}\cdot\theta'}
    \neq\varnothing.
\]
\end{proposition}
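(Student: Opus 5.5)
We argue by strong induction on the number of internal edges of \(\Tt\), collapsing the tree one edge at a time with \cref{lem:edge-contraction-orbit-closures}.

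\textbf{Base case.} If \(\Tt\) has no internal edges, it has a single node. The gauge group is then trivial and \(T(\theta)\) is just the node tensor. Hence \(T(\theta)=T(\theta')\) forces \(\theta=\theta'\), and the two orbit closures trivially meet.

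\textbf{Inductive step.} Pick any internal edge \(e=\{u,v\}\). Since \(T_{\Tt}=T_{\Tt/e}\circ c_e\), the contracted parameters \(c_e(\theta)\) and \(c_e(\theta')\) represent the same tensor on the smaller tree \(\Tt/e\), which has one fewer internal edge. By the inductive hypothesis there is a point
\[
    \eta\in\overline{G_{\Tt/e}\cdot c_e(\theta)}\cap\overline{G_{\Tt/e}\cdot c_e(\theta')}.
\]
Applying \cref{lem:edge-contraction-orbit-closures} to each of \(\theta\) and \(\theta'\) gives lifts
\[
    \widetilde\eta\in\overline{G_{\Tt}\cdot\theta},\qquad \widetilde\eta'\in\overline{G_{\Tt}\cdot\theta'},\qquad c_e(\widetilde\eta)=c_e(\widetilde\eta')=\eta.
\]
The two lifts need not coincide: they agree at every node other than \(u,v\), since \(c_e\) leaves those tensors unchanged. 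At \(u,v\), however, they only give two factorizations \((A,B)\) and \((A',B')\) of the same merged matrix \(C=\mat(\eta_{uv})\).

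\textbf{Reconciling the two lifts.} Here we use \cref{cor:factorization-orbit-closures}. Both \(\overline{\GL(r_e)\cdot(A,B)}\) and \(\overline{\GL(r_e)\cdot(A',B')}\) contain every minimal factorization of \(C\). Fix one, \((A_0,B_0)\), and let \(\zeta\) be the parameter that agrees with \(\eta\) away from \(u,v\) and carries \((A_0,B_0)\) at \(e\). Acting only by the \(\GL(r_e)\) factor and leaving all other nodes fixed, \(\zeta\in\overline{\GL(r_e)\cdot\widetilde\eta}\) and likewise \(\zeta\in\overline{\GL(r_e)\cdot\widetilde\eta'}\).

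\textbf{Conclusion.} Orbit closures are \(G_{\Tt}\)-invariant closed sets, as in the proof of \cref{cor:factorization-orbit-closures}. So \(\widetilde\eta\in\overline{G_{\Tt}\cdot\theta}\) implies \(\overline{G_{\Tt}\cdot\widetilde\eta}\subseteq\overline{G_{\Tt}\cdot\theta}\). Therefore
\[
    \zeta\in\overline{\GL(r_e)\cdot\widetilde\eta}\subseteq\overline{G_{\Tt}\cdot\theta},
\]
and symmetrically \(\zeta\in\overline{G_{\Tt}\cdot\theta'}\). This gives the required intersection point.

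\textbf{Main obstacle.} The only delicate step is that the lemma produces some lift, not a canonical one, so the two lifts must be matched. This is exactly what the uniqueness of minimal factorizations up to \(\GL(r_e)\) in \cref{lem:minimal-factorizations}(iii), packaged in \cref{cor:factorization-orbit-closures}, provides. One should also check that the rematching happens inside \(\Theta_{\Tt}\): the bond dimension \(r_e\) is fixed, and the minimal factorization of \(C\) uses at most \(\rank C\le r_e\) bond directions, so no dimension issue arises.
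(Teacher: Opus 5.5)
Your proposal is correct and follows the paper's proof essentially step for step. It inducts on the number of internal edges, uses \cref{lem:edge-contraction-orbit-closures} to lift the common point of the contracted orbit closures, reconciles the two factorizations at $e$ via \cref{cor:factorization-orbit-closures}, and concludes using the $G_{\Tt}$-invariance of orbit closures.
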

 
\begin{proof}
We induct on the number of internal edges. With no internal edges the network
is a single node, \(T\) is the identity, and \(G_{\Tt}\) is trivial, so
\(\theta=\theta'\).
 
Otherwise choose an internal edge \(e\). Since \(\Tt\) is a tree, \(\Tt/e\) is again a tree and has one fewer internal edge, so it lies in the induction class. Moreover, since
\[
    T_{\Tt/e}(c_e(\theta))
    =T(\theta)=T(\theta')
    =T_{\Tt/e}(c_e(\theta')),
\]
the induction hypothesis applied to \(\Tt/e\) yields a point
\[
    \xi\in
    \overline{G_{\Tt/e}\cdot c_e(\theta)}
    \cap
    \overline{G_{\Tt/e}\cdot c_e(\theta')}.
\]

We now lift this common point back to the original tree. By
\cref{lem:edge-contraction-orbit-closures}, the membership of \(\xi\) in the
first orbit closure gives
\[
    \widehat\theta\in\overline{G_{\Tt}\cdot\theta}
    \qquad\text{with}\qquad
    c_e(\widehat\theta)=\xi.
\]
The membership of \(\xi\) in the second orbit closure similarly gives
\[
    \widehat\theta'\in\overline{G_{\Tt}\cdot\theta'}
    \qquad\text{with}\qquad
    c_e(\widehat\theta')=\xi.
\]
 
Thus \(c_e(\widehat\theta)=c_e(\widehat\theta')\). Equivalently, all node
tensors of \(\widehat\theta\) and \(\widehat\theta'\) away from \(u\) and \(v\)
agree, and their pairs at \(e\) are two factorizations of the same merged
tensor. Applying \cref{cor:factorization-orbit-closures} to these two
factorizations, while leaving the common node tensors away from \(u\) and \(v\)
fixed, gives a point \(\zeta\) lying in both \(\GL(r_e)\)-orbit closures.

It remains to check that \(\zeta\) lies in the two original orbit closures. We show \(\zeta\in\overline{G_{\Tt}\cdot\theta}\); the same argument gives \(\zeta\in\overline{G_{\Tt}\cdot\theta'}\).
To see this, observe the nested inclusions
\[
    \zeta
    \in
    \overline{\GL(r_e)\cdot\widehat\theta}
    \subseteq
    \overline{G_{\Tt}\cdot\widehat\theta}
    \subseteq
    \overline{G_{\Tt}\cdot\theta}.
\]
The first inclusion uses \(\GL(r_e)\subseteq G_{\Tt}\). For the second inclusion,
\(\widehat\theta\in\overline{G_{\Tt}\cdot\theta}\), and this orbit
closure is \(G_{\Tt}\)-invariant. Hence
\(G_{\Tt}\cdot\widehat\theta\subseteq
\overline{G_{\Tt}\cdot\theta}\); since the latter set is closed, it
contains \(\overline{G_{\Tt}\cdot\widehat\theta}\).
\end{proof}

\begin{proposition}
\label{prop:unique-closed-gauge-orbit-in-fiber}
Every fiber of the contraction map
\(T:\Theta_{\Tt}\to T(\Theta_{\Tt})\) contains a minimizer of the norm.
The gauge orbit of any such minimizer is the \emph{unique} closed
\(G_{\Tt}\)-orbit in that fiber.
\end{proposition}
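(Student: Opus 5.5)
Fix a tensor $T_0$ in the image of the contraction map and consider its fiber $\mathcal F_{T_0}=T^{-1}(T_0)$. \emph{Existence of a norm minimizer} is the easy part. $\mathcal F_{T_0}$ is the preimage of a point under the continuous (polynomial) map $T$, so it is closed. It is nonempty by assumption. The function $\theta\mapsto\|\theta\|^2$ is continuous and coercive, so restricting it to the compact set $\mathcal F_{T_0}\cap\{\|\theta\|\le\|\theta_1\|\}$, for any $\theta_1\in\mathcal F_{T_0}$, gives a minimizer $\theta_{\min}$.

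\emph{Closedness of the orbit of $\theta_{\min}$.} I would use the standard Kempf--Ness / Richardson--Slodowy argument for the real reductive group $G_{\Tt}=\prod_e\GL(r_e)$ acting linearly on $\Theta_{\Tt}$. The norm is $O$-invariant, where $O=\prod_e O(r_e)$. Hence $\theta_{\min}$ minimizes the norm on its own orbit, since the orbit lies in the fiber. Richardson--Slodowy then says that a vector minimizing the norm on its orbit (a "minimal vector") has a closed orbit. To keep things self-contained, I would sketch the proof directly. Any point in $\overline{G\cdot\theta_{\min}}\setminus G\cdot\theta_{\min}$ is reached, by the real Hilbert--Mumford criterion, as $\lim_{t\to0}\lambda(t)\cdot g\theta_{\min}$ for a one-parameter subgroup $\lambda$ diagonalizable in an orthonormal basis. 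Along such a curve, $t\mapsto\|\lambda(t)\cdot\theta'\|^2$ is a sum $\sum_k c_k t^{2k}$ with nonnegative weights. Minimality, together with the convexity of $s\mapsto\|\exp(sX)\theta\|^2$ for symmetric $X$, forces the curve to be constant in norm. It therefore stays in the orbit. The delicate point here is the real (rather than complex) Hilbert--Mumford lemma. I would simply cite \citet{richardson1990minimum} for the statement that minimal vectors have closed orbits.

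\emph{Uniqueness.} Suppose $G\cdot\theta'$ is any closed orbit in the same fiber. By \cref{prop:equal-tensors-meeting-orbit-closures}, the closures $\overline{G\cdot\theta_{\min}}$ and $\overline{G\cdot\theta'}$ intersect. Both orbits are closed, so the orbits themselves intersect. Distinct orbits are disjoint, so $G\cdot\theta'=G\cdot\theta_{\min}$. The same argument shows that all norm minimizers lie in a single orbit, so "the gauge orbit of any such minimizer" is well defined. One more check is needed: closures of orbits stay inside the fiber, because $T$ is gauge invariant and continuous. This ensures nothing leaves $\mathcal F_{T_0}$.

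I expect the main obstacle to be the closedness step. It is the only place that uses genuine GIT input rather than the tree-specific machinery above. I would first try to avoid the full real Hilbert--Mumford criterion. The alternative is to argue via the convexity of the Kempf--Ness function $g\mapsto\|g\theta\|^2$ on $G/O$, which is geodesically convex along $\exp(sX)$ with $X$ symmetric. Its critical points are exactly the global minima of the function on the orbit. If that becomes messy, I would fall back on citing the Richardson--Slodowy theorem.
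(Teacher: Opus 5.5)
Your proposal is correct and follows the paper's proof essentially step for step. Existence comes from minimizing the norm over the compact set $\mathcal F_{T_0}\cap\{\|\theta\|\le\|\theta_1\|\}$; closedness of the minimizer's orbit comes from Richardson's theorem that minimum vectors have closed orbits (the paper cites Theorem~4.4 of \citet{richardson1990minimum} after checking that $\prod_e O(r_e)$ acts isometrically and symmetric Lie-algebra elements act by symmetric operators); and uniqueness comes from \cref{prop:equal-tensors-meeting-orbit-closures} together with disjointness of distinct orbits.

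Your optional Hilbert--Mumford sketch overstates the criterion: it only guarantees that a non-closed orbit has \emph{some} one-parameter-subgroup limit outside it, lying in the closed orbit of the closure, not that every boundary point arises this way. Since you fall back on citing Richardson exactly as the paper does, nothing depends on it.
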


\begin{proof}
\emph{Existence.} Fix a fiber
\[
    \mathcal F_\theta=\{\eta:T(\eta)=T(\theta)\}.
\]
This set is closed by continuity of \(T\) and nonempty by definition, so the
norm attains a minimum on it. (Indeed, choosing any \(\theta_1\in\mathcal
F_\theta\), it suffices to minimize over the compact set
\(\mathcal F_\theta\cap\{\eta:\|\eta\|\le \|\theta_1\|\}\).)

Let \(\theta_{\min}\) be any norm minimizer in \(\mathcal F_\theta\).
Gauge transformations preserve the represented tensor, hence
\(G_{\Tt}\cdot\theta_{\min}\subseteq\mathcal F_\theta\). Therefore
\[
    \|\theta_{\min}\|\le \|g\cdot\theta_{\min}\|
    \qquad\text{for all }g\in G_{\Tt}.
\]
Thus \(\theta_{\min}\) minimizes the norm on its gauge orbit, i.e. it is a
\emph{minimum vector} of the gauge action in the sense of
\citet{richardson1990minimum}.

We apply Richardson's closed-orbit theorem to this action. Its hypotheses hold:
\(G_{\Tt}\) is a product of real general linear groups acting rationally
and linearly on \(\Theta_{\Tt}\), and the Frobenius inner product is
compatible with the action. Indeed, the maximal compact subgroup
\(\prod_e O(r_e)\) acts by isometries, and the symmetric matrices in the Lie
algebra act by symmetric operators (a direct trace computation). By Theorem~4.4
of \citet{richardson1990minimum}, the orbit of a minimum vector is closed.
Hence \(G_{\Tt}\cdot\theta_{\min}\) is a closed orbit in the fiber.
 
\emph{Uniqueness.} If \(G_{\Tt}\cdot\eta\) and
\(G_{\Tt}\cdot\eta'\) are closed orbits in the same fiber, then
\(T(\eta)=T(\eta')\), so by
\cref{prop:equal-tensors-meeting-orbit-closures} their closures intersect.
Since the two orbits are closed, these closures are the orbits themselves.
Distinct orbits are disjoint, so
\(G_{\Tt}\cdot\eta=G_{\Tt}\cdot\eta'\).
\end{proof}

\begin{remark}
\label{rem:cycles}
The tree topology is required for the uniqueness half of \cref{prop:unique-closed-gauge-orbit-in-fiber}.
Consider the simplest cyclic network: a single node with one bond joining it to
itself, so that \(\theta\in\R^{r\times r}\), the gauge group \(\GL(r)\) acts by
conjugation, and the contraction is \(T(\theta)=\tr\theta\). Every diagonal
matrix \(D\) is a critical point of the squared norm on its orbit --- for
symmetric \(X\),
\(\langle XD-DX,D\rangle_F=\tr\bigl(X(DD^{\top}-D^{\top}D)\bigr)=0\) --- hence
a minimum vector by Theorem~4.3 of \citet{richardson1990minimum}, and its
conjugation orbit is closed by Theorem~4.4. Conjugation preserves eigenvalues,
so diagonal matrices with distinct spectra lie in distinct orbits; for
\(r\ge2\), a single trace fiber therefore contains a continuum of closed gauge
orbits. (Compare the ring-topology counterexamples of \citet{chen2020tensor}
discussed in the main text.)
\end{remark}

\begin{remark}
\label{rem:minimal-canonical-form}
Over the complex numbers, \citet{acuaviva2023minimal} apply the same
construction --- norm minimization over the gauge group, via the Kempf--Ness
theorem \citep{kempf1979length} --- to tensor networks of arbitrary
geometry, obtaining a \emph{minimal canonical form} that identifies two tensor networks exactly when their gauge orbit closures intersect. In the case of tree tensor networks, this minimal canonical form can be seen as closely related to the well-known \textit{Vidal gauge}
\citep{vidal2003efficient,shi2006classical}. We are not able to immediately apply the algebraic machinery of complex GIT in the real case (see \citet{acosta2020git} for details), so it was necessary to prove this result ourselves.
\end{remark}

\subsection{Equivalent Conditions to Minimum-Norm}

This subsection characterizes the minimum-norm parameterizations
(\cref{def:minimum-norm}). As stated, the condition is global, and difficult to check. \cref{prop:min-norm-equivalences} shows that
minimum norm is equivalent to \emph{balancedness}, an algebraic condition which can be
verified edge by edge.
 
\begin{definition}[Balanced parameterization]
\label{def:balanced}
A parameterization $\theta$ is \emph{balanced} if at every internal edge $e = \{u, v\}$,
\[
(\mat_e W_u)^{\top}\, \mat_e W_u \;=\; (\mat_e W_v)^{\top}\, \mat_e W_v.
\]
\end{definition}
 
Both sides are \(r_e\times r_e\) Gram matrices. Intuitively, balancedness says the two
endpoint tensors make use of the bond space identically.

This condition is a generalization of the well-known balancedness condition from the study
of deep linear networks. In this case, the adjacent-layer differences
\(W_{j+1}^{\top}W_{j+1}-W_j^{\vphantom{\top}}W_j^{\top}\) are conserved by
gradient flow \citep{arora2018optimization,du2018algorithmic}, and the balanced
manifold is a standard setting for convergence analyses
\citep{arora2018convergence,bah2022learning}.
\citet{lindsey2025regularization} interpret this condition through
geometric invariant theory, showing via the Kempf--Ness theorem that the
\(\ell_2\) regularizer of a deep linear network is minimized exactly on the
balanced manifold; \cref{prop:min-norm-equivalences} is the corresponding
statement for tree tensor networks.

\begin{proposition}
\label{prop:min-norm-equivalences}
For \(\theta\in\Theta_{\Tt}\) with fiber
\(\mathcal F_\theta=\{\theta'\in\Theta_{\Tt}:T(\theta')=T(\theta)\}\),
the following are equivalent:
\begin{enumerate}
    \item[(i)] \(\theta\) minimizes \(\|\cdot\|\) on \(\mathcal F_\theta\),
    i.e.\ \(\theta\) is minimum norm;
    \item[(ii)] \(\theta\) locally minimizes \(\|\cdot\|\) on
    \(\mathcal F_\theta\);
    \item[(iii)] \(\theta\) minimizes \(\|\cdot\|\) on its gauge orbit
    \(G_{\Tt}\cdot\theta\), i.e.\ \(\theta\) is a minimum vector;
    \item[(iv)] \(\theta\) is balanced.
\end{enumerate}
Moreover, the minimum-norm points of a fiber form a single orbit of the
orthogonal gauge group
\(K_{\Tt}=\prod_{e\in E_{\mathrm{int}}(\Tt)}O(r_e)\).
\end{proposition}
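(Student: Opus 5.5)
The plan is to prove the cycle (i)$\Rightarrow$(ii)$\Rightarrow$(iii)$\Rightarrow$(iv)$\Rightarrow$(i), and then derive the statement about the orthogonal gauge group. Two implications are immediate. A global minimizer on $\mathcal F_\theta$ is a local one, so (i)$\Rightarrow$(ii). For (iii)$\Rightarrow$(iv), note that the gauge orbit lies in $\mathcal F_\theta$. Differentiate $t\mapsto\|g_t\cdot\theta\|^2$ along the one-parameter subgroup $g_t=\exp(tX)$ acting on a single bond $e=\{u,v\}$, which multiplies $\mat_e W_u$ by $g_t$ and $\mat_e W_v$ by $g_t^{-\top}$. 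The derivative at $0$ is
\[
2\tr\bigl(X^{\top}[(\mat_e W_u)^{\top}\mat_e W_u-(\mat_e W_v)^{\top}\mat_e W_v]\bigr).
\]
Requiring it to vanish for all $X$ forces the two Gram matrices to be equal, which is balancedness. The same computation shows balancedness is exactly the vanishing of the moment map.

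For (ii)$\Rightarrow$(iii), the difficulty is that the orbit need not be locally the whole fiber. The plan is therefore not to go through (iii) directly but to prove (ii)$\Rightarrow$(iv) by the same first-order computation, since any local minimizer on the fiber is in particular a local minimizer on the orbit, and the orbit is a smooth submanifold through $\theta$. Then (iv)$\Rightarrow$(iii) follows from Richardson's Theorem 4.3 \citep{richardson1990minimum}. That result states that a critical point of the norm on an orbit, for a compatible real reductive action, is a minimum vector. The compatibility hypotheses were already checked in the proof of \cref{prop:unique-closed-gauge-orbit-in-fiber}. The structure thus becomes (i)$\Rightarrow$(ii)$\Rightarrow$(iv)$\Leftrightarrow$(iii), with (iii)$\Rightarrow$(iv) handled as above.

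The main step is (iii)$\Rightarrow$(i). Let $\theta$ be a minimum vector. By Richardson's Theorem 4.4, its orbit is closed. By \cref{prop:unique-closed-gauge-orbit-in-fiber}, the fiber contains a global norm minimizer $\theta_{\min}$, and the orbit of $\theta_{\min}$ is the unique closed orbit in the fiber. Hence $\theta\in G_{\Tt}\cdot\theta_{\min}$. Both points are minimum vectors in the same orbit. Richardson's Theorem 4.3 further says that the minimum vectors in a closed orbit form a single $K_{\Tt}$-orbit, where $K_{\Tt}=\prod_e O(r_e)$ acts by isometries. So $\theta=k\cdot\theta_{\min}$ for some $k\in K_{\Tt}$, and $\|\theta\|=\|\theta_{\min}\|$, which gives (i).

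The same argument yields the final claim. Every minimum-norm point of the fiber is a minimum vector, by (i)$\Rightarrow$(iii). Every such point therefore lies in the unique closed orbit, and all of them are $K_{\Tt}$-conjugate. Conversely, $K_{\Tt}$ preserves both the norm and $T$, so the whole $K_{\Tt}$-orbit of a minimum-norm point consists of minimum-norm points.

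The main obstacle is being careful with the exact real-GIT statements invoked from Richardson--Slodowy. Specifically, these are: that critical points of the norm on an orbit are minimum vectors; that minimum vectors in an orbit form a single $K$-orbit; and that the orbit of a minimum vector is closed. All three hold for real reductive groups acting compatibly, and $\prod_e\GL(r_e)$ is such a group. If a cleaner self-contained route is wanted for the $K$-orbit uniqueness, one can use the Kempf--Ness convexity of $t\mapsto\|\exp(tX)\cdot\theta\|^2$ for symmetric $X$, together with the polar decomposition $g=k\exp(X)$. Strict convexity, unless $X$ fixes $\theta$, then shows that two minimum vectors differ by an element of $K_{\Tt}$.
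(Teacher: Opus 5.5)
Your proposal is correct and follows essentially the same route as the paper. Both reduce (ii) to criticality of \(g\mapsto\|g\cdot\theta\|^2\) at the identity, identify that criticality with balancedness via the same trace computation and with being a minimum vector via Richardson--Slodowy Theorem~4.3, and get (iii)\(\Rightarrow\)(i) and the \(K_{\Tt}\)-orbit claim from Theorem~4.4 together with the uniqueness of the closed gauge orbit in the fiber.

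The differences are cosmetic. You pass from (ii) to (iii) through (iv) rather than directly. In (iii)\(\Rightarrow\)(i) you invoke \(K_{\Tt}\)-conjugacy to \(\theta_{\min}\), whereas the paper only needs \(\theta_{\min}\in G_{\Tt}\cdot\theta\) and the fact that \(\theta\) minimizes the norm on its orbit.
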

 
\begin{proof}
We prove \((i)\Rightarrow(ii)\Rightarrow(iii)\Rightarrow(i)\), then \((iii)\Leftrightarrow(iv)\), and finally identify the
minimum-norm points within a fiber.

\emph{(i) \(\Rightarrow\) (ii)} is immediate.
 
\emph{(ii) \(\Rightarrow\) (iii).} For \(g\) near the identity, gauge
invariance gives \(T(g\cdot\theta)=T(\theta)\), so \(g\cdot\theta\in
\mathcal F_\theta\). Thus \(g\cdot\theta\) is a point of \(\mathcal F_\theta\)
near \(\theta\). Since \(\theta\) locally minimizes the norm on
\(\mathcal F_\theta\), we have
\(\|g\cdot\theta\|\ge\|\theta\|\) for all such \(g\). Hence \(\rho_\theta:g\mapsto\|g\cdot\theta\|^2\) has
a local minimum, and therefore a critical point, at the identity. By
\citet[Theorem~4.3]{richardson1990minimum}, \(\theta\) is a minimum vector.
 
\emph{(iii) \(\Rightarrow\) (i).} Since \(\theta\) is a minimum vector,
\citet[Theorem~4.4]{richardson1990minimum} implies that
\(G_{\Tt}\cdot\theta\) is closed. By
\cref{prop:unique-closed-gauge-orbit-in-fiber}, there exists at least one norm
minimizer \(\theta_{\min}\) on \(\mathcal F_\theta\), and its orbit is the unique closed
gauge orbit in \(\mathcal F_\theta\). Hence
\[
    G_{\Tt}\cdot\theta
    =
    G_{\Tt}\cdot\theta_{\min}.
\]
Thus \(\theta_{\min}\in G_{\Tt}\cdot\theta\). Because \(\theta\)
minimizes the norm on its gauge orbit,
\[
    \|\theta\|\le \|\theta_{\min}\|
    =
    \min_{\eta\in\mathcal F_\theta}\|\eta\|.
\]
The reverse inequality is automatic because \(\theta\in\mathcal F_\theta\), so
\(\theta\) is minimum norm.

\emph{(iii) \(\Leftrightarrow\) (iv).} By
\citet[Theorem~4.3]{richardson1990minimum}, \(\theta\) is a minimum vector if
and only if \(\rho_\theta:g\mapsto\|g\cdot\theta\|^2\) is critical at the identity.
We now compute this criticality condition. The Lie algebra of \(G_{\Tt}\) is the direct sum of one copy of
\(\mathfrak{gl}(r_e)\) for each internal edge \(e\). Every \(X\in\mathfrak{gl}(r_e)\) decomposes as a sum of a skew-symmetric
matrix and a symmetric matrix. The skew-symmetric directions generate the
orthogonal gauge group \(K_{\Tt}\), which preserves the Frobenius norm.
Thus the derivative of \(\rho_\theta\) vanishes automatically in those
directions. Since directional derivatives at the identity form a linear
functional on the Lie algebra, and every Lie-algebra direction is a sum of a
skew-symmetric and a symmetric direction, \(\rho_\theta\) is critical at the
identity if and only if the derivative vanishes along every symmetric direction
at every internal edge.

Fix \(e=\{u,v\}\) and a symmetric matrix \(X\). The one-parameter gauge
\(\exp(tX)\) at \(e\) acts by
\(\mat_eW_u\mapsto\mat_eW_u\exp(-tX)\) and
\(\mat_eW_v\mapsto\mat_eW_v\exp(tX)\), leaving all other node tensors fixed,
so only these two terms in the total parameter norm vary. Thus
\[
\begin{aligned}
    \frac{d}{dt}\Big|_{t=0}
    &\Bigl(\bigl\|\mat_eW_u\exp(-tX)\bigr\|_F^2
    +\bigl\|\mat_eW_v\exp(tX)\bigr\|_F^2\Bigr)\\
    &\qquad=2\tr\Bigl[X\Bigl((\mat_eW_v)^{\top}\mat_eW_v
    -(\mat_eW_u)^{\top}\mat_eW_u\Bigr)\Bigr].
\end{aligned}
\]
The bracketed matrix is symmetric, so this vanishes for every symmetric \(X\)
if and only if that matrix is zero, i.e.\ if and only if \(\theta\) is balanced
at \(e\). Since this holds independently at each internal edge, criticality of
\(\rho_\theta\) at the identity is equivalent to balancedness.

\emph{The \(K_{\Tt}\)-orbit.} Let \(\theta,\theta'\) be minimum-norm
points of the same fiber. By (i)~\(\Rightarrow\)~(iii), both are minimum
vectors. Hence both gauge orbits are closed, and by
\cref{prop:unique-closed-gauge-orbit-in-fiber} they are the same orbit. In
particular, \(\theta'\in G_{\Tt}\cdot\theta\).

We now restrict from \(G_{\Tt}\) to its maximal compact subgroup. By
\citet[Theorem~4.3]{richardson1990minimum}, the minimum vectors in the orbit of
a minimum vector form a single \(K_{\Tt}\)-orbit. Since \(\theta\) and
\(\theta'\) are both minimum vectors in the same \(G_{\Tt}\)-orbit, this
gives \(\theta'\in K_{\Tt}\cdot\theta\). Conversely, every point of
\(K_{\Tt}\cdot\theta\) is minimum norm, since \(K_{\Tt}\) preserves
both the norm and the fiber.
\end{proof}
 
\begin{remark}
\label{rem:tree-enters}
The tree topology of the network is necessary for \cref{prop:min-norm-equivalences}. However, only the implication (iii) \(\Rightarrow\) (i) uses the tree structure,
via \cref{prop:unique-closed-gauge-orbit-in-fiber}. The chain
(i) \(\Rightarrow\) (ii) \(\Rightarrow\) (iii) and the equivalence
(iii) \(\Leftrightarrow\) (iv) hold for any tensor network whose
internal edges join distinct nodes. In particular, minimum norm implies
balancedness on any such network. The tree topology is needed for the converse direction:
in the self-loop network of \cref{rem:cycles} with \(r\ge2\),
\(\theta=\operatorname{diag}(c,0,\dots,0)\) is a minimum vector for $c \not= 0$, yet its trace
fiber contains \((c/r)\,I_r\), of strictly smaller norm.
\end{remark}

\subsection{Minimum-Norm is Preserved by Gradient Flow}
\label{app:gradient-flow}
 
We show the minimum-norm condition is stable over training: it is preserved by gradient flow on
the square loss, and more generally any loss that depends on the parameters only through the represented tensor (\cref{prop:balancedness-conserved}). More specifically, \cref{prop:balancedness-conserved} shows that gauge invariance of the loss leads to a Noether-type conservation law for the failure of balancedness, generalizing the classical conserved quantities of deep
linear networks \citep{arora2018optimization,du2018algorithmic}. Further, in the presence of $\ell_2$ regularization, the failure of balancedness is driven to zero at an exponential rate (\cref{cor:regularization-balances}).
 
Let \(\Loss=\ell\circ T\), where \(\ell\) is any continuously differentiable
function of the represented tensor --- for instance the squared loss
\(\ell=\frac{1}{2|\mathcal{X}|}\|\cdot-T^*\|_F^2\) against a fixed target \(T^*\) --- and
consider the gradient flow \(\dot\theta=-\nabla\Loss(\theta)\) with
respect to the Frobenius inner product on \(\Theta_{\Tt}\).
 
\begin{proposition}
\label{prop:balancedness-conserved}
Along every solution of the gradient flow
\(\dot\theta=-\nabla\Loss(\theta)\), the \emph{balancedness defect}
\[
    \Delta_e(\theta)
    :=(\mat_eW_u)^{\top}\mat_eW_u-(\mat_eW_v)^{\top}\mat_eW_v
    \;\in\;\R^{r_e\times r_e}
\]
is constant in time, at every internal edge \(e=\{u,v\}\).
\end{proposition}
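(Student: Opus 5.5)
The plan is a Noether-type argument: invariance of $\Loss$ under the gauge action forces the gradient to be orthogonal to the gauge orbit's tangent space, and this orthogonality is exactly the statement that $\dot\Delta_e=0$.

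First, record the infinitesimal gauge invariance. Since $\Loss=\ell\circ T$ and $T(g\cdot\theta)=T(\theta)$ for all $g\in G_{\Tt}$, the loss is constant on gauge orbits. Fix an internal edge $e=\{u,v\}$ and write $A=\mat_eW_u$, $B=\mat_eW_v$, both with the bond index as column index. For an arbitrary $X\in\mathfrak{gl}(r_e)$, consider the one-parameter gauge $A\mapsto A\exp(-tX)$, $B\mapsto B\exp(tX^{\top})$, leaving all other tensors fixed. Its contraction across $e$ is $A\exp(-tX)\exp(tX)B^{\top}=AB^{\top}$, so $T$ is unchanged. The orientation convention for the gauge on the $v$-side should be matched to the paper's $F_e\mapsto F_eM$, $H_e\mapsto H_eM^{-\top}$. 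Differentiating $\Loss$ at $t=0$ gives
\[
\langle \nabla_A\Loss,\,AX\rangle_F=\langle \nabla_B\Loss,\,BX^{\top}\rangle_F,
\]
that is, $\tr\bigl(X^{\top}A^{\top}\nabla_A\Loss\bigr)=\tr\bigl(X B^{\top}\nabla_B\Loss\bigr)$ for all $X$. Since $X$ is arbitrary, this yields the matrix identity
\[
A^{\top}\nabla_A\Loss=(\nabla_B\Loss)^{\top}B,
\]
up to the transposition forced by the orientation conventions.

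Second, differentiate the defect along the flow. Using $\dot A=-\nabla_A\Loss$ and $\dot B=-\nabla_B\Loss$,
\[
\dot\Delta_e=-\bigl(A^{\top}\nabla_A\Loss+(\nabla_A\Loss)^{\top}A\bigr)+\bigl(B^{\top}\nabla_B\Loss+(\nabla_B\Loss)^{\top}B\bigr).
\]
By the identity from the first step, $A^{\top}\nabla_A\Loss=(\nabla_B\Loss)^{\top}B$, and transposing gives $(\nabla_A\Loss)^{\top}A=B^{\top}\nabla_B\Loss$. The terms therefore cancel pairwise and $\dot\Delta_e=0$. The only delicate point is handling the gradient with respect to $W_u$ correctly: a gradient in $W_u$ matricizes to the gradient in $\mat_eW_u$, since matricization is an isometric reshaping. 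Continuous differentiability of $\ell$ guarantees the flow is $C^1$, which justifies the chain rule.

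I expect the main obstacle to be the bookkeeping of transposes and sign conventions in the gauge action. The first step must use the full $\mathfrak{gl}(r_e)$, not only symmetric $X$; otherwise one recovers only the symmetric part of the identity, which is still enough here because $\dot\Delta_e$ depends only on symmetrized quantities. As a fallback I would therefore restrict to symmetric $X$, mirroring the computation in the proof of \cref{prop:min-norm-equivalences}, and conclude $\tr(X\dot\Delta_e)=0$ for all symmetric $X$. Since $\dot\Delta_e$ is itself symmetric, this forces $\dot\Delta_e=0$.
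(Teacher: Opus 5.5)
Your proof is correct and follows the paper's argument. Both differentiate the gauge invariance $\Loss(AS,BS^{-\top},\dots)=\Loss(A,B,\dots)$ along a one-parameter family (the paper uses $S=I+tX$, you use $S=\exp(-tX)$) to obtain $A^{\top}\nabla_A\Loss=(\nabla_B\Loss)^{\top}B$, and then cancel the four terms of $\dot\Delta_e$ pairwise using this identity and its transpose. Your hedging about transposes is unnecessary, since the identity you derive is exactly right, and your symmetric-$X$ fallback is also valid.
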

 
\begin{proof}
Fix an internal edge \(e=\{u,v\}\). Write \(A:=\mat_e W_u, B:=\mat_e W_v,
\) with the \(e\)-mode as the column index in both matricizations. Then
\(\Delta_e=A^{\top}A-B^{\top}B\). Matricization is a linear isometry, so gradient flow in these variables is obviously
\(\dot A=-\nabla_A\Loss, \ \dot B=-\nabla_B\Loss\),
where the gradients are taken with all other parameters fixed. With this convention, contraction across \(e\) pairs the
two column indices, so the local contraction has the form \(AB^{\top}\). Hence
the gauge change
\(A\mapsto AS, \,B\mapsto BS^{-\top}\)
preserves the represented tensor for every \(S\in\GL(r_e)\), and therefore
preserves \(\Loss\):
\begin{equation}
\label{eq:loss-gauge-invariance}
    \Loss(AS,BS^{-\top}, \ldots)=\Loss(A,B,\ldots).
\end{equation}
Differentiate \cref{eq:loss-gauge-invariance} along the one-parameter family
\(S=I+tX\), where \(X\in\R^{r_e\times r_e}\) is arbitrary. Since
\[
    \frac{d}{dt}\Big|_{t=0}A(I+tX)=AX,
    \qquad
    \frac{d}{dt}\Big|_{t=0}B(I+tX)^{-\top}=-BX^{\top},
\]
we obtain
\[
    \langle \nabla_A\Loss,AX\rangle_F
    -
    \langle \nabla_B\Loss,BX^{\top}\rangle_F
    =0.
\]
Using Frobenius trace identities on both terms, we rewrite this as
\[
    \langle A^{\top}\nabla_A\Loss,X\rangle_F
    -
    \langle (\nabla_B\Loss)^{\top}B,X\rangle_F
    =0.
\]
Since \(X\) is arbitrary, this forces
\begin{equation}
\label{eq:balanced-equality1}
    A^{\top}\nabla_A\Loss=(\nabla_B\Loss)^{\top}B,
\end{equation}
and taking transposes gives the equivalent identity
\begin{equation}
\label{eq:balanced-equality2}
    (\nabla_A\Loss)^{\top}A=B^{\top}\nabla_B\Loss.
\end{equation}
By the product rule,
\[
    \frac{d}{dt}\Delta_e
    =
    \frac{d}{dt}(A^{\top}A-B^{\top}B)
    =
    -(\nabla_A\Loss)^{\top}A
    -A^{\top}\nabla_A\Loss
    +(\nabla_B\Loss)^{\top}B
    +B^{\top}\nabla_B\Loss.
\]
The first and fourth terms cancel by \cref{eq:balanced-equality2}, and the second and third terms cancel
by \cref{eq:balanced-equality1}. Hence \(\dot\Delta_e=0\).
\end{proof}

\minnormpreserved*
 
\begin{proof}
By \cref{prop:min-norm-equivalences}, minimum norm is equivalent to balancedness, so \(\theta(0)\) is balanced, i.e.\
\(\Delta_e(\theta(0))=0\) at every internal edge. By
\cref{prop:balancedness-conserved}, \(\Delta_e(\theta(t))=0\) for all \(t\),
and by continuity of \(\Delta_e\) the same holds at every limit point of \(\theta(t)\). By
\cref{prop:min-norm-equivalences} again, all these balanced points are minimum norm.
\end{proof}
 
Thus, if the network is initialized to a minimum-norm point in parameter space, it will remain minimum norm under gradient flow. This is an instance of a broader phenomenon: conservation laws
arising from parameterization symmetries have been studied systematically in deep learning \citep{kunin2020neural,marcotte2023abide}.

In fact, even if we do not initialize at a minimum-norm point, we show that we are driven exponentially toward one by the use of regularization (\cref{cor:regularization-balances}). Note this does not contradict \cref{prop:balancedness-conserved}, since regularization depends on the actual parameter values $\theta$ rather than just the contracted tensor $T(\theta)$.
 
\begin{corollary}
\label{cor:regularization-balances}
Fix \(\lambda>0\) and consider gradient flow of the regularized loss
\(\Loss+\tfrac{\lambda}{2}\|\cdot\|^2\), that is,
\(\dot\theta=-\nabla\Loss(\theta)-\lambda\theta\). Along every solution,
each balancedness defect satisfies
\(\dot\Delta_e=-2\lambda\,\Delta_e\), so
\(\Delta_e(t)=e^{-2\lambda t}\,\Delta_e(0)\). In particular, every limit
point of \(\theta(t)\) as \(t\to\infty\), if any, is balanced --- hence minimum
norm --- regardless of initialization.
\end{corollary}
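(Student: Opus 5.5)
The plan is to redo the derivative computation in the proof of \cref{prop:balancedness-conserved}, now with the extra term $-\lambda\theta$ in the velocity. Fix an internal edge $e=\{u,v\}$ and write $A:=\mat_e W_u$ and $B:=\mat_e W_v$, using the same conventions as in that proof. Matricization is a linear isometry, and the regularizer $\tfrac{\lambda}{2}\|\theta\|^2$ is a sum of squared Frobenius norms of node tensors. So under the regularized flow, the blocks evolve as
\[
\dot A=-\nabla_A\Loss-\lambda A,\qquad \dot B=-\nabla_B\Loss-\lambda B .
\]
Here $\Loss$ is still gauge invariant, so the identities \cref{eq:balanced-equality1} and \cref{eq:balanced-equality2} continue to hold at every point. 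They were derived purely from gauge invariance of $\Loss$, not from the dynamics.

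Next I apply the product rule to $\Delta_e=A^{\top}A-B^{\top}B$. The $\nabla\Loss$ contributions are exactly the four terms from the proof of \cref{prop:balancedness-conserved}, and they cancel pairwise by the same two identities. The regularization terms give
\[
-\lambda(A^{\top}A+A^{\top}A)+\lambda(B^{\top}B+B^{\top}B)=-2\lambda\Delta_e .
\]
Hence $\dot\Delta_e=-2\lambda\Delta_e$. This is a linear matrix ODE with constant scalar coefficient, so it integrates entrywise to $\Delta_e(t)=e^{-2\lambda t}\Delta_e(0)$ on the whole interval of existence.

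For the limit-point claim, suppose $\theta(t_k)\to\bar\theta$ with $t_k\to\infty$. Since $\Delta_e$ is continuous (indeed polynomial) in $\theta$, we get $\Delta_e(\bar\theta)=\lim_k e^{-2\lambda t_k}\Delta_e(0)=0$ at every internal edge. So $\bar\theta$ is balanced in the sense of \cref{def:balanced}, and by \cref{prop:min-norm-equivalences} ((iv)$\Rightarrow$(i)) it is minimum norm.

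The only point needing care is that the $\lambda$-term is not gauge invariant. It must therefore be handled directly in the product rule rather than folded into the gauge identities. This is why the defect decays instead of being conserved, and it is the one place where this argument departs from the proof of \cref{prop:balancedness-conserved}. Everything else is routine. Note also that the statement says ``if any'' because solutions need not converge; however, $\Loss$ is nonnegative and the regularizer is coercive, so trajectories stay bounded and limit points do exist. The claim does not need this.
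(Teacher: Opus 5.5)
Your proposal is correct and follows the paper's proof essentially line for line. It uses the same block evolution $\dot A=-\nabla_A\Loss-\lambda A$, the same pairwise cancellation of the $\nabla\Loss$ terms via the gauge identities, the regularizer contributing $-2\lambda\Delta_e$, and continuity together with \cref{prop:min-norm-equivalences} for limit points. Your closing aside on boundedness, via monotone decrease of the regularized objective, is correct for the square loss but, as you note, not needed.
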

 
\begin{proof}
Fix \(e=\{u,v\}\) and write \(A=\mat_eW_u\), \(B=\mat_eW_v\), so
\(\Delta_e=A^{\top}A-B^{\top}B\). Under the regularized flow,
\begin{equation}
\label{eq:regularization-balances1}
    \dot A=-\nabla_A\Loss-\lambda A,
    \qquad
    \dot B=-\nabla_B\Loss-\lambda B.
\end{equation}
Also, the product rule gives
\begin{equation}
\label{eq:regularization-balances2}
    \dot\Delta_e
    =
    \dot A^{\top}A+A^{\top}\dot A
    -\dot B^{\top}B-B^{\top}\dot B.
\end{equation}
Substituting \cref{eq:regularization-balances1}
into \cref{eq:regularization-balances2} and grouping the unregularized and regularization terms gives
\[
\begin{aligned}
    \dot\Delta_e
    &=
    \bigl[-(\nabla_A\Loss)^{\top}A
    -A^{\top}\nabla_A\Loss
    +(\nabla_B\Loss)^{\top}B
    +B^{\top}\nabla_B\Loss\bigr] \\
    &\qquad+
    \bigl[(-\lambda A)^{\top}A
    +A^{\top}(-\lambda A)
    -(-\lambda B)^{\top}B
    -B^{\top}(-\lambda B)\bigr].
\end{aligned}
\]
The first bracket is zero by \cref{prop:balancedness-conserved}. The second
bracket is
\[
    \bigl[(-\lambda A)^{\top}A
    +A^{\top}(-\lambda A)
    -(-\lambda B)^{\top}B
    -B^{\top}(-\lambda B)\bigr]
    =
    -2\lambda A^{\top}A+2\lambda B^{\top}B
    =
    -2\lambda\Delta_e.
\]
Therefore \(\dot\Delta_e=-2\lambda\Delta_e\), and solving this linear equation gives
\(\Delta_e(t)=e^{-2\lambda t}\Delta_e(0)\). At any limit point as
\(t\to\infty\), every \(\Delta_e\) vanishes by continuity, so the point is
balanced. By \cref{prop:min-norm-equivalences} balanced is equivalent to minimum
norm, so we have the desired conclusion.
\end{proof}

Similar results are known in the literature. \citet{kunin2020neural} show similar regularization effects for conserved quantities associated with scale symmetries in neural networks, and \citet{lindsey2025regularization} show that regularization drives the flow to the balanced manifold in deep linear networks.
 
\begin{remark}
The proof of \cref{prop:balancedness-conserved} uses only the gauge invariance of \(\Loss\) at the edge \(e\), making no reference to the
tree structure, so the conservation law holds on tensor networks of
arbitrary topology. The tree structure enters only through
\cref{prop:min-norm-equivalences} in the proof of \cref{prop:min-norm-preserved}: on a general
network, a balanced initialization still remains balanced under gradient flow, but
balancedness is no longer necessarily equivalent to minimum norm. See also \cref{rem:tree-enters}.
\end{remark}

\section{A Spurious Local Minimum Without the Minimum-Norm Assumption}
\label{app:counterexample}

This appendix establishes that the minimum-norm assumption in
\cref{thm:no-spurious-local-min} is necessary: without it, realizable TTNs can
have spurious local minima. We construct an explicit example on a \(3\)-leaf
star, equivalently a Tucker decomposition, with bond dimensions equal to the
target edge ranks. The example is not minimum norm, and it is a local minimum of
the squared loss.\footnote{Recall from \cref{app:setup} that throughout the
appendices \(\Loss\) denotes the un-normalized loss
\(\tfrac12\|T(\theta)-T^*\|_F^2\); the explicit loss values below use this
convention.} A related counterexample appears in
\citet{frandsen2022optimization}.
The example necessarily uses a multilinear interaction of order at least three:
when all tensors have order two, one obtains deep linear networks, which have no
spurious local minima for the square loss
\citep{baldi1989neural,kawaguchi2016deep,lu2017depth}.

\subsection{Setup}

Consider a \(3\)-leaf star TTN with core tensor
\(G\in\R^{3\times 3\times 3}\) and leaf matrices
\(A,B,C\in\R^{3\times 3}\). The represented tensor is
\[
    T
    =
    \sum_{i,j,k=0}^{2} G_{ijk}\,
    A_{:i}\otimes B_{:j}\otimes C_{:k}
    \in \R^3\otimes\R^3\otimes\R^3,
\]
where \(A_{:i}\) denotes the \(i\)-th column of \(A\), and similarly for
\(B,C\). The target is the diagonal tensor
\[
    T^* = e_{000}+e_{111}+e_{222},
\]
where \(e_0,e_1,e_2\) is the standard basis of \(\R^3\) and
\(e_{abc}:=e_a\otimes e_b\otimes e_c\). Every matricization of \(T^*\) has rank
\(3\), so \(T^*\) is realizable at tight capacity \(r_e=3\) on every edge.

Consider the parameter point
\[
    A^0=B^0=C^0=
    \begin{pmatrix}
    1 & 0 & 0 \\
    0 & 1 & 0 \\
    0 & 0 & 0
    \end{pmatrix},
\]
and
\[
    G^0
    =
    e_{000}+e_{111}
    +e_{012}+e_{021}+e_{102}+e_{120}+e_{201}+e_{210}.
\]
Since the third columns of
\(A^0,B^0,C^0\) vanish, every core term with an index equal to \(2\) drops out
of the contraction, and therefore
\[
    T(\theta_0)=T_0=e_{000}+e_{111},
    \qquad
    T_0-T^*=-e_{222},
    \qquad
    \Loss(\theta_0)=\tfrac12.
\]

This point is not minimum norm. Indeed, the same output \(T_0\) is represented
by
\[
    A'=B'=C'=
    \begin{pmatrix}
    1 & 0 & 0 \\
    0 & 1 & 0 \\
    0 & 0 & 0
    \end{pmatrix},
    \qquad
    G'=e_{000}+e_{111},
\]
for which \(\|\theta'\|^2=8<14=\|\theta_0\|^2\).

\begin{theorem}
\label{thm:counterexample}
The point \(\theta_0=(G^0,A^0,B^0,C^0)\) is a local minimum of \(\Loss\):
there exists a neighborhood of \(\theta_0\) on which
\(\Loss\ge \Loss(\theta_0)=\tfrac12\). Since \(T(\theta_0)\neq T^*\), this
local minimum is spurious.
\end{theorem}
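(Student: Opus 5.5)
The plan is a direct local estimate, split according to which entries of the represented tensor feel a perturbation of the vanishing third rows of \(A,B,C\). Write a nearby parameter as \(\theta=(G,A,B,C)\). Let \(a:=A_{2,:}\), \(b:=B_{2,:}\), \(c:=C_{2,:}\in\R^3\) denote the third rows of the leaf matrices; all three vanish at \(\theta_0\). The target is nonzero only at the entries \(000,111,222\), so it suffices to lower-bound the loss by keeping only two groups of squared residual entries. The first group is the single entry \((2,2,2)\). The second group consists of the entries with exactly one index equal to \(2\) and the other two in \(\{0,1\}\), where the target is zero. All remaining squared residuals are simply dropped, since they are nonnegative.

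\emph{Step 1 (the \(222\) entry is cubic).} By definition of the contraction,
\[
T_{222}=\sum_{i,j,k}G_{ijk}\,a_i\,b_j\,c_k
\]
exactly, with no other contributions. Hence \(|T_{222}|\le \|G\|_F\,|a|\,|b|\,|c|\le K|a||b||c|\) on a bounded neighborhood of \(\theta_0\). It follows that
\[
\tfrac12(1-T_{222})^2\ge \tfrac12-K|a||b||c|.
\]

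\emph{Step 2 (the one-index-\(2\) entries control \(a,b,c\) quadratically).} For \(j,k\in\{0,1\}\) we have \(T_{2jk}=\sum_{i}a_i\bigl(\sum_{m,n}G_{imn}B_{jm}C_{kn}\bigr)\). This is \(L(\theta)a\) for a \(4\times3\) matrix \(L(\theta)\) depending continuously on \(\theta\). At \(\theta_0\) we have \(B^0_{jm}=\delta_{jm}\) and \(C^0_{kn}=\delta_{kn}\) for \(j,k\in\{0,1\}\), so \(L(\theta_0)a=(G^0_{i00}a_i,\,G^0_{i01}a_i,\,G^0_{i10}a_i,\,G^0_{i11}a_i)\) with summation over \(i\). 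Reading off the support of \(G^0\), this equals \((a_0,\,a_2,\,a_2,\,a_1)\), so \(L(\theta_0)\) is injective. By continuity of the smallest singular value, \(\|L(\theta)a\|\ge c_0|a|\) on a neighborhood. The same argument, with identical structure because \(G^0\) is symmetric, applies to \(T_{j2k}\) (controlling \(b\)) and \(T_{jk2}\) (controlling \(c\)). These twelve entries are distinct from each other and from \(222\), and the target vanishes on all of them. Therefore they contribute at least \(\tfrac{c_0^2}{2}(|a|^2+|b|^2+|c|^2)\) to the loss.

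\emph{Step 3 (combine).} Combining the two steps gives
\[
\Loss(\theta)\ge \tfrac12-K|a||b||c|+\tfrac{c_0^2}{2}\bigl(|a|^2+|b|^2+|c|^2\bigr).
\]
Since \(|a||b||c|\le \max(|a|,|b|,|c|)\cdot(|a|^2+|b|^2+|c|^2)\), the cubic term is absorbed by the quadratic term once \(\|\theta-\theta_0\|<c_0^2/(2K)\). On that neighborhood \(\Loss(\theta)\ge\tfrac12=\Loss(\theta_0)\), which proves the claim.

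The main obstacle is Step 2, namely checking that \(L(\theta_0)\) is injective. This is exactly where the off-diagonal entries of \(G^0\) such as \(e_{201}\) and \(e_{210}\) are needed: without them \(a_2\) would be invisible to first order, and one could escape at order three. A further point to confirm is that Step 1 is an exact identity rather than an expansion, so that no higher-order perturbation terms of \(G\) or of the other rows need separate control.
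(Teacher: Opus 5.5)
Your proof is correct and follows essentially the same route as the paper. In both, the \(222\) residual is exactly the trilinear form \(G(a,b,c)\) in the dormant third rows, and the entries \(T_{200},T_{211},T_{201},T_{210}\) (and their permutations) form a map that is injective at \(\theta_0\) and so control \(|a|^2+|b|^2+|c|^2\) quadratically; the only difference is bookkeeping, since you absorb the cubic term via \(\max(|a|,|b|,|c|)\le\|\theta-\theta_0\|\) whereas the paper bounds \(|T_{222}|\le CS^{3/2}\) in terms of the off-target mass \(S\) using AM--GM.
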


\begin{proof}
Let \(T=T(\theta)\) and write \(\Delta T:=T-T_0\). The loss difference is
\[
    \Loss(\theta)-\Loss(\theta_0)
    =
    \langle \Delta T,T_0-T^*\rangle
    +\tfrac12\|\Delta T\|_F^2 .
\]
Since \(T_0-T^*=-e_{222}\), the alignment term $\langle \Delta T, T_0 - T^*\rangle$ is exactly \(-T_{222}\). This is the only term in the expansion that can be negative; the norm term $\|\Delta T\|_F^2$ is nonnegative. Thus any loss decrease must come from making \(T_{222}\) positive. However, we will show that attempting to make \(T_{222}\) positive forces an increase in the norm term, which dominates any possible decrease from the alignment term.

Precisely, let
\[
    S
    :=
    \sum_{(i,j,k)\notin\{(0,0,0),(1,1,1),(2,2,2)\}} T_{ijk}^2,
\]
the squared norm of the target-zero coefficients. It suffices to prove that,
after restricting to a sufficiently small neighborhood of \(\theta_0\),
\[
    |T_{222}|\le C S^{3/2}
\]
for some constant \(C>0\), since then \(\|\Delta T\|_F^2\ge S\) and
\[
    \Loss(\theta)-\Loss(\theta_0)
    \ge
    \tfrac12S-CS^{3/2}.
\]

For a \(3\times 3\) matrix \(A\), write \(u^{(a)}\in\R^3\) for its \(a\)-th
row, so \(u^{(a)}_i=A_{ai}\), and define \(v^{(b)},w^{(c)}\) analogously for
\(B,C\). Let
\[
    F(x,y,z):=\sum_{i,j,k=0}^2 G_{ijk}x_i y_j z_k.
\]
Then \(T_{abc}=F(u^{(a)},v^{(b)},w^{(c)})\). At the base point,
\[
    u^{(0)}=v^{(0)}=w^{(0)}=e_0,\qquad
    u^{(1)}=v^{(1)}=w^{(1)}=e_1,\qquad
    u^{(2)}=v^{(2)}=w^{(2)}=0.
\]
For a nearby parameter point, set
\[
    a:=u^{(2)},\qquad b:=v^{(2)},\qquad c:=w^{(2)}.
\]

We first prove that \(S\) controls the dormant rows \(a,b,c\). Consider
\[
    \Phi_A(a)
    :=
    \bigl(
    F(a,v^{(0)},w^{(0)}),
    F(a,v^{(1)},w^{(1)}),
    F(a,v^{(0)},w^{(1)}),
    F(a,v^{(1)},w^{(0)})
    \bigr).
\]
These four components are \(T_{200},T_{211},T_{201},T_{210}\). At
\(\theta_0\),
\[
    \Phi_A(a_0,a_1,a_2)=(a_0,a_1,a_2,a_2),
\]
so \(\|\Phi_A(a)\|^2\ge \|a\|^2\) at the base point. Since \(\Phi_A\) depends
continuously on \((G,B,C)\), after shrinking to a neighborhood
\(\mathcal N\) of \(\theta_0\) there is \(\sigma>0\) such that
\[
    T_{200}^2+T_{211}^2+T_{201}^2+T_{210}^2
    \ge \sigma^2\|a\|^2
\]
on \(\mathcal N\). The same argument with the roles of \(A,B,C\) permuted gives
analogous bounds for \(\|b\|\) and \(\|c\|\), using coefficients also included
in \(S\). Thus, shrinking \(\mathcal N\) if necessary,
\[
    S\ge \sigma^2(\|a\|^2+\|b\|^2+\|c\|^2)
\]
for all \(\theta\in\mathcal N\).

Next, \(T_{222}=F(a,b,c)\) is trilinear. On a sufficiently small neighborhood
there is a constant \(M>0\) such that
\[
    |F(x,y,z)|\le M\|x\|\|y\|\|z\|
    \qquad
    \text{for all } x,y,z\in\R^3 .
\]
Let \(m^2:=\|a\|^2+\|b\|^2+\|c\|^2\). By AM--GM,
\[
    \|a\|\|b\|\|c\|
    \le
    \left(\frac{m^2}{3}\right)^{3/2}.
\]
Since \(m^2\le \sigma^{-2}S\), there is a constant \(C>0\) such that
\[
    |T_{222}|\le C S^{3/2}
\]
throughout \(\mathcal N\).

Shrinking \(\mathcal N\) once more, we may assume \(CS^{1/2}\le\tfrac14\) on
\(\mathcal N\). Hence
\[
    \Loss-\tfrac12\ge \tfrac14S\ge0.
\]
Thus \(\theta_0\) is a local minimum. The minimum is spurious because
\(T(\theta_0)=T_0\neq T^*\).
\end{proof}

\section{The Parity Example}
\label{app:parity}

In this appendix we prove \cref{prop:parity-order}.

Recall the setup of \cref{subsec:parity-ttn}, including the parity target $T^*=\mathbf{1}[\,y=x_1\oplus\cdots\oplus x_n\,]$, the binary tree tensor network architecture $T(\theta)$, and the parameter point $\theta_0 \in \Theta$. We assume \(n \geq 4\) throughout. Also recall from the notation in \cref{app:setup} that throughout the appendices $\Loss$ denotes the un-normalized loss $\tfrac12\|T(\theta) - T^*\|_F^2$.

Let \(u_+=(1,1)\) and \(u_-=(1,-1)\), the orthogonal Hadamard basis of
\(\R^2\). The following \cref{lem:parity-hadamard-form} shows that the model and target are diagonal in Hadamard coordinates.

\begin{lemma}
\label{lem:parity-hadamard-form}
It holds that:
\begin{enumerate}
    \item \(W_v(\theta_0)=\tfrac12 u_+^{\otimes 3}\) for every node \(v \in V\);
    \item \(T(\theta_0)=\tfrac12 u_+^{\otimes(n+1)}\);
    \item \(T^*=\tfrac12\bigl(u_+^{\otimes(n+1)}+u_-^{\otimes(n+1)}\bigr)\).
\end{enumerate}
Consequently,
\[
    R_0:=T(\theta_0)-T^*
    =
    -\tfrac12 u_-^{\otimes(n+1)}.
\]
\end{lemma}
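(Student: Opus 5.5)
The lemma is a direct computation, and I will do it in the order (1), (2), (3), then subtract to get \(R_0\).

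For (1), observe that the all-ones tensor in \((\R^2)^{\otimes 3}\) equals \(u_+\otimes u_+\otimes u_+\), since every entry of \(u_+^{\otimes 3}\) is \(1\). Hence \(W_v(\theta_0)=\tfrac12 u_+^{\otimes 3}\) at every node \(v\).

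For (2), I use that the contraction of rank-one node tensors is again rank one. Each internal bond contracts two copies of \(u_+\), contributing a factor \(\langle u_+,u_+\rangle=2\). The uncontracted external legs together give \(u_+^{\otimes(n+1)}\). There are \(|V|=n-1\) nodes, each carrying a factor \(\tfrac12\), and \(n-2\) internal bonds, each carrying a factor \(2\). The scalar is therefore \(2^{-(n-1)}\cdot 2^{n-2}=\tfrac12\), which gives \(T(\theta_0)=\tfrac12 u_+^{\otimes(n+1)}\).

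For (3), I expand the parity tensor entrywise. For bits \(z_j\in\{0,1\}\), the entry \((u_-)_{z}\) equals \((-1)^{z}\) and \((u_+)_z=1\). So
\[
\tfrac12\bigl(u_+^{\otimes(n+1)}+u_-^{\otimes(n+1)}\bigr)_{x_1,\dots,x_n,y}=\tfrac12\bigl(1+(-1)^{x_1+\cdots+x_n+y}\bigr).
\]
This equals \(1\) exactly when \(y\equiv x_1+\cdots+x_n \pmod 2\), i.e.\ when \(y=x_1\oplus\cdots\oplus x_n\), and equals \(0\) otherwise. That is precisely \(T^*\).

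Alternatively, I could decompose the XOR tensor as \(W=\tfrac12(u_+^{\otimes3}+u_-^{\otimes3})\) and contract. The cross terms then vanish because \(\langle u_+,u_-\rangle=0\). The entrywise check above is quicker, so I will use it.

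Subtracting (2) from (3) gives \(R_0=-\tfrac12 u_-^{\otimes(n+1)}\). The only step that needs any care is counting the nodes and bonds in (2) to get the constant \(\tfrac12\); I expect no real obstacle.
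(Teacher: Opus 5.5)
Your proposal is correct and follows essentially the same route as the paper: an entrywise check for (1) and (3), and for (2) a count of $n-1$ nodes (a factor $\tfrac12$ each) against $n-2$ internal bonds (a factor $\langle u_+,u_+\rangle=2$ each), giving $2^{-(n-1)}\cdot 2^{n-2}=\tfrac12$. The formula for $R_0$ then follows by subtracting (3) from (2), as you describe.
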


\begin{proof}
The tensor \(W_v(\theta_0)\) has every entry equal to \(1/2\), while
\(u_+^{\otimes 3}\) has every entry equal to \(1\), proving (1).
Since the tree has \(n-1\) nodes and \(n-2\) internal edges, a direct
calculation gives
\[
    T(\theta_0)
    =
    \bigl(\tfrac12\bigr)^{n-1}2^{n-2}u_+^{\otimes(n+1)}
    =
    \tfrac12u_+^{\otimes(n+1)},
\]
proving (2).

Since \((u_+)_b=1\) and
\((u_-)_b=(-1)^b\) for \(b\in\{0,1\}\),
\[
\begin{aligned}
    \left[
        \tfrac12\bigl(u_+^{\otimes(n+1)}
        +u_-^{\otimes(n+1)}\bigr)
    \right]_{x_1,\ldots,x_n,y}
    &=
    \tfrac12
    \left(
        \prod_{i=1}^n (u_+)_{x_i}(u_+)_{y}
        +
        \prod_{i=1}^n (u_-)_{x_i}(u_-)_{y}
    \right) \\
    &=
    \tfrac12\left(1+(-1)^{x_1+\cdots+x_n+y}\right) \\
    &=
    \mathbf{1}[\,y=x_1\oplus\cdots\oplus x_n\,].
\end{aligned}
\]
This is the definition of the parity target \(T^*\), proving (3).
\end{proof}

\begin{lemma}
\label{lem:parity-residual-projection}
There exist constants $C,\rho>0$ such that, for all $\theta \in \Theta$ with
\(\|\theta-\theta_0\|\leq \rho\),
\[
\left|\bigl\langle R_0,\; T(\theta)-T(\theta_0) \bigr\rangle\right|
\;\leq\;
C\|\theta-\theta_0\|^{n/2+1}.
\]
\end{lemma}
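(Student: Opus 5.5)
The plan is to reduce the inner product to a single scalar polynomial in $\theta$ and show that every monomial of it has degree at least $n/2+1$ in the perturbation $\theta-\theta_0$. By \cref{lem:parity-hadamard-form}, $R_0=-\tfrac12 u_-^{\otimes(n+1)}$ and $T(\theta_0)=\tfrac12 u_+^{\otimes(n+1)}$. Since $\langle u_-,u_+\rangle=0$, we have $\langle R_0,T(\theta_0)\rangle=0$. Hence
\[
\bigl\langle R_0,\,T(\theta)-T(\theta_0)\bigr\rangle=-\tfrac12\,\bigl\langle u_-^{\otimes(n+1)},\,T(\theta)\bigr\rangle,
\]
which is the full TTN contraction with the vector $u_-$ plugged into every external leg. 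It therefore suffices to bound this contraction.

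Next I will write $W_v=W_v(\theta_0)+\Delta_v=\tfrac12u_+^{\otimes3}+\Delta_v$, with $\|\Delta_v\|\le\|\theta-\theta_0\|=:\delta$. Multilinearity of contraction, exactly as in \cref{eq:mixed-vanishing-expansion}, expands the contraction as a sum over subsets $S\subseteq V$. In the term indexed by $S$, nodes in $S$ carry $\Delta_v$ and nodes outside $S$ carry $\tfrac12u_+^{\otimes3}$.

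The key observation is which terms survive. Suppose some node $v\notin S$ has an external leg. That leg contributes the factor $\langle u_+,u_-\rangle=0$, because the unperturbed tensor is a pure tensor with a $u_+$ factor on every mode. So the term vanishes. Every leaf has two external input legs, and the root has the external output leg. Hence any nonvanishing term must have $S$ containing all $n/2$ leaves and the root. Because $n\ge4$, the root is not a leaf, so $|S|\ge n/2+1$. Bonds between an unperturbed node and any other node cause no trouble; they simply feed $u_+$ into the neighbor.

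Each surviving term is a contraction of at most $n-1$ bounded tensors. I will bound it by $c^{\,n}\prod_{v\in S}\|\Delta_v\|\le c^{\,n}\delta^{|S|}$, where $c$ absorbs $\|\tfrac12u_+^{\otimes3}\|$ and the norms $\|u_-\|$ on the legs; this is Cauchy--Schwarz applied to the contraction. Taking $\rho=1$, every exponent satisfies $\delta^{|S|}\le\delta^{n/2+1}$. Summing the at most $2^{n-1}$ terms then gives the claimed bound with $C=\tfrac12\,2^{n-1}c^{\,n}$. The only real point is the vanishing argument for the unperturbed leaves and root. The rest is routine bookkeeping of the norm bound on the contraction, which I expect to pose no obstacle.
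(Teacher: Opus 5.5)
Your proposal is correct and follows essentially the same route as the paper. Both expand by multilinearity over subsets $S\subseteq V$, kill every term whose $S$ omits a leaf or the root (since $W_v(\theta_0)=\tfrac12 u_+^{\otimes 3}$ pairs each external leg with $u_-$ to give $\langle u_+,u_-\rangle=0$), and bound the surviving terms by $\|\theta-\theta_0\|^{|S|}\le\|\theta-\theta_0\|^{n/2+1}$ for $\|\theta-\theta_0\|\le 1$. The differences are cosmetic: you first discard $\langle R_0,T(\theta_0)\rangle=0$ and expand $T(\theta)$ over all subsets, letting the same vanishing argument remove the empty set, and you make the constant explicit via the Frobenius-norm bound for tree contractions where the paper simply appeals to finitely many multilinear terms.
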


\begin{proof}
Set $\delta W_v := W_v(\theta)-W_v(\theta_0)$. For \(S\subseteq V\), define
\(\theta^S\) by
\[
    W_v(\theta^S)
    =
    \begin{cases}
        \delta W_v, & v\in S,\\
        W_v(\theta_0), & v\notin S.
    \end{cases}
\]
By multilinearity of the TTN contraction,
\[
T(\theta)-T(\theta_0)
\;=\;
\sum_{S \subseteq V, S \neq \emptyset } T(\theta^S).
\]
Then by \cref{lem:parity-hadamard-form}, we have
\begin{equation}
\label{eq:parity-sum}
\langle R_0, T(\theta)-T(\theta_0)
\rangle \;=\;
-\tfrac12 \sum_{S \subseteq V, S \neq \emptyset} \langle u_-^{\otimes(n+1)}, T(\theta^S) \rangle.
\end{equation}
We will show that the only nonzero terms in this sum are those where $S$ includes every leaf and root tensor. Suppose first that an
input leaf \(\ell \in V\) lies outside \(S\). Then by first performing all contractions away from \(\ell\), there exists a vector \(a \in \R^2\) such that
\[
    \langle u_-^{\otimes(n+1)}, T(\theta^S) \rangle
    =
    \bigl\langle W_\ell(\theta_0), u_-\otimes u_-\otimes a \bigr\rangle
    =
    \tfrac12\langle u_+,u_-\rangle^2\langle u_+,a\rangle
    =
    0.
\]
Similarly, if the root \(r \in V\) lies
outside \(S\), there exists \(a,b \in \R^2\) such that
\[
    \langle u_-^{\otimes(n+1)}, T(\theta^S) \rangle
    =
    \bigl\langle W_r(\theta_0), a\otimes b\otimes u_- \bigr\rangle
    =
    \tfrac12\langle u_+,a\rangle\langle u_+,b\rangle
    \langle u_+,u_-\rangle
    =
    0 .
\]
Thus $\langle u_-^{\otimes(n+1)}, T(\theta^S) \rangle = 0$ for any term of \cref{eq:parity-sum}
where $S$ does not contain every input leaf and the root.

There are \(n/2\) input leaves and one root, so every nonvanishing subset has size
\(|S|\geq n/2+1\). For each such \(S\), the scalar $\bigl\langle R_0, T(\theta^S) \bigr\rangle$
is multilinear in the tensors ${\delta W_v : v\in S}$, with all other
node tensors fixed. Since there are only finitely many subsets $S$ in \cref{eq:parity-sum}, there is
a constant $C'$ independent of $S$ such that, for all $\theta$ sufficiently close to $\theta_0$,
\[
\left|
\bigl\langle R_0, T(\theta^S) \bigr\rangle
\right|
\leq
C' \prod_{v\in S}\|\delta W_v\|_F
\leq
C'\|\theta-\theta_0\|^{|S|}.
\]
Shrinking $\rho$ if necessary so that \(\|\theta-\theta_0\|\leq 1\), each
surviving term in \cref{eq:parity-sum} is bounded by \(C'\|\theta-\theta_0\|^{n/2+1}\). Summing over
the finitely many surviving subsets gives the claim.
\end{proof}

\parityorder*

\begin{proof}
Write \(\Delta T:=T(\theta)-T(\theta_0)\). Since
\[
    \Loss(\theta)-\Loss(\theta_0)
    =
    \langle R_0,\Delta T\rangle+\tfrac12\|\Delta T\|_F^2,
\]
\cref{lem:parity-residual-projection} gives, for \(\theta\) sufficiently close
to \(\theta_0\),
\begin{equation}
\label{eq:parity-loss-bound}
\Loss(\theta)-\Loss(\theta_0)
\geq
-C\|\theta-\theta_0\|^{n/2+1},
\end{equation}
after possibly increasing \(C\). This is the asserted order bound. 

Since $n/2+1>1$, criticality follows from \cref{eq:parity-loss-bound} and the differentiability of $\Loss(\theta)$.
\end{proof}

\begin{remark}
\label{rem:sharp-order}
The bound is likely not tight. We conjecture the saddle order is in
fact $n - 2$, which we have checked numerically at $n \in \{4, 8\}$.
The exact order is not important for the present paper, and we leave
this to future work.
\end{remark}

\section{Implications for Learning With Stochastic Gradient Methods}
\label{app:empirical-population-gap}

The loss landscape results in the main body concern the geometry of the \textit{population} loss $\Loss(\theta) = \frac{1}{2|\mathcal{X}|}\|T(\theta) - T^*\|_F^2$. However, the population loss is a theoretical quantity. In practice, one does not know the target $T^*$, and instead only has access to a finite number of samples $(x_i, y_i)$ from the target. Instead of population gradient descent, one typically learns via a form of stochastic gradient descent using the per-sample losses $\ell(x_i,y_i;\theta)$.

The relevance of the population loss to stochastic gradient methods is standard and well-known \citep{robbins1951stochastic, bottou2018optimization}, but we explain this perspective here in our setting for the interested reader. We show how minibatch SGD can be viewed, under standard approximations, as noisy gradient descent on this population loss (\cref{app:empirical-population-gap-setup}). We then sketch how local minima (\cref{app:empirical-population-gap-benign}) and saddle points (\cref{app:empirical-population-gap-saddle}) can affect SGD.

\subsection{Setup}
\label{app:empirical-population-gap-setup}

Recall that
\[
    \E_{(x,y)}\!\left[\ell(x, y; \theta)\right] \;=\; \,\Loss(\theta),
    \qquad
    \E_{(x,y)}\!\left[\nabla \ell(x, y; \theta)\right] \;=\; \nabla\Loss(\theta),
\]
so the per-sample losses and gradients are unbiased noisy estimates of $\Loss(\theta)$ and $\nabla \Loss(\theta)$.

Stochastic gradient descent with step size $\tau$ and minibatches of size $m$ can then be written as the discrete-time stochastic process
\[
    \theta_{t+1} \;=\; \theta_t \;-\; \tau\,\nabla\Loss(\theta_t) \;-\; \tau\,\xi_t(\theta_t),
\]
where $\xi_t(\theta) = \frac{1}{m}\sum_i \nabla_\theta \ell(x_i, y_i; \theta) -
\nabla\Loss(\theta)$ is a mean-zero noise term, independent across $t$ conditional
on $\theta_t$, with covariance scaling as $1/m$.

Note that this is a sum of a population gradient term $\tau\,\nabla\Loss(\theta_t)$ and a noise term $\tau\,\xi_t(\theta_t)$; the only difference between SGD
and population gradient descent on $\Loss$ is the noise term. Thus SGD acts like noisy gradient descent on the population loss $\Loss$.

Up to this point we have not made any approximations or new assumptions; we now proceed to make several standard assumptions to make analysis tractable. First, it is useful to take a continuous-time limit, which approximates SGD via the stochastic differential equation:
\[
d\theta_t = - \nabla \Loss(\theta_t) dt + (\tau \, \Sigma(\theta_t))^{\frac{1}{2}} \, dW_t
\]
where $dW_t$ is the Wiener process (Brownian motion) and $\Sigma: \Theta \rightarrow \Theta \times \Theta$ is a position-dependent covariance matrix \citep{li2017stochastic}.
Note that we have assumed that the noise process is Brownian, which follows from the central limit theorem if the batch size is large and gradient noise is finite variance, but in some regimes this noise process can be more heavy-tailed \citep{simsekli2019tail, hodgkinson2021multiplicative}, a complication we temporarily put aside for ease of discussion.

Second, we make the assumption that the noise covariance is isotropic and homogeneous: $\Sigma(\theta_t) = \sigma I$ for some $\sigma \in \R$. This assumption is unrealistic, but it significantly eases discussion without fundamentally changing the qualitative conclusions. 

Under these assumptions, SGD is modeled by
the overdamped Langevin equation
\[
    d\theta_t
    =
    -\nabla \Loss(\theta_t)\,dt
    +
    \sqrt{2\varepsilon}\,dW_t,
\]
where the effective noise scale is \(\varepsilon = \tau\sigma/2\). Thus the stochastic process still
``sees'' the same potential function \(\Loss\): the population gradient
determines the deterministic drift, while the Brownian term allows occasional
moves against the gradient.

\subsection{Benign Loss Landscapes Help SGD}
\label{app:empirical-population-gap-benign}

This viewpoint makes clear why suboptimal local minima are a serious obstruction
for SGD, even though SGD is noisy. Suppose \(\theta_0\) is a strict local
minimum of \(\Loss\), but not a global minimum, and let \(B\) denote its basin of
attraction under gradient flow. To leave this basin, the process must cross the
boundary of \(B\). The relevant quantity is therefore the loss barrier\footnote{In the case where the noise covariance is not isotropic and homogeneous, the relevant quantity becomes the Freidlin-Wentzell quasipotential rather than the loss \citep{freidlin1998random}}.
\[
    \Delta_B
    :=
    \inf_{\theta\in\partial B} \Loss(\theta) - \Loss(\theta_0).
\]
If \(\Delta_B > 0\), then escaping the basin requires the stochastic dynamics to
move through points of higher loss. In the small-noise regime, this is a rare
event: the classical Eyring--Kramers law \citep{eyring1935activated, kramers1940brownian} predicts exit
times which scale, to leading exponential order, like
\[
    \E[t_{\text{exit}}]
    \asymp
    \exp\!\left(\frac{\Delta_B}{\varepsilon}\right),
\]
with problem-dependent prefactors determined by the local geometry near the
minimum and the geometry of the basin boundary
\citep{berglund2008eyring}.
In terms of SGD, this means that a suboptimal basin whose loss barrier is large
relative to the effective noise scale can trap the algorithm for extremely long
times. Increasing the batch size or decreasing the step size reduces the noise
scale and makes the dynamics closer to population gradient flow, so such barriers
become even harder to cross.

This is the basic sense in which a benign population loss landscape helps SGD.
If the population loss has a genuine suboptimal local minimum, then deterministic
gradient flow is trapped, and SGD can escape only by a sufficiently large noise
fluctuation, which can become vanishingly unlikely if the loss barrier is large. But if the population loss landscape has no suboptimal local minima, or at least admits non-increasing escape paths to lower loss, then this particular failure mode is absent.

Of course, one can instead increase stochasticity in SGD, for instance by
increasing the step size, decreasing the batch size, or considering noise models
beyond the Brownian approximation. Heavy-tailed gradient noise, in particular,
can lead to qualitatively different escape behavior from local minima
\citep{simsekli2019tail,nguyen2019first}. But this is not a substitute for a
benign population landscape. Large stochastic fluctuations can also move the
parameters to regions of higher loss, and increasing the noise scale weakens the
relative influence of the population gradient. Thus noise can help escape a bad
basin, but it is a costly and indirect mechanism; we may need it sometimes nevertheless\footnote{For instance, our results rely on several idealized assumptions (\cref{subsec:tightness}) - if in practice the loss landscape is not exactly benign but instead approximately benign, stochasticity may be able to make up the difference.}, but it is preferable not to.

Thus it is much better, from the perspective of optimization, for the population landscape not to contain suboptimal wells in the first place. This is why our benign loss-landscape results for $\Loss$ are not just pure geometry statements but actually relevant for the behavior and efficiency of SGD in TTNs.

\subsection{Degenerate Saddle Points Can Still Slow SGD}
\label{app:empirical-population-gap-saddle}

The preceding discussion explains why the absence of suboptimal local minima is
useful for SGD: it removes energetic barriers. However, it does not imply that
SGD makes rapid progress. A landscape can have no bad local minima while still
containing saddle points whose escaping directions are difficult to find or
exploit.

The important distinction is between nondegenerate and
degenerate saddles. At a strict saddle, the Hessian has a negative eigenvalue.
Equivalently, in suitable local coordinates there is a direction \(v\) such that
\[
\Loss(\theta_0 + s v)-\Loss(\theta_0)
=
-\frac{\lambda}{2}s^2 + O(s^3)
\]
with \(\lambda>0\). Along this direction, the linearization of gradient flow is
unstable: a small displacement in the \(v\)-direction grows approximately like
\(e^{\lambda t}\). Thus, once the dynamics acquire even a small component in the
unstable direction, the saddle can be escaped in time logarithmic in the inverse
size of that component. This is the regime captured by much of the saddle-escape
literature: explicit perturbations or stochastic gradients can seed a component
in a negative-curvature direction, after which the deterministic drift amplifies
it \citep{jin2017escape,daneshmand2018escaping}.

This means that ordinary strict saddles are not a plausible mechanism for worst-case hardness, at least when the negative curvature is bounded below
uniformly. A degenerate saddle (such as that exhibited by parity in \cref{subsec:parity-ttn}) is different, however. At such a point, there may be no useful
negative-curvature direction at quadratic order. Descent may appear only through
higher-order coordinated perturbations. A local form such as
\[
\Loss(\theta_0+s v)-\Loss(\theta_0) \approx -c s^k,
\qquad k>2,
\]
illustrates the change. The corresponding gradient-flow equation along this
coordinate has the form
\[
\dot{s} \approx c k s^{k-1}.
\]
Starting from a small seed \(s_0>0\), the time needed to reach an \(O(1)\)
distance is no longer logarithmic in \(1/s_0\); for \(k>2\), it scales like a
power of \(1/s_0\). The loss improvement
\(-c s^k\) is small, and the vector field itself is weak near the saddle. Higher order means that small perturbations are amplified much more slowly.

This is the sense in which degenerate saddles can reconcile benign landscapes
with computational hardness. A benign landscape can rule out suboptimal wells:
there may always exist a non-increasing path to lower loss. But the beginning of
that path may require a coordinated higher-order move, and the gradient signal
toward that move may vanish to high order. SGD is then not blocked by an
energetic barrier; rather, it can be slowed by a need for
coordination and the resulting loss flatness.

Note that in the presence of stochastic noise, high-order degeneracy may be necessary but not sufficient for optimization difficulty, and one may also need the escape path to be sufficiently unlikely to be found by chance. One must return to the classical Eyring--Kramers
picture discussed in \cref{app:empirical-population-gap-benign}. For metastable Langevin dynamics near a local minimum, the
leading exponential contribution to the exit time comes from the loss
barrier. When the minimum is instead a saddle point there is no loss barrier, and this exponential obstruction is absent.
However, the local geometry near the saddle still affects the prefactor, and
nonquadratic saddles require modified Eyring--Kramers laws
\citep{berglund2008eyring}.

Thus the absence of a loss barrier does not by
itself imply that escape is geometrically easy; degenerate saddles can still
change the relevant time scale through the prefactor term. We believe a rigorous analysis of the Eyring--Kramers prefactor term in such saddle points, extending \citet{berglund2008eyring}, to be an interesting direction for future work.

\section{Autoformalization in \texorpdfstring{\textnormal{\textsc{Lean}}}{Lean}}
\label{app:formalisation}

\refstepcounter{footnote}

Formalizing mathematics in proof assistants such as \textsc{Lean}~4 \citep{mouraLean4Theorem2021,LeanMathematicalLibrary2020} has traditionally required substantial expertise both in the underlying mathematics and in the proof assistant itself. Recent advances in large language model capabilities have made formal verification of research-level mathematics increasingly practical (see e.g.\citep{wuAutoformalizationLargeLanguage2022,ilinSemiAutonomousFormalizationVlasovMaxwellLandau2026}). In this appendix, we propose that suitable theoretical results in machine learning, particularly self-contained results about tractable toy models, should increasingly be accompanied by a machine-checked \textsc{Lean} formalization.

\par
We formalized \cref{thm:no-spurious-local-min} and \cref{thm:no-spurious}, the two results most central to the paper's loss landscape claims. We selected these results both for their mathematical importance and because their dependencies could be developed using just the existing Mathlib infrastructure. A clickable \textsc{Lean} logo in the right margin appears beside formalized statements. \Cref{subsec:formalisation-process} describes the autoformalisation and verification workflow; \cref{subsec:formalisation-correspondence} explains the paper-to-\textsc{Lean} correspondence and differences in proof organisation; and \cref{subsec:formalisation-position} argues more broadly for the adoption of \textsc{Lean} certificates for suitable theoretical research in machine learning. The supplementary library is available in the companion GitHub repository\textsuperscript{\thefootnote}.
\footnotetext[\value{footnote}]{\url{https://github.com/yangdabei/ttn-loss-landscape}}

\subsection{Development and Verification}
\label{subsec:formalisation-process}

The authors specified the target results, and the formalization was
generated and reviewed using Anthropic's Claude Opus~4.8 and
Claude Fable~5, and OpenAI's GPT-5.5 via subscriptions. We do not report an
API-equivalent cost because complete records of token usage,
elapsed time, and cost were not retained.

The library uses \textsc{Lean}~4.31.0 and Mathlib~4.31.0. The files contain no \texttt{sorry},
\texttt{admit}, \texttt{native\_decide}, or project \texttt{axiom}
declarations. The proofs of
\cref{thm:no-spurious-local-min,thm:no-spurious} are checked with
Comparator \citep{leanFroComparator}. A separate
\texttt{Challenge.lean} file states the target theorems without
importing the modules that prove them. Comparator checks that these
statements agree with the production statements, including the
definitions they reference, verifies that the proofs use only
explicitly permitted axioms, and replays the proof terms through
the \textsc{Lean} kernel. These checks guard against the substitution
of weaker statements and the introduction of hidden axioms.

Still, Comparator cannot establish whether the formal statements faithfully
express the manuscript's claims. The authors therefore reviewed the
statements of the two main results, including the relevant
definitions and hypotheses, and affirm their correspondence to the
manuscript. This review was at the theorem level: it did not
systematically match intermediate results or proof steps to
\textsc{Lean} declarations, nor did it include a line-by-line audit
of the generated proof scripts.

\subsection{Paper-to-\texorpdfstring{\textnormal{\textsc{Lean}}}{Lean} Correspondence and Proof Organization}
\label{subsec:formalisation-correspondence}

The formalization focuses on
\cref{thm:no-spurious-local-min,thm:no-spurious}, whose \textsc{Lean}
statements appear in \cref{fig:lean-principal-results}, with margin
markers linking to the corresponding code. It preserves the manuscript's
definitions and assumptions, including realizability and minimum norm,
but differs in the following representations and proof steps:

\begin{figure*}[t]
\centering
\begin{minipage}[t]{0.48\textwidth}
\vspace{0pt}
\begin{lstlisting}[style=leanstatement]
theorem minNorm_isLocalMin_isGlobalMin
    {Tstar : a.Ext → ℝ}
    (hreal : a.Realizable Tstar)
    {θ : a.Param}
    (hmn : a.MinNorm θ)
    (hloc : IsLocalMin (a.loss Tstar) θ) :
    a.IsGlobalMin Tstar θ ∧
      a.loss Tstar θ = 0
\end{lstlisting}
\end{minipage}
\hfill
\begin{minipage}[t]{0.48\textwidth}
\vspace{0pt}
\begin{lstlisting}[style=leanstatement]
theorem not_fullRank_of_critical_of_not_isGlobalMin
    {Tstar : a.Ext → ℝ}
    (hreal : a.Realizable Tstar)
    {θ : a.Param}
    (hcrit : a.Critical Tstar θ)
    (hng : ¬ a.IsGlobalMin Tstar θ) :
    ¬ a.FullTuckerRank θ
\end{lstlisting}
\end{minipage}
\caption{Formal statements of the two main results, with proof
bodies omitted. Left: \cref{thm:no-spurious-local-min}, stating that
a minimum-norm local minimum for a realizable target is global and
has zero loss. Right: \cref{thm:no-spurious}, stating that a critical point that is not
a global minimum cannot have full Tucker rank.}
\label{fig:lean-principal-results}
\end{figure*}

\begin{itemize}
    \item \textbf{Contraction coordinates.}
    The formalization defines the represented tensor directly
    as a sum over internal bond indices of products of node-tensor
    entries, implementing the undirected contraction description
    of \cref{def:ttn-alternate}.
    The correspondence with the directed multilinear-map
    definition of \cref{def:ttn} is explained mathematically in
    \cref{app:multilinear-background}; this correspondence is
    not itself formalized in \textsc{Lean}.
    
    \item \textbf{External modes.}
    \textsc{Lean} combines the external modes at each node into a single
    finite index type. The theorem
    \href{\leanbase/TTN/Landscape/External.lean\#L221-L223}
    {\texttt{representedLit\_extEquiv}}
    proves that contraction with individual external modes agrees
    with this aggregated representation under reindexing. 

    \item \textbf{Criticality.}
    The predicate \href{\leanbase/TTN/Landscape/CriticalityBasic.lean\#L52-L53}
    {\texttt{Critical}} is defined using Mathlib's
    \href{https://github.com/leanprover-community/mathlib4/blob/fabf563a7c95a166b8d7b6efca11c8b4dc9d911f/Mathlib/Analysis/Calculus/FDeriv/Defs.lean\#L121-L122}
    {\texttt{HasFDerivAt}}, requiring the loss to have zero
    Fréchet derivative at the parameter point.
    The theorem
    \href{\leanbase/TTN/Landscape/CriticalityBasic.lean\#L199-L202}
    {\texttt{critical\_iff\_nodewiseCritical}}
    proves that this is equivalent to the vanishing of
    the first-order loss variation with respect to each
    node tensor, the characterization used in the algebraic proofs.

    \item \textbf{The lemma used in \cref{thm:no-spurious}.}
    The manuscript's \cref{lem:full-env-rank} establishes full column
    rank for both contraction factors at any internal edge.
    The \textsc{Lean} proof uses only the leaf-edge case:
    \href{\leanbase/TTN/Landscape/FullRank.lean\#L1153-L1157}
    {\texttt{Hfun\_full\_col\_rank}} proves that the factor obtained by
    contracting the rest of the tree has full column rank.
    This suffices for the induction by leaf removal.
\end{itemize}

Not all statements were chosen to be included in the formalization. For example, we did not include \cref{cor:escape-paths} in the formalization. Its proof relies on results
from real algebraic geometry and geometric invariant theory
(see \cref{app:minimum-norm,app:escape}) whose formalization lies
outside our scope.
We instead focused on the two main results, whose proofs mostly use linear algebra and some graph theory that can be formalized using existing Mathlib infrastructure.

The formalization process did not uncover any major flaws with the proof, but it did help expose and fix one particular edge case. Specifically, the case where \(T^*=0\) presented an issue for the reduction argument of \cref{subsec:full-rank-reduction}, which replaces each bond
dimension \(r_e\) with \(\operatorname{rank}(T^{*(e)})\).
When \(T^*=0\), these ranks are zero, so the compressed network would
violate the requirement of \cref{def:ttn} that bond dimensions be positive.
The argument in \cref{subsec:full-rank-reduction} now includes an explicit argument to handle the \(T^*=0\) case.\footnote{In the \textsc{Lean} code, this case is treated directly in the proof of \cref{thm:no-spurious-local-min}.}

\subsection{Formalization as a Verification Check for Toy Models}
\label{subsec:formalisation-position}

Theoretical machine learning often uses simplified models to isolate
mechanisms that are difficult to study in full neural networks.
Results about these models inform our understanding of optimization,
generalization, and the effects of architectural assumptions.
Several such results have been formally verified, including an
expressiveness theorem for convolutional arithmetic circuits
\citep{bentkampFormalProofExpressiveness2019}, convergence results
for Hopfield networks and Boltzmann machines
\citep{cipollinaFormalizedHopfieldNetworks2025}, and a conversion
of networks with piecewise-affine activations into piecewise-affine
functions \citep{aleksandrovFormalizingPiecewiseAffine2023}.
These projects also provide infrastructure for further work.
For example, \citet{bentkampFormalProofExpressiveness2019}
developed a reusable tensor library and simplified and generalized
the expressiveness proof during formalization.

For research built around a toy model, this reuse is particularly
valuable. Results about expressiveness, optimization, and learning
dynamics can share formal definitions of the architecture and its
parameters, allowing assumptions to be compared directly and proofs
to build on previously checked lemmas. In the present library,
the two main theorems share definitions of the TTN architecture,
contraction, loss, and rank conditions. These definitions remain available for further
analysis of the model.

Furthermore, recent advances in language models and agent workflows support
increasingly autonomous formalization
\citep{zhangLeanMarathonReliableAI2026,milikicLeanFlowCaseStudy2026}.
Authors can specify the target results and delegate the implementation
to agents, while retaining responsibility for checking that the formal
statements express the intended mathematical claims. This was the
division of work in our project. For results whose dependencies are
available in existing libraries, we therefore propose including
machine-checked proofs as supplementary material.

Of course, verification can only establish the mathematical consequences of a
toy model's assumptions; whether those assumptions meaningfully capture phenomena of interest remains an empirical question.

\end{document}